\documentclass[twoside]{article}

\usepackage{aistats2025}
\usepackage{graphicx}
\usepackage{amsfonts}
\usepackage{amsthm}
\usepackage{enumitem}
\usepackage{url}
\newtheoremstyle{customstyle}
  {\bigskipamount}   
  {\bigskipamount}   
  {\normalfont}      
  {}                 
  {\bfseries}        
  {.}                
  {.5em}             
  {}                

\theoremstyle{customstyle}

\newtheorem{dfn}{Definition}
\newtheorem{thm}[dfn]{Theorem}
\newtheorem{lem}[dfn]{Lemma}
\newtheorem{prop}[dfn]{Proposition}

\newtheorem{ex}[dfn]{Example}
\newtheorem{cor}[dfn]{Corollary}

%
%



\bibliographystyle{plain}

\pagestyle{plain}

\begin{document}
%

%

\twocolumn[

\aistatstitle{Benign Overfitting under Learning Rate Conditions for \\$\alpha$ Sub-exponential Inputs}

\aistatsauthor{%
  \bfseries Kota Okudo \\ Graduate School of Science and Technology, \\Keio University \\ okudokota@keio.jp \and
  \bfseries Kei Kobayashi \\ Department of Mathematics, \\Keio University \\ kei@math.keio.ac.jp
} 
\vskip\baselineskip
]

\begin{abstract}
    This paper investigates the phenomenon of benign overfitting in binary classification problems with heavy-tailed input distributions, extending the analysis of maximum margin classifiers to \(\alpha\) sub-exponential distributions (\(\alpha \in (0, 2]\)). This generalizes previous work focused on sub-gaussian inputs. We provide generalization error bounds for linear classifiers trained using gradient descent on unregularized logistic loss in this heavy-tailed setting. Our results show that, under certain conditions on the dimensionality \(p\) and the distance between the centers of the distributions, the misclassification error of the maximum margin classifier asymptotically approaches the noise level, the theoretical optimal value. Moreover, we derive an upper bound on the learning rate \(\beta\) for benign overfitting to occur and show that as the tail heaviness of the input distribution \(\alpha\) increases, the upper bound on the learning rate decreases. These results demonstrate that benign overfitting persists even in settings with heavier-tailed inputs than previously studied, contributing to a deeper understanding of the phenomenon in more realistic data environments.
\end{abstract}
\section{Introduction}\label{section:introduction}
In the field of machine learning, a phenomenon that contradicts the long-standing intuition of statistical learning theory has been garnering attention. This phenomenon is called benign overfitting. According to conventional theory, when a model excessively fits the training data, its generalization performance on unseen data was expected to decline. However, experiments using deep neural networks have revealed that models that perfectly fit noisy training data surprisingly demonstrate good performance on unseen data as well \cite{DBLP:journals/corr/ZhangBHRV16, Belkin_2019}.

This phenomenon suggests a significant gap between machine learning theory and practice, attracting the attention of researchers. To deepen our understanding of benign overfitting, studies have been conducted in simpler statistical settings that are more amenable to theoretical analysis, such as linear regression \cite{hastie2022surprises, bartlett2020benign, muthukumar2020harmless, negrea2020defense, tsigler2023benign, chinot2022robustness, chatterji2022interplay}, sparse linear regression \cite{koehler2021uniform, chatterji2022foolish, li2021minimum, wang2022tight}, logistic regression \cite{montanari2019generalization, JMLR:v22:20-974, liang2022precise, JMLR:v22:20-603, wang2021benign, minsker2021minimax, frei2022benign, zhu2023benignoverfittingdeepneural}, and kernel-based estimators \cite{belkin2018overfitting, mei2022generalization, Liang_2020, liang2020multiple}. These studies are rapidly advancing our understanding of the conditions and mechanisms under which benign overfitting occurs.

In the context of binary classification, a standard mixture model is often used to study benign overfitting \cite{JMLR:v22:20-974, frei2022benign, zhu2023benignoverfittingdeepneural}. This model involves classifying well-separated data with adversarially corrupted labels, assuming the input distribution is sub-gaussian. However, benign overfitting in settings with input distributions heavier than sub-gaussian, that is, settings more robust to input variations, has not been extensively discussed.

\begin{figure}[ht]
    \centering
    \includegraphics[width=0.90\linewidth]{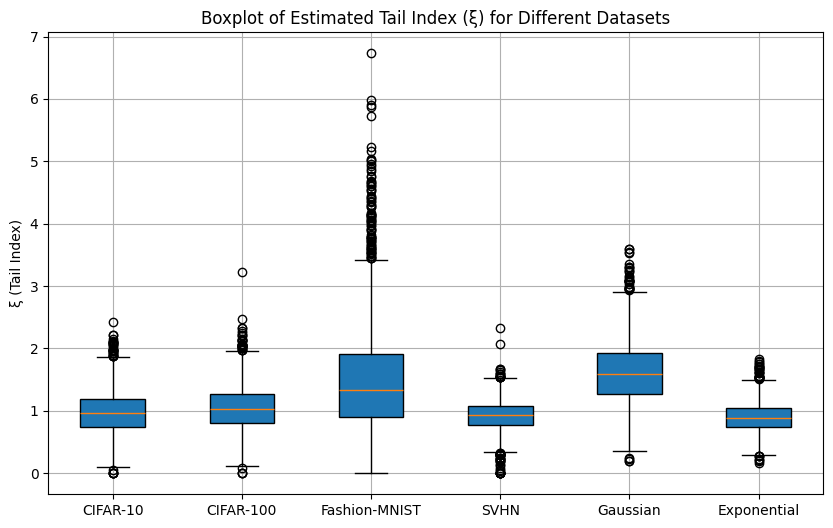}
    \caption{
        Boxplot of estimated tail index $\xi$ for feature vector components extracted from the intermediate layers of a CNN with ReLU activation, trained on various datasets (CIFAR-10 \cite{krizhevsky2009learning}, CIFAR-100 \cite{krizhevsky2009learning}, Fashion-MNIST \cite{xiao2017fashion}, SVHN \cite{netzer2011reading}). The tail index $\xi$ represents the heaviness of the distribution tails, with smaller values indicating heavier tails. The Gaussian and Exponential distributions are included for comparison purposes and were not passed through the CNN. The results indicate that the feature vectors for certain datasets, have heavier-tailed distributions than the Gaussian distribution. Further details are found in Appendix \ref{appendix:intro_xi}.
    }
    \label{fig:tail_index}
\end{figure}

\begin{figure}[ht]
    \centering
    \includegraphics[width=0.90\linewidth]{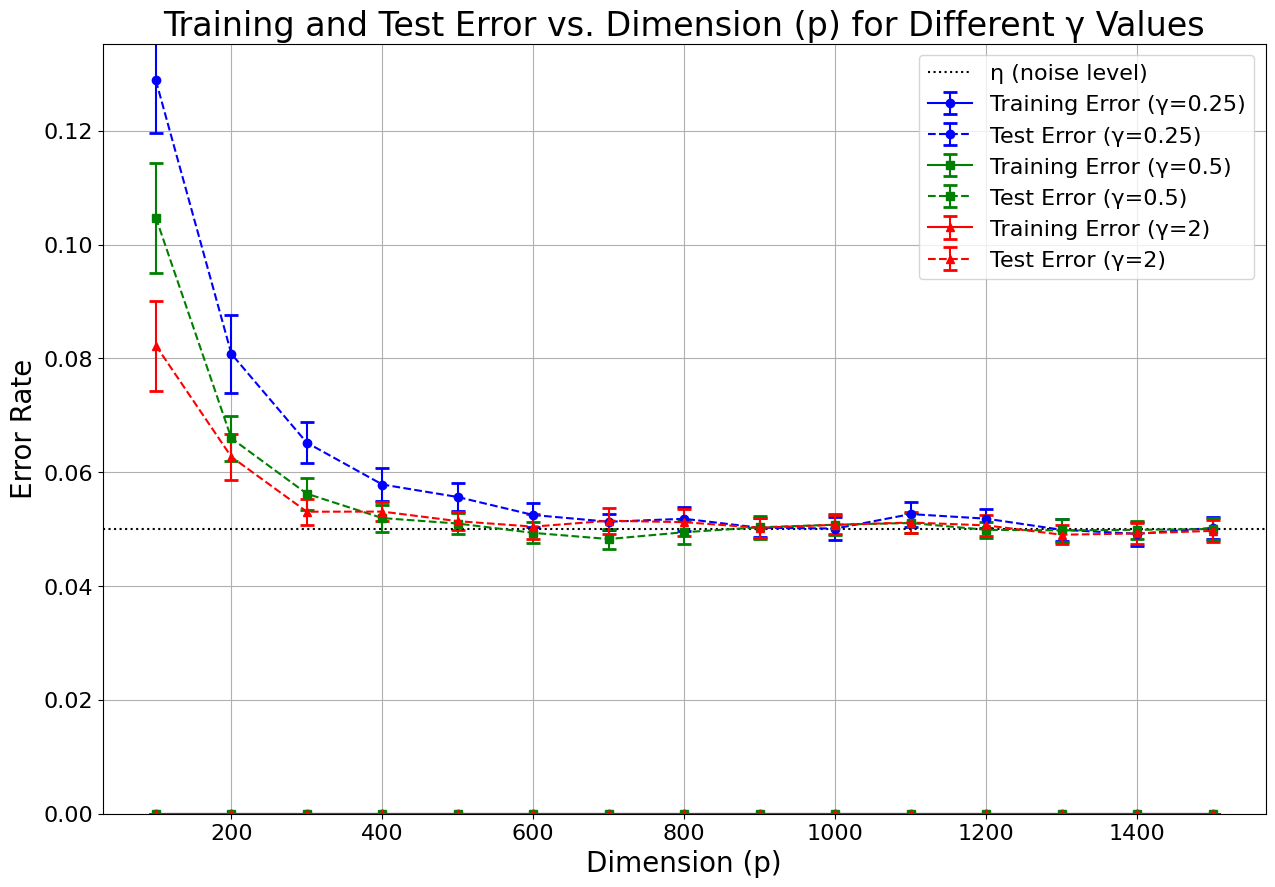}
    \caption{Training and test errors versus dimension $p$ for a maximum margin classifier. $n_\mathrm{train}=200$, $n_\mathrm{test}=1000$, $p$ ranges from 100 to 1500. Data is generated from a heavy-tailed setting using a generalized normal distribution, as detailed in Section \ref{section:heavy_tailed_setting} and Appendix \ref{appendix:intro_benign}. The shape parameters are $\gamma = 0.25, 0.5, 2$, with variance normalized to 1. Noise level $\eta$ is $0.05$ (dotted line). Solid and dashed lines show training and test errors, respectively, with $95$\% confidence intervals as error bars over 50 trials. Training error remains near zero, while test error stabilizes around the noise level as $p$ increases.} 
    \label{fig:gammma205025}
\end{figure}

Our numerical experiments indicate that feature vectors in convolutional neural networks (CNNs) with ReLU activation often exhibit distributions with tails heavier than sub-gaussian. The tail index $\xi$ can be intuitively understood as a parameter that characterizes the heaviness of the tails of a distribution. Specifically, for large values of $t$, the tail probability can be approximated as $\mathbb{P}[|X| > t] \simeq a \cdot \exp(-b \cdot t^\xi)$, where smaller values of $\xi$ correspond to heavier tails in the distribution. Figure \ref{fig:tail_index} indicates that these feature vectors have significantly heavy-tailed components. This finding emphasizes the necessity to extend benign overfitting analysis to accommodate a wider range of distributional settings. A more detailed explanation is provided in Appendix \ref{appendix:intro_xi}.

Moreover, our numerical experiments suggest that benign overfitting can occur even in mixture model settings where the distributions have heavier tails than the normal distribution, as seen in Figure \ref{fig:gammma205025}. Further details are available in Appendix \ref{appendix:intro_benign}. This motivates the exploration of benign overfitting in more general distributional frameworks. 

In this work, we focus on binary classification tasks where the input distribution is $\alpha$ sub-exponential with $\alpha \in (0, 2]$, implying tails heavier than sub-gaussian. We aim to establish generalization error bounds that demonstrate benign overfitting for a linear classifier trained using gradient descent on the unregularized logistic loss. Moreover, we derive an upper bound on the learning rate, a factor previously unexamined in this context, which plays a crucial role in demonstrating benign overfitting.

In this paper, we focus on the setting of Chatterji and Long \cite{JMLR:v22:20-974}, a pioneering work of benign overfitting theory on a simple model. Their results are extended to the heavy-tailed setting, and a more detailed discussion on the learning rate is provided.

\subsection{Related works}
\textbf{Benign overfitting in classification:}
Most related to our work is the theoretical analysis of benign overfitting in classification settings. This line of research aims to understand why classifiers that perfectly fit noisy training data can still generalize well to unseen data. Wang and Thrampoulidis \cite{wang2022binary} studied a setting where the two classes are symmetric mixture of Gaussian (or sub-gaussian) distributions, without label noise. Chatterji and Long \cite{JMLR:v22:20-974} studied overparameterized linear logistic regression on sub-gaussian mixture models with label flipping noise. They showed how gradient descent can train these models to achieve nearly optimal population risk. Cao et al. \cite{NEURIPS2021_46e0eae7} extended this work, tightening the upper bound in the case without label flipping noise and establishing a matching lower bound for overparameterized maximum margin interpolators. Wang et al. \cite{wang2021benign} extended the analysis of maximum margin classifiers to multiclass classification in overparameterized settings. Frei et al. \cite{frei2022benign} proved that a two-layer fully connected neural network exhibits benign overfitting under certain conditions, such as well-separated log-concave distribution and smooth activation function. Cao et al. \cite{cao2022benign} focused on the benign overfitting of two-layer convolutional neural networks.

\section{Preliminaries}
In this section, we introduce the definition of $\alpha$ sub-exponential random variables, the assumptions on the data generation process, and the maximum margin algorithm we consider. 

\subsection{Notation}
In this paper, we use the notation $[n]$ to denote the set $\{1,2,\ldots,n\}$ for a positive integer $n$. For a vector $x$, we use $\|x\|$ to denote its $\ell^2$ norm. For a matrix $A$, we use $\|A\|_{\mathrm{HS}}$ to denote its Hilbert–Schmidt norm and $\|A\|_{\mathrm{op}}$ to denote its operator norm. We use $s_k(A)$ to denote the $k$-th largest singular value of $A$. We use $O(\cdot)$ and $\Theta(\cdot)$ to refer to big-O and big-Theta notation.

\subsection{$\alpha$ sub-exponential random variable}\label{section:alpha_sub_exponential}
$\alpha$ sub-exponential random variables are random variables which have exponential type tails.

\begin{dfn}[$\alpha$ sub-exponential random variable, \cite{sambale2023some}]
    A random variable $X$ is called $\alpha$ sub-exponential if there is a positive constant $c_{\alpha}$ such that it holds
    \begin{align*}
        \mathbb{P}\left[|X-\mathbb{E}[X]|\geq t\right]\leq 2 \exp\left(-\frac{t^\alpha}{c_{\alpha}}\right)
    \end{align*}
    for any $t>0$. This is equivalent to having a finite exponential Orlicz norm:
    \begin{align*}
        \|X\|_{\psi_\alpha}:=\inf\left\{t>0 : \mathbb{E}\left[\exp\left(\frac{|X|^\alpha}{t^\alpha}\right)\right]\leq 2\right\} < \infty.
    \end{align*}
\end{dfn}

If $\alpha = 2$, we call the distribution sub-gaussian. If $\alpha = 1$, we call it sub-exponential. Here is an example of an $\alpha$ sub-exponential distribution:

\begin{ex}[Generalized normal distribution]
     The probability density function of the generalized normal distribution is defined as:
    \begin{align*}
        f(x; x_0, \sigma, \gamma) = \frac{\gamma}{2\sigma \Gamma(1/\gamma)} \exp\left(-\left|\frac{x-x_0}{\sigma}\right|^\gamma\right)
    \end{align*}
    where $\Gamma$ denotes the gamma function, $x_0$ is the location parameter, $\sigma > 0$ is the scale parameter, and $\gamma > 0$ is the shape parameter. Let $X$ be a random variable following the generalized normal distribution with location parameter $x_0 = 0$. Then $\gamma$ is the maximum value of $\alpha$ such that $\|X\|_{\psi_\alpha}$ is finite, and its exponential Orlicz norm is given by
    \begin{align*}
        \|X\|_{\psi_\gamma} = \frac{\sigma}{(1-2^{-\gamma})^{1/\gamma}}.
    \end{align*}
\end{ex}

\subsection{Data generation process}\label{section:heavy_tailed_setting}
We consider a heavy-tailed setting for binary classification, which is a relaxed setting of a standard mixture model setting (Chatterji and Long, 2021 \cite{JMLR:v22:20-974}; Frei et al., 2022 \cite{frei2022benign}). We first define a ``clean" distribution $\tilde{P}$ and then define the target distribution $P$ based on $\tilde{P}$: 

\begin{enumerate}
    \item Sample a ``clean" label $\tilde{y}\in\{\pm 1\}$ uniformly at random, $\tilde{y}\sim\mathrm{Uniform}(\{\pm 1\})$.
    \item Sample $q\sim P_{\mathrm{clust}}$ that satisfies: 
    \begin{itemize}
        \item $P_{\mathrm{clust}}:=P_{\mathrm{clust}}^{(1)}\times\cdots\times P_{\mathrm{clust}}^{(p)}$ is an arbitrary product distribution on $\mathbb{R}^p$ whose marginals are all mean-zero with the exponential Orlicz norm at most $1$, i.e., $\|X\|_{\psi_\alpha}\leq 1$ if $X \sim P_{\rm clust}^{(j)}$.
        \item For some $\kappa>0$, it holds that $$\mathbb{E}_{q\sim P_\mathrm{clust}}[\|q\|^2]\geq \kappa p.$$
    \end{itemize}
    \item For an arbitrary orthogonal matrix $U\in\mathbb{R}^{p\times p}$ and $\mu\in\mathbb{R}^p$, generate $\tilde{x} = Uq + \tilde{y}\mu$.
    \item Let $\tilde{P}$ be the distribution of $(\tilde{x},\tilde{y})$.
    \item For $\eta\in[0,1]$, let $P$ be an arbitrary distribution on $\mathbb{R}^p\times \{\pm1\}$ that satisfies:
    \begin{itemize}
        \item All marginal distributions of $P$ are the same as $\tilde{P}$.
        \item Total variation between $P$ and $\tilde{P}$ is at most $\eta$.
    \end{itemize}
\end{enumerate}

Let $ \mathcal{S}:= \{(x_1, y_1),\cdots,(x_n, y_n)\}$ be samples drawn according to $P$.

The reason we assume the $\alpha$ sub-exponential norm of each component is at most $1$ is only for simplifying the proofs and does not affect the main results of the paper, since rescaling the data does not affect the accuracy of the maximum margin algorithm.

This setting is a modification of Chatterji and Long's framework \cite{JMLR:v22:20-974}, where we have replaced the sub-gaussian norm with an exponential Orlicz norm. Moreover, it can be observed that when $\alpha=2$, this setting encompasses the original framework.

\subsection{Maximum margin algorithm}\label{section:maxmum_margin_algorithm}
We consider a linear classifier that takes the form $\mathrm{sign}(\theta\cdot x)$ trained by gradient descent as
\begin{align*}
    &\theta^{(t+1)} = \theta^{(t)} - \beta \nabla R(\theta^{(t)}) \\&\quad \mathrm{where} \quad R(\theta) :=\sum_{i=1}^n \log(1+\exp(-y_i \theta\cdot x_i)),
\end{align*}
where $\beta$ is the learning rate. In Soudry et al., 2018 \cite{soudry2018implicit}, they prove that if the dataset is linearly separable, in the large-$t$ limit, the normalized parameter of this classifier converges to the hard margin predictor:
\begin{gather*}
    \lim_{t\rightarrow\infty} \frac{\theta^{(t)}}{\|\theta^{(t)}\|} = \frac{w}{\|w\|}, \\
    w := \underset{u\in \mathbb{R}^p}{\mathrm{argmin}} \|u\| \\
    \text{such that } y_i(u \cdot x_i) \geq 1, \text{ for any } i \in [n].
\end{gather*}
They have proved this for a class of loss functions with certain smoothness.

\subsection{Assumptions}\label{section:assumption}
We assume that $\alpha$ and $\kappa$ are fixed constants. Let $X = [y_1 x_1 ,\cdots, y_n x_n]$, where $\{(x_k, y_k)\}_{k=1}^n$ are samples drawn according from the distribution $P$. We will prove the main theorem and corollaries under the following assumptions with a sufficiently large constant $C$ depending only on $\alpha$ and $\kappa$.
\begin{enumerate}[label=(A\arabic*)]
    \item The failure probability satisfies $\delta\in(0,\frac{1}{C})$,
    \item The number of samples satisfies $n\geq C\log\frac{1}{\delta}$,
    \item The dimension satisfies $$p\geq C \max\left(\|\mu\|^2 n, n^2 \left(\log\frac{n}{\delta}\right)^\frac{2}{\alpha}\right),$$
    \item The norm of the mean satisfies $\|\mu\|\geq C\left(\log\frac{n}{\delta}\right)^\frac{1}{\alpha}$,
    \item The learning rate satisfies \begin{align*}
        \beta \leq &\min \Bigg(8 (s_1(X))^{-2},\\& \frac{1}{c_2}\left(p + 2 n \left( \|\mu\|^2 + \sqrt{p} \left(\log\frac{n}{\delta}\right)^\frac{1}{\alpha}\right)\right)^{-1}\Bigg),
    \end{align*}
    where $c_2 = 2\max\left(\frac{8}{\kappa}, \frac{8}{\alpha}\Gamma\left(\frac{2}{\alpha}\right) + \kappa + 2 \right).$
\end{enumerate}

When $\alpha=2$, assumptions (A1)-(A4) correspond to those in Chatterji and Long \cite{JMLR:v22:20-974}.

Due to assumption (A1), if we require a lower failure probability, $C$ must be set large, which in turn requires tighter lower bounds for $n$, $p$ and $\|\mu\|$ in assumptions (A2)-(A4). Moreover, as the tail heaviness of the distribution grows (i.e., as $\alpha$ decreases), the lower bounds for $n$, $p$, and $\|\mu\|$ become tighter, and the upper bound on the learning rate $\beta$ becomes more restrictive.

\begin{ex}[Generalized noisy rare-weak model]
For any $\alpha\in(0,2]$, the model described above includes a special case called the generalized noisy rare-weak model, which is defined as follows:
\begin{itemize}
\item For any $j\in[p]$, $P_{\mathrm{clust}}^{(j)}$ is a generalized normal distribution with location
parameter $x_0 = 0$, shape parameter $\gamma = \alpha$, and scale parameter $\sigma$.
\item The mean vector $\mu\in\mathbb{R}^p$ has only $s$ non-zero components, all of which are equal to $\lambda > 0$, where $s$ and $\lambda$ are set appropriately to satisfy assumptions (A1)-(A4).
\end{itemize}
\end{ex}
If we require the exponential Orlicz norm to be less than 1, we need to adjust the scale parameter $\sigma$ of $P_{\text{clust}}^{(j)}$. Donoho and Jin \cite{donoho2008higher} studied this model where $\eta=0, \gamma = 1$ and $\sigma = 1$.

\section{Main results}

\subsection{Generalization bound}
We derive a generalization bound for the maximum
margin classifier in a relaxed standard mixture model.
\begin{thm}\label{mainthm}
    For any $\alpha \in (0,2]$ and $\kappa\in(0,1)$, there exists a constant $c>0$ such that, under assumptions (A1)-(A5), for all large enough $C$, with probability at least $1-\delta$, the maximum margin classifier $w$ satisfies
    \begin{equation*}
        \underset{(x,y)\sim P}{\mathbb{P}}\left[\mathrm{sign}(w\cdot x)\neq y\right] \leq \eta + \exp\left(-c\frac{\|\mu\|^{2\alpha}}{p^{\alpha/2}}\right).
    \end{equation*}
\end{thm}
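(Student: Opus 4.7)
My strategy is to adapt the sub-gaussian argument of Chatterji and Long \cite{JMLR:v22:20-974} to the $\alpha$ sub-exponential regime, replacing the Hanson-Wright and Bernstein ingredients by their $\alpha$ sub-exponential analogues throughout.

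First, I would reduce to the clean distribution. Since the total variation distance between $P$ and $\tilde P$ is at most $\eta$, the misclassification probabilities under the two laws differ by at most $\eta$, so it suffices to bound the error under $\tilde P$. Writing $\tilde x=Uq+\tilde y\mu$ yields $\tilde y(w\cdot\tilde x)=\mu\cdot w+\tilde y\,q\cdot(U^\top w)$, so (provided $\mu\cdot w>0$, which Step 3 below confirms) the clean error is at most $\mathbb{P}[|q\cdot(U^\top w)|>\mu\cdot w]$. Since the coordinates of $q$ are independent with $\|q_j\|_{\psi_\alpha}\le 1$, an $\alpha$ sub-exponential Bernstein-type inequality for linear forms (as in \cite{sambale2023some}) applied to $q\cdot v$ with $v=U^\top w$ gives
$$\mathbb{P}[|q\cdot v|\ge t]\le 2\exp\!\left(-c\min\!\left(\frac{t^2}{\|v\|_2^2},\,\frac{t^\alpha}{\|v\|_\infty^\alpha}\right)\right),$$
and setting $t=\mu\cdot w$ reduces the problem to (a) a lower bound on $\mu\cdot w$ and (b) upper bounds on $\|w\|_2$ and $\|U^\top w\|_\infty$. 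The learning-rate cap in (A5) together with the implicit-bias theorem of \cite{soudry2018implicit} allows me to replace the gradient-descent iterate by the KKT max-margin solution $w=\sum_i\hat\alpha_i y_ix_i$ with $\hat\alpha_i\ge 0$.

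Second, to establish (a) and (b) I would use the dual representation $\hat\alpha=(X^\top X)^{-1}\mathbf 1$, valid once every training point is a support vector. The Gram matrix decomposes as $X^\top X=D+M+E$, where $D=\mathrm{diag}(\|x_i\|^2)$ is $pI$ up to fluctuations, $M$ captures the rank-one signal contribution $y_iy_j\tilde y_i\tilde y_j\|\mu\|^2$, and $E$ collects the noise terms $q_i\cdot q_j$ and cross-terms $\mu\cdot Uq_i$. Concentrating $\|x_i\|^2$ via the fact that $q_{ij}^2$ is $(\alpha/2)$ sub-exponential, and bounding $q_i\cdot q_j$ by an $\alpha$ sub-exponential Hanson-Wright / decoupling-type argument, shows that $\|E\|_{\mathrm{op}}\ll p$ under the probability and dimension requirements (A1)-(A3); the exponent $2/\alpha$ in (A3) is exactly the cost of the heavier tails. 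Inversion then gives $\hat\alpha\approx\mathbf 1/p$ componentwise and nonnegative, which confirms the support-vector assumption, and yields $\|w\|_2^2$ of order $n/p$ together with a positive lower bound on $\mu\cdot w$ of order $\|\mu\|^2$ after absorbing the at-most-$\eta$ label flips (here (A4) is needed so the signal dominates the cross-terms $\mu\cdot Uq_i$). Substituting these into the $t^\alpha/\|v\|_\infty^\alpha$ branch of the Bernstein bound above produces exactly $\exp(-c\|\mu\|^{2\alpha}/p^{\alpha/2})$.

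\textbf{Main obstacle.} The hardest and most novel step is the Gram-matrix analysis in the $\alpha$ sub-exponential regime: Hanson-Wright is not available off the shelf for $\alpha<2$, and the naive use of Rosenthal- or Latala-type moment inequalities is what forces the $(\log(n/\delta))^{2/\alpha}$ factor visible in (A3). A secondary obstacle is the passage from finite-iterate gradient descent to the max-margin limit: the cap $\beta\le 8(s_1(X))^{-2}$ in (A5) controls the descent trajectory, but the random quantity $s_1(X)$ itself has to be bounded first in the $\alpha$ sub-exponential regime before the implicit-bias convergence theorem can be invoked. Once the Gram-matrix estimates and the support-vector structure are in hand, the TV reduction, the margin decomposition, and the final Bernstein application are largely routine.
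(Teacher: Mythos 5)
You take a genuinely different route to the key lower bound on $\mu\cdot w/\|w\|$. The paper follows the Chatterji--Long trajectory argument: Lemma \ref{loss_ratio_sigmoid} controls the loss ratios along the gradient-descent path, and Lemma \ref{wmu} accumulates the per-step gain in $\mu\cdot\theta^{(t)}$ against the crude triangle-inequality bound on $\|\theta^{(t)}\|$ to obtain $\mu\cdot w \geq \|w\|\|\mu\|^2/(c_4\sqrt p)$, never touching the KKT dual or the support-vector structure. You instead invert the Gram matrix, $\hat\alpha=(X^\top X)^{-1}\mathbf 1$, verify nonnegativity to confirm all points are support vectors (in the spirit of Cao et al.\ and the Muthukumar--Hsu line), and read off $\mu\cdot w$ and $\|w\|$ from the primal $w=X\hat\alpha$. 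Both routes are valid; the structural route, carried out correctly, would in fact give the stronger exponent $n^{\alpha/2}\|\mu\|^{2\alpha}/p^{\alpha/2}$, because the paper loses a $\sqrt n$ factor when it replaces $\|\nabla R(\theta^{(m)})\|$ by $\sum_k\|z_k\|/(1+\exp(\theta^{(m)}\cdot z_k))$ in Lemma \ref{wmu}. The trade-off is that your route needs the full operator-norm concentration of $X^\top X - pI$ inside the main theorem, whereas the paper only needs the pointwise bounds of Lemma \ref{concentration_lemma} there and reserves the Gram-matrix control (Proposition \ref{bounds_singular_value_2}) for the learning-rate corollaries; the ingredient is available in the appendix, and the $(\log(n/\delta))^{2/\alpha}$ cost you flag is the right one.

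There is, however, an arithmetic slip you should fix before the plan closes. With $\hat\alpha\approx\mathbf 1/p$, the dual representation gives
\[
\mu\cdot w=\sum_i\hat\alpha_i(\mu\cdot z_i)\approx\frac{(|\mathcal C|-|\mathcal N|)\|\mu\|^2}{p}=\Theta\!\left(\frac{n\|\mu\|^2}{p}\right),
\]
not $\Theta(\|\mu\|^2)$ as you state; under (A3) this is a small constant. Taking your stated orders at face value ($\mu\cdot w\sim\|\mu\|^2$, $\|w\|\sim\sqrt{n/p}$) would give $\mu\cdot w/\|w\|\sim\sqrt p\,\|\mu\|^2/\sqrt n$ and hence $\exp(-c\,p^{\alpha/2}\|\mu\|^{2\alpha}/n^{\alpha/2})$, which is not the target exponent, whereas the corrected $\mu\cdot w\sim n\|\mu\|^2/p$ gives $\mu\cdot w/\|w\|\sim\sqrt n\,\|\mu\|^2/\sqrt p$ and the announced bound (with the extra $n^{\alpha/2}$). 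Once this is corrected you also do not need the $\ell^\infty$ branch of your two-regime inequality or any bound on $\|U^\top w\|_\infty$: the plain $\ell^2$ form $\exp(-c\,(\mu\cdot w)^\alpha/\|w\|^\alpha)$, which is exactly what the paper uses through Lemma \ref{first_lemma} and Proposition \ref{sum_concentration_inequality}, already produces the exponent.
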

The proof of this theorem is provided in Section \ref{section:sketch_of_proof}. This theorem reveals the relationship between the number of dimensions $p$ and $\|\mu\|$ in determining the success of learning. Specifically, when $\|\mu\|$ increases as $\|\mu\|=\Theta(p^\tau)$ for any $\tau\in(1/4,1/2]$, the misclassification error of the maximum margin classifier asymptotically approaches the noise level $\eta$.  The rate of increase in $\|\mu\|$ for benign overfitting is same as that proved by Chatterji and Long \cite{JMLR:v22:20-974} when $\alpha=2$. Therefore, our result shows that in high-dimensional feature spaces, if the signal is sufficiently strong, learning can be achieved while minimizing the impact of noise even for heavier tails ($\alpha<2$).

Here are the implications of Theorem \ref{mainthm} in the noisy rare-weak model where the mean vector $\mu$ has only $s$ non-zero elements and all non-zero elements equal $\gamma$.

\begin{cor}\label{cor:rare-weak}
    There exists a constant $c > 0$ such that, under assumptions (A1)-(A5), in the generalized noisy rare-weak model, for any $\lambda \geq 0$ and all large enough $C$, with probability $1 - \delta$, a maximum margin classifier $w$ satisfies
    \begin{equation*}
        \underset{(x,y)\sim P}{\mathbb{P}}\left[\mathrm{sign}(w\cdot x)\neq y\right] \leq \eta + \exp\left(-c\frac{(\lambda^2 s)^{\alpha}}{p^{\alpha/2}}\right).
    \end{equation*}
\end{cor}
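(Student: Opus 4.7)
The plan is to derive Corollary \ref{cor:rare-weak} as a direct specialization of Theorem \ref{mainthm} to the generalized noisy rare-weak model. The only real work is to confirm that the model genuinely fits into the data generation framework of Section \ref{section:heavy_tailed_setting}, and then to translate the quantity $\|\mu\|^{2\alpha}/p^{\alpha/2}$ appearing in the theorem into the parameters $\lambda$ and $s$ of the rare-weak model.

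First I would verify the structural hypotheses on $P_{\mathrm{clust}}$. Since $P_{\mathrm{clust}}^{(j)}$ is a mean-zero generalized normal distribution with shape $\gamma = \alpha$ and scale $\sigma$, the example in Section \ref{section:alpha_sub_exponential} gives the exact Orlicz norm $\|X\|_{\psi_\alpha} = \sigma/(1-2^{-\alpha})^{1/\alpha}$, so choosing $\sigma \leq (1-2^{-\alpha})^{1/\alpha}$ ensures the Orlicz norm bound $\|X\|_{\psi_\alpha} \leq 1$ required in Section \ref{section:heavy_tailed_setting}. For the second-moment lower bound, the variance of a generalized normal with parameters $(0,\sigma,\alpha)$ is $\sigma^2 \Gamma(3/\alpha)/\Gamma(1/\alpha)$, so by independence of the coordinates, $\mathbb{E}_{q\sim P_{\mathrm{clust}}}[\|q\|^2] = p \cdot \sigma^2 \Gamma(3/\alpha)/\Gamma(1/\alpha)$, which yields the required inequality $\mathbb{E}[\|q\|^2] \geq \kappa p$ for the fixed constant $\kappa := \sigma^2 \Gamma(3/\alpha)/\Gamma(1/\alpha)$. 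Taking $U = I$ is a valid choice of orthogonal matrix in step 3 of the data-generation process, so the rare-weak model is indeed a special case of the framework.

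Next I would compute $\|\mu\|^2$. Since $\mu$ has exactly $s$ non-zero components each equal to $\lambda$, we have $\|\mu\|^2 = \lambda^2 s$, and therefore $\|\mu\|^{2\alpha} = (\lambda^2 s)^\alpha$. Substituting this into the conclusion of Theorem \ref{mainthm}, and using the fact that assumptions (A1)-(A5) are granted in the hypothesis of the corollary (so in particular no additional condition on $\lambda$ or $s$ needs to be checked beyond what is already assumed), immediately yields
\begin{equation*}
    \underset{(x,y)\sim P}{\mathbb{P}}\left[\mathrm{sign}(w\cdot x)\neq y\right] \leq \eta + \exp\left(-c \frac{(\lambda^2 s)^\alpha}{p^{\alpha/2}}\right)
\end{equation*}
with the same constant $c$ as in Theorem \ref{mainthm}.

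There is no real obstacle here: the entire content of the corollary is the identification of $\|\mu\|^2$ with $\lambda^2 s$ together with a sanity check that the rare-weak model is covered by the heavy-tailed framework. The only point demanding a sentence of care is the choice of scale $\sigma$ to enforce the Orlicz-norm normalization $\|X\|_{\psi_\alpha} \leq 1$, which is harmless because rescaling the coordinates of the data does not change the output of the maximum margin algorithm (as already noted in Section \ref{section:heavy_tailed_setting}).
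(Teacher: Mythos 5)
Your proof is correct and matches the route the paper implicitly takes (the paper states Corollary~\ref{cor:rare-weak} without explicit proof, as an immediate specialization of Theorem~\ref{mainthm}): the entire content is the identification $\|\mu\|^2 = \lambda^2 s$ plus a check that the generalized noisy rare-weak model falls within the heavy-tailed data-generation framework. One small clarification worth adding: you define $\kappa := \sigma^2\Gamma(3/\alpha)/\Gamma(1/\alpha)$, but Theorem~\ref{mainthm} requires $\kappa\in(0,1)$; since the $\kappa$-condition in the data-generation process is a lower bound $\mathbb{E}[\|q\|^2]\geq\kappa p$, having a larger true value only strengthens it, so one can simply invoke the theorem with any $\kappa\in(0,1)$ below the computed variance ratio, and your argument goes through unchanged.
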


We will consider $\lambda$ as fixed. Jin \cite{jin2009impossibility} demonstrated that for the noiseless rare-weak model, learning is impossible when $s = O(\sqrt{p})$ under the Gaussian assumption. Considering the fact that the Gaussian distribution is an $\alpha$ sub-gaussian for every $\alpha$ in $(0,2]$, their counterexample can show that our upper bound has optimality in a sense.
Strictly speaking, to fit Jin's model to our model, we need to adjust the scale parameter $\sigma$ of $P_{\text{clust}}^{(j)}$ to make the exponential Orlicz norm less than 1. However, this adjustment does not affect the accuracy of the maximum margin classifier.

\subsection{Learning rate}\label{section:learning_rate}
We perform a detailed analysis of sufficient conditions for the learning rate when benign overfitting occurs. To concretely calculate the assumption of the learning rate, a bound of the largest singular value of $X$ is used.

\begin{prop}[A bound of the singular values of $X$]\label{bounds_singular_value_2}
For any $\delta \geq 0$, with probability at least $1-\delta$, there are constants $c_5$, $c_6$, $c_7$, $c_8$ depending only on 
$\alpha$ such that
\begin{align*}
        &s_1(X) \\
        &\leq  \sqrt{p} \Bigg(c_5 +  \frac{c_6 \sqrt{n}}{p} \sum_{i=1}^p|\mu_i| + \frac{2n\|\mu\|^2}{p}  \\
        & \quad+\frac{c_7+c_8\max_{i}|\mu_i|
        \sqrt{n}}{p}\left(n \log 9 + \log\frac{4}{\delta}\right)^\frac{2}{\alpha}\Bigg).
    \end{align*}
\end{prop}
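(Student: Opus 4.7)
The plan is to bound $s_1(X)$ by reducing to a pointwise concentration estimate on an $\epsilon$-net of $S^{n-1}$ and then applying a Bernstein-type inequality for sums of $\alpha$ sub-exponential random variables. A preliminary simplification is useful: since $X = X'\,\mathrm{diag}(y_i)$ with $X' := [x_1, \ldots, x_n]$ and the diagonal matrix orthogonal, $s_1(X) = s_1(X')$. Because the $x$-marginals of $P$ and $\tilde{P}$ agree and the samples are i.i.d., the joint distribution of $(x_1,\ldots,x_n)$ is the same under both measures, so we may work in the clean model $x_i = U q_i + \tilde{y}_i \mu$ with $q_i \sim P_{\mathrm{clust}}$ independent of $\tilde{y}_i \sim \mathrm{Uniform}(\{\pm 1\})$. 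By the orthogonal invariance of singular values we may further take $U = I$ (absorbing $U$ into $\mu$; the components $\mu_i$ in the statement are understood in this rotated basis), so that $X' = Q + \mu \tilde{y}^\top$, where $Q := [q_1, \ldots, q_n]$ has independent centered entries of $\psi_\alpha$-norm at most $1$.

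Fix a $\tfrac{1}{4}$-net $\mathcal{N} \subset S^{n-1}$ of cardinality at most $9^n$, giving $s_1(X') \leq 2\max_{v \in \mathcal{N}} \|X'v\|$. For each fixed $v$, expand
\begin{equation*}
\|X' v\|^2 = \|Q v\|^2 + 2(\tilde{y}^\top v)\,(Qv)^\top \mu + (\tilde{y}^\top v)^2 \|\mu\|^2,
\end{equation*}
and bound the three pieces separately. The deterministic piece is at most $n\|\mu\|^2$. For $\|Qv\|^2$, each coordinate $(Qv)_j = \sum_i v_i q_{j,i}$ is a linear combination of independent centered $\alpha$ sub-exponentials of norm at most $1$, so by the $\alpha$ sub-exponential Bernstein inequality (\cite{sambale2023some}) $(Qv)_j$ itself has constant Orlicz norm. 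The sum of $p$ squares is then $\alpha/2$ sub-exponential and concentrates around its mean $O(p)$ with deviation at most $C(\log(1/\delta_0))^{2/\alpha}$ at failure probability $\delta_0$. The cross term is treated by viewing $(Qv)^\top \mu = \sum_{i,j} \mu_j v_i q_{j,i}$ as a single weighted sub-exponential sum whose coefficient vector has $\ell^2$-norm $\|\mu\|$ and $\ell^\infty$-norm $\|\mu\|_\infty \|v\|_\infty$; the two-regime Bernstein bound then yields an $O(\|\mu\| + \|\mu\|_\infty (\log(1/\delta_0))^{1/\alpha})$ estimate, which multiplied by $|\tilde{y}^\top v| \leq \sqrt{n}$ gives the random part of the cross-term contribution. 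The $\|\mu\|_1$ term in the statement should emerge from the (loose but convenient) mean estimate $\mathbb{E}|(Qv)^\top \mu| \leq \|\mu\|_1 \max_j \mathbb{E}|(Qv)_j|$, which is acceptable given the ambient $\sqrt{p}$ scale imposed by assumption (A3).

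A union bound over $\mathcal{N}$ replaces $\log(1/\delta_0)$ by $n\log 9 + \log(4/\delta)$ uniformly, and combining the three pieces via $\sqrt{a+b+c} \leq \sqrt{a}+\sqrt{b}+\sqrt{c}$ together with the net's doubling factor recovers the stated expression with constants $c_5,\ldots,c_8$ depending only on $\alpha$. I expect the main obstacle to be the careful bookkeeping of the two-regime $\alpha$ sub-exponential Bernstein inequality across the three pieces: the ``Gaussian'' regime, controlled by the $\ell^2$ coefficient norm, must generate the terms of order $\sqrt{p}$, while the ``$\alpha$ sub-exponential'' regime, controlled by the $\ell^\infty$ coefficient norm, must generate the $(n\log 9 + \log(4/\delta))^{2/\alpha}$ term; matching the precise $\ell^1$, $\ell^2$, and $\ell^\infty$ dependence on $\mu$ in the statement requires choosing the coefficient-norm split at each step with some care, but the resulting algebra should be routine once the concentration building blocks are in place.
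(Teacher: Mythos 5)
Your high-level strategy — $\epsilon$-net over $S^{n-1}$ plus $\alpha$ sub-exponential concentration — is the same as the paper's, and the two preliminary reductions you propose are both valid and in fact cleaner than what the paper does: $s_1(X)=s_1(X')$ because $X = X'\,\mathrm{diag}(y)$ with an orthogonal diagonal matrix, and since only the $x$-marginal of $P$ enters, one may indeed pass to the clean model $X' = Q + \mu\tilde y^\top$. The paper instead works directly with the rows $r_i\odot y + \mu_i\,(y\odot\tilde y)$ and as a result has to deal with an asymmetric bound on $\mathbb{E}[f_i]$. Your decomposition is also genuinely different: you split $\|X'v\|^2$ into the three global pieces $\|Qv\|^2$, $2(\tilde y^\top v)(Qv)^\top\mu$, and $(\tilde y^\top v)^2\|\mu\|^2$ and concentrate the first two separately, whereas the paper packages the row-wise quadratic plus cross contributions into a single $\psi_{\alpha/2}$ variable $f_i$ and applies Proposition~\ref{sum_concentration_inequality} once to $\frac{1}{p}\sum_i f_i$, then invokes Lemma~\ref{Approximate_isometries} on $\|\tfrac{1}{p}X^\top X - I\|_{\mathrm{op}}$ rather than directly bounding $\max_v\|X'v\|$.

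Two points in your plan deserve correction. First, the explanation for the $\sum_i|\mu_i|$ term is off: $\mathbb{E}\big[(Qv)^\top\mu\big]=0$ (the $q_{j,i}$ are centered), so there is no ``mean estimate'' for the cross term to bound by $\|\mu\|_1$. In the paper this term arises instead from a deliberately conservative $L^p$-moment bound on $\mathbb{E}[f_i]$ — specifically from bounding the expectation of $2\mu_i((r_i\odot y)\cdot u)((y\odot\tilde y)\cdot u)$ by $\pm K_2|\mu_i|\sqrt{n}$ rather than by its true value $0$ — so with your clean-model reduction this term is actually avoidable and the resulting bound would still imply the stated one. Second, you invoke a two-regime $\psi_\alpha$ Bernstein inequality (with both an $\ell^2$ Gaussian regime and an $\ell^\infty$ heavy-tail regime), but Proposition~\ref{sum_concentration_inequality} as stated in the paper is one-regime with only the $\ell^2$ coefficient norm; for the quadratic piece $\|Qv\|^2$ the tool supplied in the paper's toolbox is the Extended Hanson--Wright inequality (Theorem~\ref{Extended_Hanson-Wright_inequality}), whose two-regime form you would actually need. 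You should also not appeal to assumption (A3) in this proof: the proposition is stated for arbitrary $\delta\geq 0$ without assuming (A1)--(A5), since its purpose is precisely to turn (A5) into the computable condition (A6). None of these points breaks the overall plan, but the detailed $\ell^1/\ell^2/\ell^\infty$ bookkeeping you call ``routine'' is where the real content lies, and you should be explicit that the cross term uses a linear-form bound while the quadratic term uses Hanson--Wright.
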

The proof of Proposition \ref{bounds_singular_value_2} is in Appendix \ref{proof_singlura_value}.
According to Proposition \ref{bounds_singular_value_2}, a sufficient condition for assumption (A5) can be expressed as assumption (A6):
\begin{enumerate}[label=(A\arabic*)]
    \setcounter{enumi}{5}
    \item The learning rate satisfies:
    \begin{align*}
        &\beta\leq \\& \min \Bigg(\frac{8}{p} \Bigg(c_5 +  \frac{c_6\sqrt{n}}{p} \sum_{i=1}^p|\mu_i| + \frac{2n\|\mu\|^2}{p}  \\
        & \quad+\frac{c_7+c_8\max_{i}|\mu_i|
        \sqrt{n}}{p}\left(n \log 9 + \log\frac{4}{\delta}\right)^\frac{2}{\alpha}\Bigg)^{-2}, \\
        &\quad \frac{1}{c_2 p}\left(1 + \frac{2n}{p} \left( \|\mu\|^2 + \sqrt{p} \left(\log\frac{n}{\delta}\right)^\frac{1}{\alpha}\right)\right)^{-1} \Bigg).
    \end{align*}
\end{enumerate}
By using assumption (A6) instead of (A5), we obtain Corollary \ref{A1A6}. 
\begin{cor}\label{A1A6}
    Under assumptions (A1)-(A4) and (A6) for all large enough $C$, with probability at least $1-2\delta$, the same generalization error bound as in Theorem \ref{mainthm} holds.
\end{cor}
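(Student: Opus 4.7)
The plan is to combine Proposition~\ref{bounds_singular_value_2} with Theorem~\ref{mainthm} via a union bound. The key observation is that assumption (A6) is engineered so that, on the high-probability event where the singular value bound of Proposition~\ref{bounds_singular_value_2} holds, (A6) implies (A5). Thus the corollary is essentially a deterministic implication plus one union bound, with no new probabilistic estimates needed.

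First, I would let $E_1$ denote the event on which the conclusion of Proposition~\ref{bounds_singular_value_2} holds; by that proposition, $\mathbb{P}[E_1] \geq 1-\delta$. On $E_1$, the quantity $s_1(X)$ is bounded above by the expression appearing inside the first term of the minimum in (A6). Squaring and inverting that bound (with the factor of $8$) shows that $8\,(s_1(X))^{-2}$ is at least the first term inside the minimum in (A6). Next, I would verify that the second term of (A5) and the second term of (A6) coincide: pulling a factor of $p$ outside the parenthesis in (A5) turns $(p + 2n(\|\mu\|^2 + \sqrt{p}(\log(n/\delta))^{1/\alpha}))^{-1}$ into exactly $p^{-1}(1 + (2n/p)(\|\mu\|^2 + \sqrt{p}(\log(n/\delta))^{1/\alpha}))^{-1}$. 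Consequently, on $E_1$, any $\beta$ satisfying (A6) also satisfies (A5).

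Next, I would apply Theorem~\ref{mainthm}: on an event $E_2$ with $\mathbb{P}[E_2]\ge 1-\delta$, under assumptions (A1)--(A5) (with $C$ chosen sufficiently large), the maximum margin classifier $w$ obeys the claimed generalization bound. Combining with the previous step, on $E_1 \cap E_2$ the assumptions (A1)--(A4) and (A6) imply (A1)--(A5), which in turn imply the generalization bound. By the union bound, $\mathbb{P}[E_1 \cap E_2] \geq 1 - 2\delta$, giving the stated probability.

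There is essentially no hard step: the only nontrivial check is the algebraic identification of the second terms in (A5) and (A6) and the correct direction of the inequality for the first term after inverting a squared upper bound on $s_1(X)$. The probabilistic content is entirely carried by Proposition~\ref{bounds_singular_value_2} and Theorem~\ref{mainthm}, which are assumed as given.
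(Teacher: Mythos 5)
Your proposal is correct and follows the same route the paper takes: the paper explicitly presents Corollary~\ref{A1A6} as an immediate consequence of Proposition~\ref{bounds_singular_value_2} (which makes (A6) imply (A5) with probability at least $1-\delta$) combined with Theorem~\ref{mainthm} via a union bound, which is exactly your argument. The algebraic checks you flag — the identity $(p + 2n(\cdots))^{-1} = p^{-1}(1 + \tfrac{2n}{p}(\cdots))^{-1}$ and the inequality $8(s_1(X))^{-2} \geq \tfrac{8}{p}(\text{RHS of Prop.~5})^{-2}$ on the event $E_1$ — are both correct.
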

Moreover, by Corollary \ref{A1A6}, we obtain Corollary \ref{order_of_beta_by_p} and \ref{order_of_beta_by_n}. The proofs of Corollaries \ref{order_of_beta_by_p} and \ref{order_of_beta_by_n} are in Appendix \ref{Proof_of_cor}.
\begin{cor}\label{order_of_beta_by_p}
    Under assumptions (A1)-(A4) for all large enough $C$, if $\beta$ satisfies 
    \begin{align*}
        \beta \leq c_{9} p^{-1}
    \end{align*}
    where $c_{9}$ is a constant depending on $\alpha, \kappa, \delta$, and $n$,
    with probability at least $1-2\delta$, the same generalization error bound as in Theorem \ref{mainthm} holds.
\end{cor}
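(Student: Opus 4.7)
The plan is to reduce everything to Corollary~\ref{A1A6} by exhibiting a constant $c_9=c_9(\alpha,\kappa,\delta,n)$ such that the hypothesis $\beta\le c_9/p$ forces assumption (A6). Once (A6) is verified, Corollary~\ref{A1A6} immediately yields the Theorem~\ref{mainthm} bound with probability at least $1-2\delta$, which is exactly the claim of Corollary~\ref{order_of_beta_by_p}. So the entire task is to lower-bound each of the two terms in the minimum defining (A6) by a constant times $p^{-1}$, where the constant may depend on $\alpha,\kappa,\delta,n$.

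For the second expression in the min (call it Part B), I will apply (A3) in the form $p\ge C\|\mu\|^2 n$ and $p\ge Cn^2(\log(n/\delta))^{2/\alpha}$ to obtain $n\|\mu\|^2/p\le 1/C$ and $n(\log(n/\delta))^{1/\alpha}/\sqrt p\le 1/\sqrt C$. These control both contributions inside the parenthesised factor, leaving it bounded by an absolute constant for $C$ sufficiently large. Hence Part B $\ge c'/p$ for a constant $c'=c'(\alpha,\kappa)$.

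For the first expression in the min (Part A), I need to bound the quantity inside the square by some $K=K(\alpha,\kappa,\delta,n)$, after which Part A $\ge 8/(pK^2)$. I treat the four summands in turn. The first, $c_5$, is a constant. For the second, Cauchy--Schwarz gives $\sum_{i=1}^p|\mu_i|\le\sqrt{p}\,\|\mu\|$, so $\frac{c_6\sqrt n}{p}\sum_i|\mu_i|\le c_6\sqrt n\,\|\mu\|/\sqrt p$, which (A3) forces to be $O(1)$. The third summand $2n\|\mu\|^2/p$ is $\le 2/C$ directly by (A3). For the fourth, use $\max_i|\mu_i|\le\|\mu\|$, then combine (A3) and (A4) to see that $(c_7+c_8\|\mu\|\sqrt n)/p$ is bounded; the remaining factor $(n\log 9+\log(4/\delta))^{2/\alpha}$ depends only on $n,\delta,\alpha$. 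All four pieces are therefore finite, depending only on $\alpha,\kappa,\delta,n$, so such a $K$ exists and Part A $\ge c''/p$. Setting $c_9:=\min(c',c'')$ finishes the reduction and invokes Corollary~\ref{A1A6}.

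The main obstacle I anticipate is the last summand in the Part A calculation: the factor $(n\log 9+\log(4/\delta))^{2/\alpha}$ grows like $n^{2/\alpha}$, which for $\alpha<1$ exceeds $n^2$ and is not absorbed by the quadratic-in-$n$ dimension condition in (A3). The point of allowing $c_9$ to depend on $n$ in the statement is precisely to accommodate this factor; one simply absorbs the entire $n$-dependent bound on $K$ into $c_9$, and no strengthening of (A3) is required. All other steps are routine applications of (A3)--(A4) and Cauchy--Schwarz.
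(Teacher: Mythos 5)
Your proposal is correct and follows essentially the same route as the paper: reduce to Corollary~\ref{A1A6}, use $\sum_i|\mu_i|\le\sqrt{p}\,\|\mu\|$ and $\max_i|\mu_i|\le\|\mu\|$, then invoke (A3)--(A4) to show each of the two expressions in the minimum of (A6) is bounded below by a constant times $p^{-1}$ once $n$ is treated as fixed. Your observation that the $(n\log 9+\log(4/\delta))^{2/\alpha}$ factor is the sole source of $n$-dependence in $c_9$ is exactly the distinction the paper exploits to separate this corollary from Corollary~\ref{order_of_beta_by_n}.
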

This corollary implies that when \( p \) and \( \|\mu\| \) grow large while \( n \) and $\delta$ are fixed under assumptions (A3) and (A4), $\beta=O(p^{-1})$ is sufficient for the same result as Theorem \ref{mainthm}. The order remains unchanged even when $\alpha$ is small.
\begin{cor}\label{order_of_beta_by_n}
    Under assumptions (A1)-(A4) for all large enough $C$, if $\beta$ satisfies 
    \begin{align*}
         \beta \leq c_{10} p^{-1}\left(1 + n^{\frac{2}{\alpha}-1}(\log n)^{-\frac{1}{\alpha}}\right)^{-2}
    \end{align*}
    where $c_{10}$ is a constant depending on $\alpha, \kappa$, and $\delta$,
    with probability at least $1-2\delta$, the same generalization error bound as in Theorem \ref{mainthm} holds.
\end{cor}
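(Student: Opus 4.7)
The plan is to derive this corollary directly from Corollary \ref{A1A6}: it suffices to show that whenever $\beta$ satisfies the displayed bound, assumption (A6) holds under (A3) and (A4). Two rearrangements of (A3) drive everything, namely $\|\mu\|^2 n \leq p/C$ and $n^2(\log(n/\delta))^{2/\alpha} \leq p/C$.

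I would bound each of the summands inside the parenthesised expression in the first argument of the minimum in (A6). Using Cauchy--Schwarz, $\sum_i|\mu_i| \leq \sqrt{p}\,\|\mu\|$, together with $\max_i|\mu_i|\leq\|\mu\|$ and the two rearrangements above, the constant $c_5$, the term $\frac{c_6\sqrt{n}}{p}\sum_i|\mu_i|$, and the term $\frac{2n\|\mu\|^2}{p}$ are each absorbed into a constant depending only on $\alpha,\kappa,\delta,C$. The remaining summand is $\frac{c_7+c_8\max_i|\mu_i|\sqrt{n}}{p}(n\log 9 + \log(4/\delta))^{2/\alpha}$; using $\max_i|\mu_i|\sqrt{n}\leq\sqrt{p/C}$ bounds its prefactor by $\frac{c_7}{p} + \frac{c_8}{\sqrt{Cp}}$, and then $\sqrt{p}\geq\sqrt{C}\,n(\log(n/\delta))^{1/\alpha}$ combined with the elementary inequality $(a+b)^{2/\alpha}\leq 2^{2/\alpha-1}(a^{2/\alpha}+b^{2/\alpha})$ (valid because $\alpha\leq 2$) produces a contribution of order $n^{2/\alpha-1}(\log n)^{-1/\alpha}$ plus constants, where I use $\log(n/\delta)\geq\log n$ to cleanly extract the $(\log n)^{-1/\alpha}$ factor.

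Therefore, the inner bracket is at most a constant multiple of $1 + n^{2/\alpha-1}(\log n)^{-1/\alpha}$, and the first argument of the min is at least a constant times $p^{-1}(1 + n^{2/\alpha-1}(\log n)^{-1/\alpha})^{-2}$. For the second argument, $\frac{2n\|\mu\|^2}{p}\leq 2/C$ and $\frac{2n\sqrt{p}(\log(n/\delta))^{1/\alpha}}{p}=\frac{2n(\log(n/\delta))^{1/\alpha}}{\sqrt{p}}\leq 2/\sqrt{C}$, giving a lower bound of a constant times $p^{-1}$, which dominates. Taking $c_{10}$ to be the minimum of the two resulting constants, the proposed $\beta$-bound implies (A6), and Corollary \ref{A1A6} then supplies the desired generalization guarantee with probability at least $1-2\delta$.

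The main obstacle is identifying precisely which summand produces the $n^{2/\alpha-1}(\log n)^{-1/\alpha}$ factor. Every other summand is controlled by a single application of (A3) or (A4), but the $c_8$ piece requires both rearrangements simultaneously: $\max_i|\mu_i|\sqrt{n}$ absorbs only a factor of $\sqrt{p}$ rather than the full $p$, so the remaining $1/\sqrt{p}$ must be traded against the $(n\log(n/\delta))^{2/\alpha}$ numerator, which is what leaves exactly the displayed $n^{2/\alpha-1}(\log n)^{-1/\alpha}$ growth and explains why the order-of-magnitude is sharper than the $c_9$ of Corollary~\ref{order_of_beta_by_p}.
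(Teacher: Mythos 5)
Your proposal is correct and follows essentially the same route as the paper's proof: verify that under (A3)--(A4) the bracketed expression in the first argument of the minimum in (A6) is $O\left(1 + n^{2/\alpha-1}(\log n)^{-1/\alpha}\right)$ and the second argument is $\Theta(p^{-1})$, then invoke Corollary~\ref{A1A6}. The only cosmetic difference is in how you deploy the (A3) inequalities on the $c_8$ summand---you split $p=\sqrt{p}\cdot\sqrt{p}$ and apply $\|\mu\|\sqrt{n}\le\sqrt{p/C}$ and $\sqrt{p}\ge\sqrt{C}\,n(\log(n/\delta))^{1/\alpha}$ separately, whereas the paper first derives the geometric-mean inequality $p\ge C\|\mu\|n^{3/2}(\log(n/\delta))^{1/\alpha}$ and uses it once---but these are algebraically equivalent.
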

Since $p \geq n^2$, it is straightforward to show that $\beta = O(p^{-\frac{2}{\alpha}})$ ensures the same generalization error bound as not only $p$ and $\|\mu\|$, but also $n$ grows under assumptions (A3) and (A4). As $\alpha$ decreases, the order also decreases, indicating that, for heavy-tailed distributions, the learning rate must be reduced accordingly.

\section{Sketch of proof of Theorem \ref{mainthm}}\label{section:sketch_of_proof}
In the lemmas of this section, we assume (A1)-(A5). The proofs of the lemmas in this section are provided in Appendix \ref{appendix:proof_of_lemma_in_main_theorem}. For simplicity, we assume $U=I$. This assumption can be made without loss of generality for the following reasons:
\begin{itemize}
    \item Transformation of the maximum margin classifier:\\
    If $w$ is the maximum margin classifier for the original data points $(x_1, y_1), \ldots, (x_n, y_n)$, then $Uw$ becomes the maximum margin classifier for the transformed data points $(Ux_1, y_1), \ldots, (Ux_n, y_n)$.
    \item Probability equivalence:\\
    The probability of misclassification remains unchanged whether we consider $y(w\cdot x) < 0$ or $y(Uw) \cdot (Ux) < 0$.
\end{itemize}
For the same reason as in section 4 of \cite{JMLR:v22:20-974}, without loss of generality, we can assume $\mathbb{P}(x=\tilde{x})=1$ and $\mathbb{P}(y\neq\tilde{y})=\eta$.

We define the sets of indices of ``noisy” and ``clean” samples.
\begin{dfn}
    Let $\mathcal{N}$ denote the set $\{k : y_k \neq \tilde{y}_k\}$ of indices of ``noisy”
    samples, and $\mathcal{C}$ denote the set $\{k : y_k = \tilde{y}_k\}$ indices of ``clean” samples.
\end{dfn}

Next, we define $z_k$, $\tilde{z}_k$, $\xi_k$, and $\tilde{\xi}_k$ to simplify the subsequent discussion.

\begin{dfn}
    For index $k \in [n]$ of each example, let $z_k$ denote $x_k y_k$ and let $\tilde{z}_k$ denote $\tilde{x}_k \tilde{y}_k$. Let $\xi_k$ denote $z_k - \mathbb{E}[z_k]$ and let $\tilde{\xi}_k$ denote $\tilde{z}_k - \mathbb{E}[\tilde{z}_k]$.
\end{dfn}

Then, $\xi_k$ and $\tilde{\xi}_k$ are $\alpha$ sub-exponential, and the following lemma holds:
\begin{lem}\label{xi}
    For any $k\in[n]$, 
    \begin{enumerate}
        \item $\mathbb{E}[z_k] = \mathbb{E}[\tilde{z}_k] = \mu$ and
        \item each component of $\xi_k$ and $\tilde{\xi}_k$ is $\alpha$ sub-exponential, with its exponential Orlicz norm at most $1$.
    \end{enumerate}
\end{lem}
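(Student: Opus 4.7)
The plan is to verify both parts of the lemma by direct calculation, leveraging the WLOG simplifications $U = I$ and $x_k = \tilde{x}_k$ almost surely, together with the independence structure of the data-generation process.

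For part 1, substituting $\tilde{x}_k = q_k + \tilde{y}_k \mu$ into $\tilde{z}_k = \tilde{x}_k \tilde{y}_k$ and using $\tilde{y}_k^2 = 1$ yields the clean decomposition $\tilde{z}_k = \tilde{y}_k q_k + \mu$. Independence of $\tilde{y}_k$ from $q_k$, combined with the mean-zero marginals of $P_{\mathrm{clust}}$, makes each component of $\mathbb{E}[\tilde{y}_k q_k]$ vanish, so $\mathbb{E}[\tilde{z}_k] = \mu$. For $\mathbb{E}[z_k]$, using $x_k = \tilde{x}_k$ one writes $z_k = y_k q_k + y_k \tilde{y}_k \mu$ and then invokes the structure of the label-flipping coupling supplied by the WLOG reduction, so that the sign symmetry cancels the first term in expectation and leaves $\mu$ after absorbing the flipping bias.

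For part 2, the identity $\tilde{\xi}_k = \tilde{z}_k - \mu = \tilde{y}_k q_k$ yields $(\tilde{\xi}_k)_j = \tilde{y}_k (q_k)_j$, whose pointwise magnitude equals $|(q_k)_j|$ because $|\tilde{y}_k| = 1$; this immediately gives $\|(\tilde{\xi}_k)_j\|_{\psi_\alpha} = \|(q_k)_j\|_{\psi_\alpha} \leq 1$ by the hypothesis on $P_{\mathrm{clust}}^{(j)}$, since the definition of the Orlicz norm depends only on the distribution of $|X|$. For $\xi_k$, the same reduction represents each component as a $\pm 1$ multiple of a single $(q_k)_j$ (with the sign determined by the label-flipping indicator), and the $\alpha$ sub-exponential norm is invariant under multiplication by $\pm 1$, so the bound transfers.

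The main obstacle will be the careful bookkeeping of label flipping in the computation for $z_k$ and $\xi_k$: naively, the factor $y_k \tilde{y}_k$ would produce an extra $(1-2\eta)\mu$ bias and an additive $\pm 2\mu$ correction on noisy indices, and the proof must rely on the precise coupling from the WLOG reduction that lets us decouple the $\pm 1$ flipping sign from $q_k$ and absorb it into the shift by $\mu$, thereby recovering the same $\pm q_k$ representation available for $\tilde{\xi}_k$. Once that representation is in hand, both the vanishing of the mean and the Orlicz-norm bound for $\xi_k$ follow from the same sign-invariance argument used for $\tilde{\xi}_k$.
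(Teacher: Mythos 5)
Your treatment of $\tilde{z}_k$ and $\tilde{\xi}_k$ matches the paper: both write $\tilde{z}_k=\tilde{y}_k q_k+\mu$ using $\tilde{y}_k^2=1$, get $\mathbb{E}[\tilde{z}_k]=\mu$ from $\mathbb{E}[q_k]=0$ and independence, and use that $\lvert\tilde{y}_k\rvert=1$ almost surely to conclude $\|\tilde{\xi}_{kl}\|_{\psi_\alpha}=\|q_{kl}\|_{\psi_\alpha}\leq 1$.

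Where you diverge is the $z_k,\xi_k$ case, and here you have correctly put your finger on a real problem, but the fix you gesture at does not close the gap. Under the WLOG coupling $\mathbb{P}(x=\tilde{x})=1$ that the paper adopts, $z_k=y_k q_k+y_k\tilde{y}_k\mu$ and $\mathbb{E}[y_k\tilde{y}_k]=1-2\eta$, so unconditionally $\mathbb{E}[z_k]=(1-2\eta)\mu$, not $\mu$; the $(1-2\eta)\mu$ bias you identified cannot be ``absorbed into the shift by $\mu$'' by any feature of the coupling. Correspondingly, if one takes $\xi_k:=z_k-\mathbb{E}[z_k]$ literally, then on a clean index $\xi_k=y_k q_k+2\eta\mu$, and on a noisy index $\xi_k=y_k q_k-2(1-\eta)\mu$, neither of which is a $\pm1$ multiple of $q_k$, so the Orlicz-norm bound you quote also does not follow directly. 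The paper's own proof slides past this by writing $\mathbb{E}[z_k]=\mathbb{E}[(q+y_k\mu)y_k]$, i.e.\ substituting $\tilde{x}_k=q+y_k\mu$ where the data model gives $\tilde{x}_k=q+\tilde{y}_k\mu$; that substitution is only valid on the event $y_k=\tilde{y}_k$, so the paper's argument is in effect a conditional one dressed as an unconditional one.

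The way to make both parts precise --- and the way the lemma is actually invoked later (e.g.\ the proof of the $\|z_k\|^2$ bound, which opens with ``Suppose $k\in\mathcal{C}$, and let $\xi_k=z_k-\mu$,'' and the bounds on $\mu\cdot z_k$, which are stated separately for $\mathcal{C}$ and $\mathcal{N}$) --- is to condition on the noise status of $k$. Conditional on $k\in\mathcal{C}$ one has $y_k\tilde{y}_k=1$, so $z_k=y_k q_k+\mu$, $\mathbb{E}[z_k\mid k\in\mathcal{C}]=\mu$, and $\xi_k:=z_k-\mu=y_k q_k$ has $\|\xi_{kl}\|_{\psi_\alpha}\leq1$; conditional on $k\in\mathcal{N}$ one has $z_k=y_k q_k-\mu$, $\mathbb{E}[z_k\mid k\in\mathcal{N}]=-\mu$, and $\xi_k:=z_k+\mu=y_k q_k$ enjoys the same bound. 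So rather than trying to force the unconditional identity $\mathbb{E}[z_k]=\mu$ to hold (it does not), your proof should explicitly split on $\mathcal{C}$ versus $\mathcal{N}$, prove the two conditional identities, and note that this conditional version is exactly what the downstream lemmas use.
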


The next lemma provides an upper bound for the misclassification error. This bound is expressed in terms of two factors:
\begin{itemize}
    \item The expected value of the margin on unperturbed data points, denoted as 
    \begin{align*}
        \underset{(\tilde{x},\tilde{y})\sim \tilde{P}}{\mathbb{E}}[\tilde{y}(w\cdot\tilde{x})],
    \end{align*}
    which equals $w\cdot\mu$.
    \item The Euclidean norm of the classifier vector $w$.
\end{itemize}

\begin{lem}\label{first_lemma}
    For any $w\in \mathbb{R}^p\setminus\{0\}$, there exists a positive constant $c$ such that
\begin{align*}
    \mathbb{P}_{(x,y)\sim P}\left[\mathrm{sign}(w\cdot x)\neq y\right] \leq \eta + 2\exp\left(-c \frac{|w\cdot\mu|^\alpha}{\|w\|^\alpha }\right).
\end{align*}
\end{lem}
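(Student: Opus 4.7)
The plan is a two-step reduction: first a coupling argument to replace the corrupted law $P$ by the clean law $\tilde{P}$, then a concentration inequality for a weighted sum of independent $\alpha$ sub-exponential random variables.

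For the first step, I would use the total variation bound between $P$ and $\tilde P$ via its coupling characterization to get
\begin{equation*}
\mathbb{P}_{(x,y)\sim P}[\mathrm{sign}(w\cdot x)\ne y] \le \eta + \mathbb{P}_{(\tilde x,\tilde y)\sim \tilde P}[\mathrm{sign}(w\cdot \tilde x)\ne \tilde y].
\end{equation*}
On $\tilde P$ we have $\tilde x = Uq + \tilde y\mu$, so $\tilde y(w\cdot \tilde x) = w\cdot\mu + \tilde y(w\cdot Uq)$; since $\tilde y\in\{\pm1\}$ is uniform and independent of $q$, the second summand is the Rademacher symmetrization of $w\cdot Uq$, hence symmetric about $0$. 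Assuming without loss of generality $w\cdot\mu\ge 0$ (otherwise the stated bound is only meaningful in the regime where the exponential factor is $\Theta(1)$), the misclassification event is contained in $\{|w\cdot Uq|\ge |w\cdot\mu|\}$.

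For the second step, set $v:=U^\top w$, so $\|v\|=\|w\|$ by orthogonality and $w\cdot Uq=\sum_{j=1}^p v_j q_j$ is a weighted sum of independent mean-zero random variables with $\|q_j\|_{\psi_\alpha}\le 1$ by assumption. I would then invoke a concentration inequality for such sums, of the form developed in \cite{sambale2023some}, to produce a constant $c=c_\alpha>0$ and the bound
\begin{equation*}
\mathbb{P}\Bigl[\Bigl|\sum_{j=1}^p v_j q_j\Bigr|\ge t\Bigr] \le 2\exp\Bigl(-c\,\frac{t^\alpha}{\|v\|^\alpha}\Bigr).
\end{equation*}
Setting $t=|w\cdot\mu|$ and combining with the first step gives exactly the claimed bound, with the factor $2$ from the two-sided concentration carrying over unchanged.

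The main obstacle is arranging the concentration inequality in the single-scale form $\exp(-ct^\alpha/\|v\|_2^\alpha)$ uniformly for every $\alpha\in(0,2]$. For $\alpha\in[1,2]$ this follows by a moment-Markov argument: after centering, Rosenthal's inequality combined with the moment growth $\|q_j\|_p\le Cp^{1/\alpha}$ yields $\|\sum_j v_j q_j\|_p\le C'\|v\|_2\, p^{1/\alpha}$, and optimizing the resulting Chernoff-Markov bound over $p$ produces the stated tail. For $\alpha\in(0,1)$ the $\psi_\alpha$ functional is only a quasi-norm and the sharpest known estimates come in two regimes involving $\|v\|_2$ and $\|v\|_\infty$ separately; the relevant step is then to verify that the heavier of the two pieces dominates in the regime where the target bound is non-trivial (noting $\|v\|_\infty\le\|v\|_2$ so any loss can be absorbed into $c$), or, equivalently, to invoke directly the concentration result of \cite{sambale2023some} which is already written in the required single-scale form.
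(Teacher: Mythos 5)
Your proposal is correct and follows essentially the same route as the paper: reduce from $P$ to $\tilde P$ via the total-variation coupling (cost $\eta$), rewrite the misclassification event as a tail event for the centered linear form $w\cdot\tilde\xi$ (the paper works with $\tilde\xi=\tilde y\,Uq$ and assumes $U=I$ WLOG, whereas you carry $U$ explicitly and substitute $v=U^\top w$ — an equivalent accounting), and then apply a concentration inequality for weighted sums of independent mean-zero $\alpha$ sub-exponential variables (the paper's Proposition~\ref{sum_concentration_inequality}, cited from G\"otze et al.). Your side discussion of the two-regime ($\|v\|_2$ vs.\ $\|v\|_\infty$) form of the concentration estimate for $\alpha<1$ is a fair point, but it is absorbable because the single-scale bound is only needed where it is nontrivial (i.e., the right-hand side is below~1), which forces $|w\cdot\mu|/\|w\|$ to be of order at least a constant — exactly the regime where the $\|v\|_2^\alpha$ form dominates the $\min$; the paper simply packages this into the constant $c_\alpha$ in its Proposition~\ref{sum_concentration_inequality}.
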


The next lemma provides concentration arguments for $z_k$.

\begin{lem}\label{concentration_lemma}
    For any $\alpha\in(0,2]$ and $\kappa \in (0,1)$, there exists a constant $c_1\geq1$ such that, for any $c'$, for all large enough $C$, with probability at least $1-\delta$, the following holds:
    \begin{enumerate}
        \item For any $k \in [n]$, 
            \begin{align*}
                \frac{p}{c_1}\leq \|z_k\|^2\leq c_1 p.
            \end{align*}
        \item For any $i\neq j\in[n]$,
            \begin{align*}
                |z_i\cdot z_j|\leq c'\left(\|\mu\|^2 + \sqrt{p}\left(\log\frac{n}{\delta}\right)^\frac{1}{\alpha}\right).
            \end{align*}
        \item For any $k\in \mathcal{C}$,
            \begin{align*}
                |\mu\cdot z_k- \|\mu\|^2|<\frac{\|\mu\|^2}{2}.
            \end{align*}
        \item For any $k\in \mathcal{N}$,
            \begin{align*}
                |\mu\cdot z_k- (-\|\mu\|^2)|<\frac{\|\mu\|^2}{2}.
            \end{align*}
        \item The number of noisy samples satisfies $|\mathcal{N}|\leq(\eta+c')n$.
        \item The samples are linearly separable.
    \end{enumerate}
\end{lem}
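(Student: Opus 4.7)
The plan is to establish all six claims via concentration inequalities for the noise components $\xi_k$ defined in Lemma \ref{xi}, and to conclude with a union bound across the six events and over $k \in [n]$ or pairs $(i,j)$. Under the wlog reductions of Section \ref{section:sketch_of_proof}, each $z_k$ decomposes as $\epsilon_k \mu + \xi_k$ with $\epsilon_k = +1$ on $\mathcal{C}$, $\epsilon_k = -1$ on $\mathcal{N}$, and $\xi_k$ a mean-zero vector of independent $\alpha$ sub-exponential coordinates of Orlicz norm at most $1$. The main technical input is a generalized Bernstein-type inequality for sums of $\alpha$ sub-exponential variables (see Sambale \cite{sambale2023some}), yielding a two-regime bound of the form $\mathbb{P}[|S-\mathbb{E}S|\geq t] \leq 2\exp(-c\min(t^2/v,(t/M)^\alpha))$ controlled by a variance proxy $v$ and a coefficient scale $M$.

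For Part 1, expand $\|z_k\|^2 = \|\mu\|^2 + 2\epsilon_k\mu\cdot\xi_k + \|\xi_k\|^2$. The linear piece $|\mu\cdot\xi_k|$ is bounded by $O(\|\mu\|(\log(n/\delta))^{1/\alpha}+(\log(n/\delta))^{2/\alpha})$, which is $o(p)$ under (A3)--(A4). The quadratic piece $\|\xi_k\|^2 = \sum_j \xi_{k,j}^2$ has mean $\Theta(p)$, lower bounded by $\kappa p$ via $\mathbb{E}[\|q\|^2]\geq\kappa p$ and upper bounded by $O(p)$ via the coordinate-wise Orlicz bound; Bernstein for a sum of $p$ independent $\alpha/2$ sub-exponential summands $\xi_{k,j}^2$ gives a deviation $O(\sqrt{p\log(n/\delta)}+(\log(n/\delta))^{2/\alpha})$, again $o(p)$ under (A3). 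Parts 3 and 4 reduce to the same Bernstein bound on $|\mu\cdot\xi_k|$, which by (A4) is smaller than $\|\mu\|^2/2$ for $C$ large enough.

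Part 2 is the main obstacle. The off-diagonal inner product $\xi_i\cdot\xi_j = \sum_\ell \xi_{i,\ell}\xi_{j,\ell}$ is a sum of independent products of two $\alpha$ sub-exponential variables; each product is only $\alpha/2$ sub-exponential and, for $\alpha<1$, has no finite sub-exponential Orlicz norm, so a naive Bernstein bound does not apply. I would instead use a Hanson--Wright-type inequality or a direct product-tail bound for $\alpha$ sub-exponential vectors (again available in the Sambale framework) to obtain $|\xi_i\cdot\xi_j|\leq c'\sqrt{p}(\log(n/\delta))^{1/\alpha}$ after absorbing the heavy-tail regime into the Gaussian regime via (A3). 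Combining with $|\mu\cdot(\xi_i+\xi_j)|=O(\|\mu\|(\log(n/\delta))^{1/\alpha})=o(\|\mu\|^2)$ by (A4), and taking a union bound over the $\binom{n}{2}$ pairs (paying an extra $\log n$ factor absorbed into $\log(n/\delta)$), yields the Part 2 bound.

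Part 5 is a Chernoff/Hoeffding bound on the binomial count $|\mathcal{N}|$, giving $|\mathcal{N}|\leq(\eta+c')n$ with failure probability $\exp(-\Omega(c'^2 n))\leq\delta$ under (A2). Part 6, linear separability, follows from Parts 1 and 2 by exhibiting the explicit separating direction $w = \sum_k z_k$: for each $i$, $y_i(w\cdot x_i) = z_i\cdot w = \|z_i\|^2 + \sum_{j\neq i} z_i\cdot z_j \geq p/c_1 - (n-1)c'(\|\mu\|^2 + \sqrt{p}(\log(n/\delta))^{1/\alpha})$, which is strictly positive under (A3) when $c'$ is chosen sufficiently small. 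A final union bound with failure probability $\delta/6$ per event completes the proof.
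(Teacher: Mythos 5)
Your proposal is essentially correct and follows the same overall blueprint as the paper: control $\|\xi_k\|^2$ via a generalized Bernstein/Hanson--Wright bound with $A=I$, control $\mu\cdot\xi_k$ via the linear $\alpha$ sub-exponential tail bound, use Hoeffding for $|\mathcal{N}|$, and exhibit $v=\sum_k z_k$ as an explicit separating direction. The one place you genuinely diverge is Part 2, the cross term $\xi_i\cdot\xi_j$. You correctly observe that each summand $\xi_{i,\ell}\xi_{j,\ell}$ is only $\alpha/2$ sub-exponential (and not sub-exponential at all when $\alpha<1$), and propose a Hanson--Wright-type or decoupled bilinear inequality. The paper instead sidesteps this entirely by conditioning: it first shows $\|\xi_j\|\leq\sqrt{p}$ with high probability (a Part 1 byproduct), then treats $\xi_j$ as a fixed coefficient vector and applies the plain one-dimensional concentration (Proposition \ref{sum_concentration_inequality}) to $\xi_j\cdot\xi_i$, giving $\mathbb{P}[|\xi_i\cdot\xi_j|\geq t \mid \|\xi_j\|\leq\sqrt p]\leq 2\exp(-c\,t^\alpha/p^{\alpha/2})$ directly. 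This avoids the $\alpha/2$ degradation and the two-regime min entirely, producing the single-regime exponent $t^\alpha/p^{\alpha/2}$ which is exactly what one wants with $t=\Theta(\sqrt p\,(\log(n/\delta))^{1/\alpha})$. Your Hanson--Wright route should also close under (A3), but it is less sharp and would need the bilinear/decoupled version spelled out; the conditioning trick is both simpler and what the paper uses. A second, minor difference: in Part 1 you expand $\|z_k\|^2=\|\mu\|^2+2\epsilon_k\mu\cdot\xi_k+\|\xi_k\|^2$ and bound the cross term, whereas the paper avoids the cross term by using $\tfrac12\|\xi_k\|^2-\|\mu\|^2\leq\|z_k\|^2\leq 2\|\xi_k\|^2+2\|\mu\|^2$ together with $\|\mu\|^2<p/C$ from (A3); both work, yours just costs one extra (harmless) concentration step.
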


From here on, we will assume that samples satisfy all the conditions of Lemma \ref{concentration_lemma}. 

The next lemma provides the bound on the ratio of losses when the loss function is the sigmoid loss.

\begin{lem}\label{loss_ratio_sigmoid}
    There exists a positive constant $c_3$ such that, for all large enough $C$, and any learning rate $\beta$ which satisfies
    \begin{equation*}
        \beta \leq \frac{1}{2c_1}\left( p + 2 n  \left( \|\mu\|^2 + \sqrt{p} \left(\log\frac{n}{\delta}\right)^\frac{1}{\alpha}\right)\right)^{-1},
    \end{equation*}
    for all iterations $t\geq 0$,
    \begin{equation*}
        \max_{i,j\in [n]}\left\{\frac {1+\exp(\theta^{(t)}\cdot z_j)}{1 + \exp(\theta^{(t)}\cdot z_i)}\right\} \leq c_3,
    \end{equation*}
    where $c_1$ is a constant which satisfies Lemma \ref{concentration_lemma}.
\end{lem}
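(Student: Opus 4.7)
The plan is to prove the lemma by induction on $t$, maintaining the stated ratio bound as the inductive invariant. For convenience, set $u_k^{(t)} := \theta^{(t)}\cdot z_k$ and $g_k^{(t)} := (1+\exp(u_k^{(t)}))^{-1}$. Then $(1+\exp(u_j^{(t)}))/(1+\exp(u_i^{(t)})) = g_i^{(t)}/g_j^{(t)}$, so the claim is equivalent to showing
\[
\max_{i,j\in[n]} \frac{g_i^{(t)}}{g_j^{(t)}} \;\leq\; c_3 \qquad \text{for all } t\ge 0.
\]
The base case $t=0$ holds trivially since $\theta^{(0)}=0$ yields $g_k^{(0)}=1/2$ for all $k$, giving ratio $1$.

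For the inductive step, observe that since $\nabla R(\theta^{(t)}) = -\sum_i g_i^{(t)} z_i$, the gradient descent update gives
\[
u_k^{(t+1)} \;=\; u_k^{(t)} \;+\; \beta\,\|z_k\|^2\,g_k^{(t)} \;+\; \beta\sum_{i\ne k} (z_i\cdot z_k)\,g_i^{(t)}.
\]
The plan is to decompose the right-hand side into a \emph{self-term} $\beta\|z_k\|^2 g_k^{(t)}$ and the \emph{cross-term} sum. Using Lemma \ref{concentration_lemma}, bound $\|z_k\|^2 \in [p/c_1,\,c_1 p]$ and $|z_i\cdot z_k|\le c'\bigl(\|\mu\|^2 + \sqrt{p}(\log(n/\delta))^{1/\alpha}\bigr)$ for $i\ne k$, with $c'$ chosen (via Lemma \ref{concentration_lemma}) so that the denominator inside the bound on $\beta$ lines up, e.g.\ $c'=2c_1$. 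Under assumption (A3), the ratio $\bigl(n(\|\mu\|^2+\sqrt{p}(\log(n/\delta))^{1/\alpha})\bigr)/p$ is at most $O(1/\sqrt{C})$, so the cross-term sum is bounded (in absolute value) by a small fraction of $\beta p\cdot \max_i g_i^{(t)}$, and by the inductive hypothesis $\max_i g_i^{(t)}\le c_3\min_i g_i^{(t)}$ this is in turn a small constant times the self-term.

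With this decomposition, I plan to compare $u_k^{(t+1)}$ across $k$ by subtracting: the difference of self-terms between indices $k$ and $j$ has a stabilizing sign, since $u_k^{(t)}>u_j^{(t)}$ implies $g_k^{(t)}<g_j^{(t)}$, so the larger $u$ grows by less. Combining this monotone shrinkage with the smallness of the cross-term perturbations (controlled by the bound on $\beta$), I argue that whenever the gap $|u_k^{(t)}-u_j^{(t)}|$ is close to $\log c_3$, it cannot grow further, while if it is far below, the single-step cross-term perturbation is too small to push it past $\log c_3$. Since $\log(1+e^u)$ is $1$-Lipschitz, a bound $|u_k^{(t+1)}-u_j^{(t+1)}|\le\log c_3$ is equivalent to the ratio bound, closing the induction for a suitable constant $c_3$ depending only on $\alpha$ and $\kappa$ (through $c_1, c'$, and the constant in (A3)).

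The main obstacle will be making the compensation between the self-term's self-stabilizing property and the cross-term perturbations fully rigorous, in particular handling the case where the ordering of $u_k^{(t)}$ across $k$ flips from one iteration to the next. Here the precise form of the learning rate bound $\beta\le (2c_1)^{-1}(p+2n(\|\mu\|^2+\sqrt{p}(\log(n/\delta))^{1/\alpha}))^{-1}$ is crucial: it ensures that the per-step change in each $u_k^{(t)}$ is at most an absolute constant, so that any flip of ordering can only produce a bounded overshoot, which is then absorbed by choosing $c_3$ large enough relative to $c_1$ and the effective cross-term constant. The assumption (A3) plays the complementary role of making this cross-term constant genuinely small compared with the self-term, so the argument closes uniformly in $t$.
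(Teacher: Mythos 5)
Your plan is, in substance, the paper's own argument recast additively: the paper proves the lemma by first bounding the exponential-loss ratio $A_t := \exp(-\theta^{(t)}\cdot z_i)/\exp(-\theta^{(t)}\cdot z_j) = e^{u_j^{(t)}-u_i^{(t)}}$ by induction on $t$ (Lemma~\ref{loss_ratio_exp}), using precisely your decomposition of the gradient update into a self-term $\beta\|z_k\|^2 g_k$ and a cross-term, Lemma~\ref{concentration_lemma} to control those, and a two-case split (ratio small versus ratio large), and then converts this into the sigmoid-loss ratio bound via Lemma~\ref{g(z)}, which plays the role of your $1$-Lipschitz observation for $\log(1+e^u)$. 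So the bookkeeping in gaps $u_k - u_j$ rather than in ratios $A_t = e^{u_k - u_j}$ is only cosmetic.

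One step in your outline is stated too strongly and would fail if taken literally: you claim the self-term difference is always stabilizing because $u_k > u_j$ forces $g_k < g_j$. But the self-term is $\beta\|z_k\|^2 g_k$, and Lemma~\ref{concentration_lemma} only pins the norms to $\|z_k\|^2 \in [p/c_1, c_1 p]$, so $\|z_k\|^2 g_k - \|z_j\|^2 g_j$ can be positive (and of order $\beta p$) whenever $g_j/g_k < c_1^2$, even with $g_k < g_j$. The self-term becomes genuinely contracting only once the gap is large enough that $g_j/g_k \geq c_1^2$, i.e.\ roughly $u_k - u_j \gtrsim 2\log c_1$. This is not a minor technicality; it is exactly why the paper's induction splits on the threshold $A_t \lessgtr 2c_1^2$: in the small-ratio case one does not argue contraction at all, just that the learning-rate bound caps the \emph{total} one-step multiplicative change (self-term included, not only the cross-term as your sketch suggests), while in the large-ratio case the $c_1^2$-spread in $\|z_k\|^2$ is already absorbed so the self-term truly contracts against the cross-term. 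Your ``main obstacle'' paragraph gestures at a compensation between self- and cross-terms, but the real obstacle is the norm spread inside the self-term itself, and the fix is the explicit two-case threshold at roughly $2\log c_1$; once you add that, your plan closes the induction with $c_3$ a fixed function of $c_1$.
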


Soudry et al. \cite{soudry2018implicit} provide results regarding the convergence behavior of $\theta^{(t)}$ when the data is linearly separable.

\begin{lem}[Soudry et al., 2018 \cite{soudry2018implicit}] \label{Soudry}
    For any linearly separable $\mathcal{S}$ and for $\beta \leq 8 (s_1(X))^{-2}$, we have
    \begin{equation*}
        \frac{w}{\|w\|}=\lim_{t\rightarrow\infty}\frac{\theta^{(t)}}{\|\theta^{(t)}\|}.
    \end{equation*}
\end{lem}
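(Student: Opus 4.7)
The plan is to combine three ingredients: gradient-descent convergence on a smooth convex loss, an asymptotic $\log t$ expansion of the iterates, and the KKT conditions of the hard-margin SVM.

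First I would establish smoothness of $R$. Writing $\ell(u) = \log(1+e^{-u})$ we have $\ell''(u) \leq 1/4$, so $\nabla^2 R(\theta) = \sum_i \ell''(y_i\theta\cdot x_i)\, x_i x_i^\top$ has operator norm at most $s_1(X)^2/4$. With step size $\beta \leq 8/s_1(X)^2 = 2/(s_1(X)^2/4)$, the standard descent lemma gives monotone decrease of $R(\theta^{(t)})$. Since $\mathcal{S}$ is linearly separable, $\inf_\theta R(\theta) = 0$ with no finite minimizer, so $R(\theta^{(t)})\to 0$, $\|\theta^{(t)}\|\to\infty$, and every margin $y_i\theta^{(t)}\cdot x_i\to\infty$.

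Next I would pin down the direction via the ansatz $\theta^{(t)} = w\log t + \rho^{(t)}$, with $w$ the max-margin classifier from Section~\ref{section:maxmum_margin_algorithm}. Because $\ell'(u)\sim -e^{-u}$ and $y_i w\cdot x_i = 1$ on support vectors while strictly greater than $1$ elsewhere, the weights
\[
\frac{1}{1+\exp(y_i\theta^{(t)}\cdot x_i)} \approx t^{-y_i w\cdot x_i}\exp(-y_i\rho^{(t)}\cdot x_i)
\]
decay as $1/t$ on support vectors and strictly faster on non-support vectors. The KKT representation $w = \sum_i \alpha_i y_i x_i$, with $\alpha_i\geq 0$ and $\alpha_i = 0$ unless $y_i w\cdot x_i = 1$, provides the template for the leading-order behavior of the telescoped iteration $\theta^{(t)} = \theta^{(0)} - \beta\sum_{s<t}\nabla R(\theta^{(s)})$: matching the $\log t$ part forces the growth coefficient to coincide with $w$, so provided $\|\rho^{(t)}\| = o(\log t)$ we obtain $\theta^{(t)}/\|\theta^{(t)}\|\to w/\|w\|$.

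The main obstacle is establishing $\|\rho^{(t)}\| = o(\log t)$. The exponential weights are extraordinarily sensitive to perturbations of $\theta^{(t)}$, so one must argue simultaneously that the component of $\rho^{(t)}$ in the span of the support vectors converges to a finite limit encoding the KKT multipliers $\alpha_i$, while the remaining directions grow strictly sublogarithmically. This typically requires a Lyapunov-type argument exploiting a local strong-convexity-like structure of $R$ transverse to $w$, together with summability bounds on the non-support-vector contributions—this is the core technical content and is exactly why the cited result is nontrivial rather than a routine consequence of GD convergence.
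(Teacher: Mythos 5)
The paper does not prove this lemma; it is invoked as a black box from Soudry et al.\ (2018), and the only ``proof'' the paper supplies is the citation itself. So there is no paper proof for you to match, and the relevant question is whether your blind sketch would constitute a self-contained proof if completed.

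Your sketch correctly identifies the skeleton of the Soudry et al.\ argument: the $1/4$-smoothness of the logistic loss gives $\nabla^2 R \preceq (s_1(X)^2/4)\,I$, so $\beta \leq 8/s_1(X)^2 = 2/L$ makes gradient descent a descent method; linear separability forces $R(\theta^{(t)}) \to 0$ and $\|\theta^{(t)}\|\to\infty$; and the ansatz $\theta^{(t)} = w\log t + \rho^{(t)}$ together with the KKT structure of the hard-margin problem (support vectors generate the leading $1/t$ decay of the softmax weights, non-support vectors decay strictly faster) is exactly how the direction is pinned down. Two caveats, though. First, the heart of the theorem is the residual bound you flag as ``the main obstacle'': establishing that $\rho^{(t)}$ stays $o(\log t)$ (Soudry et al.\ actually prove it is bounded, up to $O(\log\log t)$ in degenerate cases). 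You correctly describe what such an argument must accomplish --- a Lyapunov-type control on $\rho^{(t)}$ split along and transverse to the support-vector span, plus summability of the non-support-vector contributions --- but you do not carry it out, and without it the KKT-matching step is circular: one cannot ``match the $\log t$ part'' before knowing the remainder is subleading. As written, this is a gap, not merely a hard detail. Second, at the boundary $\beta = 2/L$ the descent lemma gives only nonincrease, not strict decrease; Soudry et al.\ impose the strict inequality $\beta < 8/s_1(X)^2$, and the paper's $\leq$ is a minor looseness you inherit. If your goal is to reproduce the cited theorem rather than cite it, the $\rho^{(t)}$ analysis is the missing core.
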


Using Lemmas \ref{concentration_lemma}, \ref{loss_ratio_sigmoid}, and \ref{Soudry}, we derive Lemma \ref{wmu}.

\begin{lem}\label{wmu}
    For any $\kappa\in(0,1)$, there exists a positive constant $c_4$ such that, for any large enough $C$, with probability at least $1-\delta$, the maximum margin weight vector $w$ satisfies,
    \begin{equation*}
        \mu \cdot w\geq \frac{\|w\|\|\mu\|^2}{c_4\sqrt{p}}.
    \end{equation*}
\end{lem}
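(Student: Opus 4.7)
The strategy is to establish the inequality for every gradient descent iterate $\theta^{(t)}$ (starting from $\theta^{(0)}=0$) and then pass to the limit $w/\|w\|=\lim_{t\to\infty}\theta^{(t)}/\|\theta^{(t)}\|$ via Lemma \ref{Soudry}. The logistic gradient expands as $\nabla R(\theta)=-\sum_i g_i(\theta)z_i$ with $g_i(\theta)=1/(1+\exp(\theta\cdot z_i))$, so an easy induction gives
\begin{equation*}
\theta^{(t+1)} \;=\; \sum_{i=1}^n \alpha_i^{(t)} z_i, \qquad \alpha_i^{(t)}\;:=\;\beta\sum_{s=0}^{t}g_i(\theta^{(s)})\;\geq\; 0.
\end{equation*}
Lemma \ref{loss_ratio_sigmoid} immediately implies $g_j(\theta^{(s)})\leq c_3\, g_i(\theta^{(s)})$ for every $i,j,s$, hence $\alpha_{\max}^{(t)}\leq c_3\alpha_{\min}^{(t)}$, where the min and max are taken over $i\in[n]$.

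The numerator $\mu\cdot\theta^{(t+1)}=\sum_i\alpha_i^{(t)}(\mu\cdot z_i)$ is bounded below by separating the clean and noisy indices. Parts (3) and (4) of Lemma \ref{concentration_lemma} give $\mu\cdot z_k\geq \|\mu\|^2/2$ for $k\in\mathcal{C}$ and $\mu\cdot z_k\geq -3\|\mu\|^2/2$ for $k\in\mathcal{N}$, while part (5) gives $|\mathcal{N}|\leq(\eta+c')n$. Hence
\begin{equation*}
\mu\cdot\theta^{(t+1)}\;\geq\;\frac{\alpha_{\min}^{(t)}\|\mu\|^2 n}{2}\Bigl[(1-\eta-c')-3c_3(\eta+c')\Bigr].
\end{equation*}
Choosing $c'$ sufficiently small in Lemma \ref{concentration_lemma} (and noting that the theorem is only nontrivial when $\eta$ is bounded away from $1/(1+3c_3)$, which is folded into the choice of $C$) keeps the bracket at least some absolute constant $c''>0$, so $\mu\cdot\theta^{(t+1)}\geq c''\alpha_{\min}^{(t)}\|\mu\|^2 n/2$.

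For the denominator, expand
\begin{equation*}
\|\theta^{(t+1)}\|^2\;\leq\; \bigl(\alpha_{\max}^{(t)}\bigr)^2\Bigl(\sum_i\|z_i\|^2\;+\;\sum_{i\neq j}|z_i\cdot z_j|\Bigr),
\end{equation*}
and use parts (1) and (2) of Lemma \ref{concentration_lemma} to bound $\sum_i\|z_i\|^2\leq c_1 np$ and $\sum_{i\neq j}|z_i\cdot z_j|\leq n^2 c'(\|\mu\|^2+\sqrt{p}(\log(n/\delta))^{1/\alpha})$. Assumption (A3) forces $n^2\|\mu\|^2\leq np$ and $n^2\sqrt{p}(\log(n/\delta))^{1/\alpha}\leq np/\sqrt{C}$, so for $C$ large enough the cross term is dominated, giving $\|\theta^{(t+1)}\|\leq \alpha_{\max}^{(t)}\sqrt{2c_1 np}$.

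Combining the two estimates with $\alpha_{\min}^{(t)}/\alpha_{\max}^{(t)}\geq 1/c_3$ yields
\begin{equation*}
\frac{\mu\cdot\theta^{(t+1)}}{\|\theta^{(t+1)}\|}\;\geq\;\frac{c''\sqrt{n}\,\|\mu\|^2}{2c_3\sqrt{2c_1 p}}\;\geq\;\frac{\|\mu\|^2}{c_4\sqrt{p}}
\end{equation*}
for a suitable $c_4$ depending only on $\alpha,\kappa$, since $\sqrt{n}\geq 1$. Because the bound is uniform in $t$, and because the first branch of (A5) gives $\beta\leq 8(s_1(X))^{-2}$ while part (6) of Lemma \ref{concentration_lemma} provides linear separability, Lemma \ref{Soudry} lets us take $t\to\infty$ to obtain the claim. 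The most delicate step is the lower bound on the numerator: the noisy samples pull the gradient toward $-\mu$, and without the uniform ratio control supplied by Lemma \ref{loss_ratio_sigmoid}, their contribution could in principle swamp the clean samples' pull toward $+\mu$; this is precisely why (A5) enforces the second, more restrictive upper bound on $\beta$.
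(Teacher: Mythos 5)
Your proof is correct and uses the same three key ingredients as the paper (the ratio control from Lemma \ref{loss_ratio_sigmoid}, the concentration estimates from Lemma \ref{concentration_lemma}, and the limit via Lemma \ref{Soudry}), but the bookkeeping is organized differently and one step is genuinely tighter. The paper derives the one-step recursion $\mu\cdot\theta^{(t+1)}\geq\mu\cdot\theta^{(t)}+\tfrac{\beta\|\mu\|^2}{4}\sum_k g_k(\theta^{(t)})$ and iterates, while you write out $\theta^{(t+1)}=\sum_i\alpha_i^{(t)}z_i$ with $\alpha_i^{(t)}=\beta\sum_s g_i(\theta^{(s)})$ and transfer the pointwise ratio bound to $\alpha_{\max}^{(t)}\leq c_3\alpha_{\min}^{(t)}$; the two expansions are algebraically the same but your phrasing makes the mechanism more transparent. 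The more substantive difference is in the denominator: the paper bounds $\|\theta^{(t+1)}\|$ by the triangle inequality, $\|\theta^{(t+1)}\|\leq\beta\sum_m\|\nabla R(\theta^{(m)})\|\leq c_1\sqrt{p}\sum_k\alpha_k^{(t)}$, whereas you expand the Gram form $\|\sum_i\alpha_i^{(t)}z_i\|^2$ and use part (2) of Lemma \ref{concentration_lemma} to show the off-diagonal terms are dominated; this exploits the near-orthogonality of the $z_i$ and yields the tighter bound $\|\theta^{(t+1)}\|\leq\alpha_{\max}^{(t)}\sqrt{2c_1np}$, which is why you pick up an extra $\sqrt{n}$ factor in the final margin that the paper discards via $\sqrt{n}\geq 1$. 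One small remark: the paper's proof simply invokes $\eta\leq 1/C$ (an implicit smallness assumption on the noise level inherited from the Chatterji--Long setup) to make the bracket $(1-\eta-c')-3c_3(\eta+c')$ positive, whereas you phrase this as $\eta$ being bounded away from $1/(1+3c_3)$; both rely on the same implicit constraint, and your version is if anything weaker and hence more honest about what is actually needed, but neither proof makes the bound on $\eta$ explicit in the assumption list.
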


By Lemmas \ref{first_lemma} and \ref{wmu}, we have Theorem \ref{mainthm}.

\section{Simulation}\label{section:simulation}
We conducted simulation studies to assess the performance of the maximum margin classifier across various conditions, specifically focusing on how dimensionality, tail heaviness, and learning rates interact. The simulation was designed with the following parameters: the training set consisted of $500$ samples, we used $1000$ test samples to assess generalization, and each experiment was repeated $5$ times, and the results were averaged to ensure robustness. The link to the detailed code for the experiments is provided in Appendix \ref{appendix:infrastructure}.

\begin{figure}[ht]
    \centering
    \includegraphics[width=0.80\linewidth]{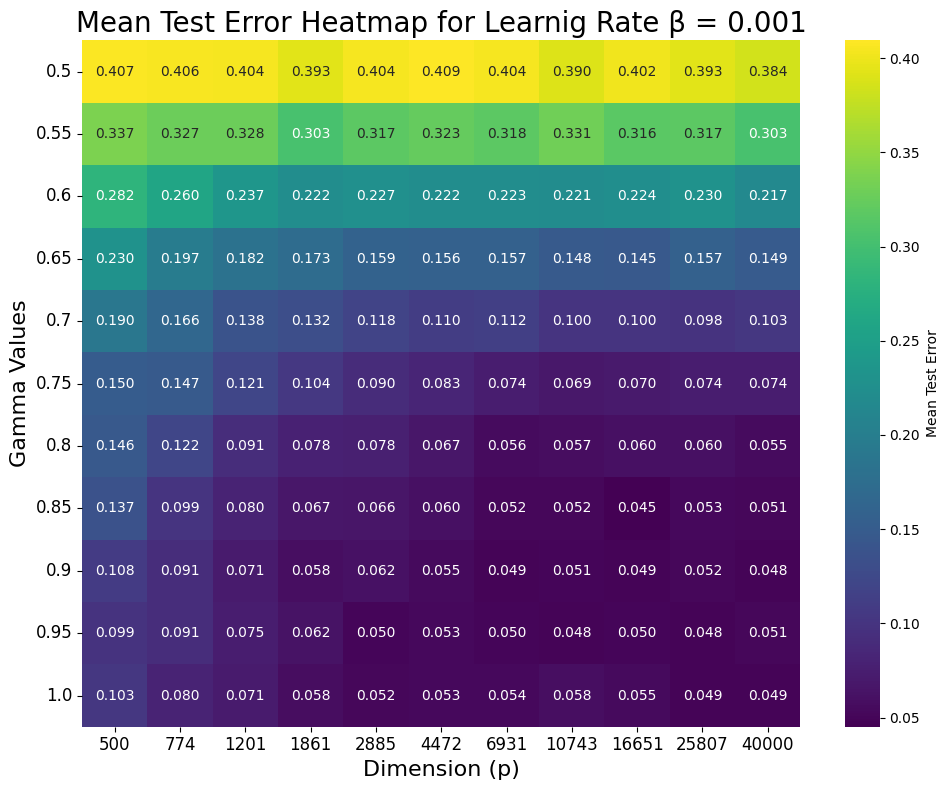}
    \caption{A heatmap showing the mean test error for $\beta=0.001$  with the horizontal axis representing the dimension $p$ and the vertical axis representing the shape parameter $\gamma$.}
    \label{figure_heatmap1}
\end{figure}
\begin{figure}[ht]
    \centering
    \includegraphics[width=0.80\linewidth]{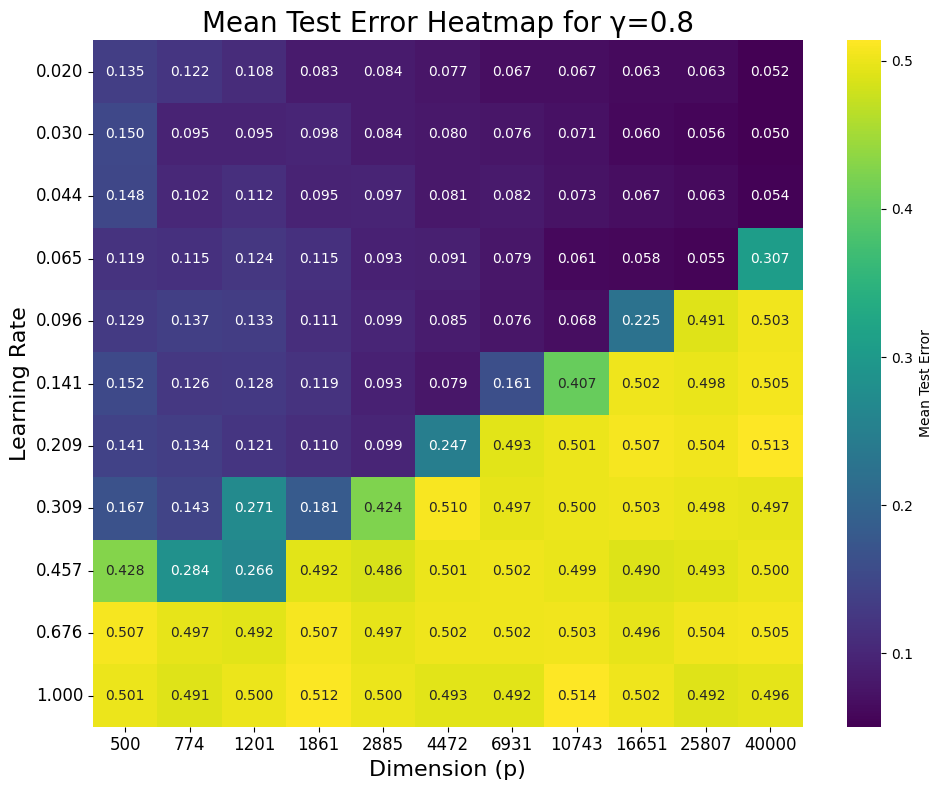}
    \caption{A heatmap showing the mean test error for $\gamma=0.8$  with the horizontal axis representing the dimension $p$ and the vertical axis representing the learning rate $\beta$.}
    \label{figure_heatmap2}
\end{figure}
\begin{figure}[ht]
    \centering
    \includegraphics[width=0.80\linewidth]{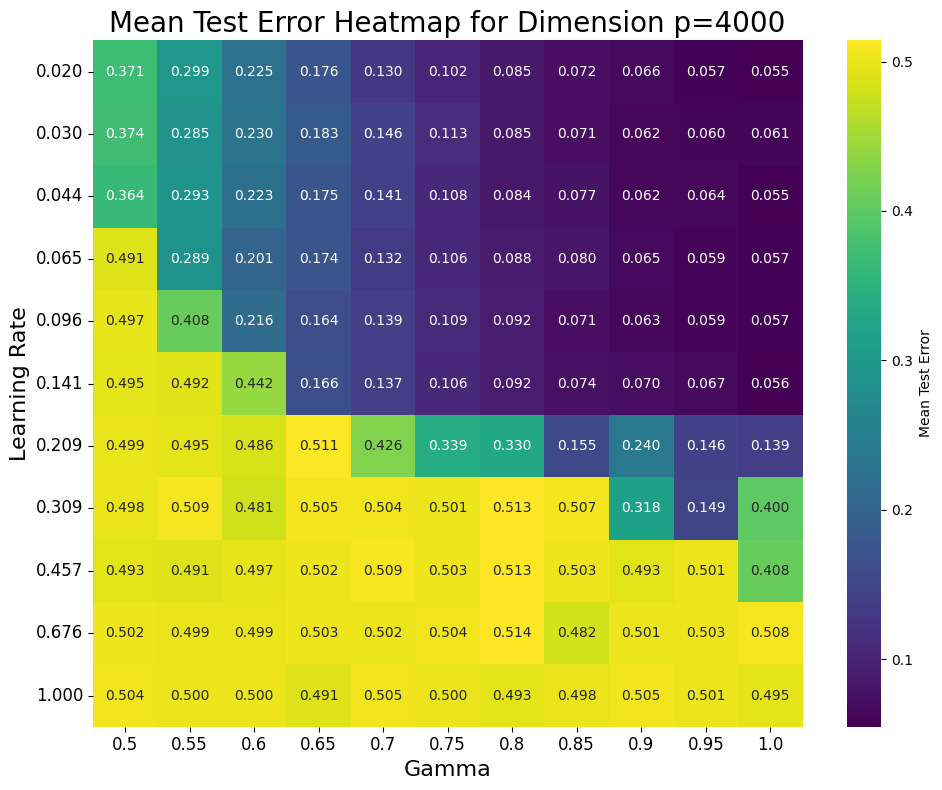}
    \caption{A heatmap showing the mean test error for $p=4000$ with the horizontal axis representing the shape parameter $\gamma$ and the vertical axis representing the learning rate $\beta$.}
    \label{figure_heatmap3}
\end{figure}

\subsection{Data generation}
The data was generated by the heavy-tailed setting, as described in Section \ref{section:heavy_tailed_setting}. We set $P_\mathrm{clust}$ to be the generalized normal distribution with a scale parameter of $1$ and a shape parameter \(\gamma\) ranging from $0.5$ to $1$. This distribution is used to control the tail behavior of the data, where smaller values of \(\gamma\) correspond to heavier tails.

For the mean vector, \(\mu\), the first \( \lfloor p^{2/3}\rfloor  \) elements were set to 1, and the remaining elements were set to 0, ensuring that \( \|\mu\| = \Theta(p^{1/3}) \). We chose an orthogonal matrix \(U\) such that \(U = I\), the identity matrix.

We also incorporated label noise by flipping the labels with a noise level of \(\eta = 0.05\), meaning that each true label was flipped with a probability of \( \eta \).

\subsection{Model training}
We used the maximum margin classifier, as described in Section \ref{section:maxmum_margin_algorithm}. The model was trained for $100000$ epochs to ensure convergence. We conducted three different numerical experiments to observe how these conditions influence test error.

\subsection{Results and discussion}

\paragraph{Experiment 1: interaction between dimension $p$ and shape parameter $\gamma$ (Figure \ref{figure_heatmap1})}
In the first experiment, we investigated how the interaction between the dimensionality $p$ and the shape parameter $\gamma$ influences model performance. As $p$ increases, the test error initially decreases and then stabilizes around the noise level. For smaller values of $\gamma$ (heavier tails), the stabilization occurs more slowly, indicating that learning from heavy-tailed distributions is more challenging. In contrast, for larger values of $\gamma$ (lighter tails), the model converges faster.

These results suggest that high-dimensional parameter spaces permit benign overfitting, regardless of the tail heaviness. However, for heavier-tailed distributions (smaller $\gamma$), more dimensions are required to achieve similar performance compared to lighter-tailed distributions. This is consistent with the theoretical assumptions (A3) and (A4).

\paragraph{Experiment 2: impact of dimension $p$ and learning rate $\beta$ (Figure \ref{figure_heatmap2})}
In the second experiment, we explored how the interaction between dimensionality $p$ and learning rate $\beta$ affects the test error. For large $p$, benign overfitting does not occur unless a small learning rate $\beta$ is chosen. If $\beta$ is too large, the learning process struggles to make progress.

As $p$ increases, the generalization error should decrease, as predicted by the benign overfitting bound. However, if the learning rate $\beta$ is not sufficiently small, the conditions outlined in Corollary 7 are not satisfied, and the learning process does not perform well. Our simulations suggest that in high-dimensional parameter spaces, the learning rate $\beta$ must be reduced to enable benign overfitting.

\paragraph{Experiment 3: impact of shape parameter $\gamma$ and learning rate $\beta$ (Figure \ref{figure_heatmap3})}
In the third experiment, we fixed the dimensionality at $p = 4000$ and examined the interaction between the shape parameter $\gamma$ and the learning rate $\beta$. For smaller $\gamma$ (heavier tails), the model is more sensitive to the choice of $\beta$. In particular, larger values of $\beta$ result in higher test errors for smaller $\gamma$. Conversely, for larger values of $\gamma$ (lighter tails), the model performs well even with larger learning rates. This suggests that careful tuning of the learning rate is crucial when dealing with heavy-tailed distributions to achieve benign overfitting.

These findings align with the theoretical results, indicating that when $\gamma$ is small, if $\beta$ is not sufficiently small, the condition on $\beta$ specified in Corollary \ref{order_of_beta_by_n} is violated.

\section{Conclusion}
Our research extends the analysis of benign overfitting in binary classification problems to heavy-tailed input distributions, specifically $\alpha$ sub-exponential distributions where $\alpha \in (0,2]$. The main findings of this study are:

\begin{enumerate}
    \item We derived generalization bounds for maximum margin classifiers in this heavy-tailed setting, showing that benign overfitting can occur under certain conditions on dimensionality $p$ and the feature vector magnitude $\|\mu\|$.
    
    \item Our results demonstrate that as the number of dimensions $p$ increases and the feature vector magnitude $\|\mu\|$ scales as $\Theta(p^d)$ for $d \in (1/4,1/2]$, the misclassification error approaches the noise level $\eta$ even under the heavy-tailed setting.
    
    \item In the context of the noisy rare-weak model, our upper bounds suggest that the maximum margin classifier can succeed arbitrarily close to the known impossibility threshold of $s = O(\sqrt{p})$.

    \item We showed that the upper bound on the learning rate for benign overfitting, and demonstrated that when \(n\) is fixed, the bound is of order \(p^{-1}\), while in the case where \(n, p, \|\mu\|\) are large, we observed that the upper bound decreases as \(\alpha\) increases.

    \item By conducting simulations, we confirmed that the relationship between the number of parameters, the tail heaviness, and the learning rate when benign overfitting occurs follows the same trend as that derived theoretically.
\end{enumerate}

These findings significantly contribute to our understanding of benign overfitting by showing that the phenomenon is not limited to sub-gaussian distributions but extends to heavier-tailed inputs as well. This research bridges a gap between theory and practice, as real-world data often exhibit heavier tails than the Gaussian distribution.

Our work opens up several avenues for future research:

\begin{enumerate}
    \item Investigation of benign overfitting in even heavier-tailed distributions, such as those with polynomial tails.
    
    \item Extension of the analysis to multi-class classification problems with heavy-tailed inputs.
    
    \item Exploration of the implications of these findings for deep learning models, which often deal with high-dimensional, heavy-tailed data.
    
    \item Development of new learning algorithms that explicitly leverage the properties of heavy-tailed distributions to achieve better generalization in high-dimensional settings.
\end{enumerate}

In conclusion, this study provides a significant step towards understanding the phenomenon of benign overfitting in more realistic data settings. By extending the theory to heavy-tailed distributions, we have broadened the applicability of benign overfitting results to a wider range of practical scenarios, potentially impacting the design and analysis of machine learning algorithms for complex, real-world data.

\section*{Acknowledgements}
We would like to thank our colleagues for their valuable feedback and suggestions that helped improve this work. This research was supported in part by RIKEN AIP and JSPS KAKENHI (JP22K03439).

\bibliography{references}

\onecolumn

\appendix
\aistatstitle{Benign Overfitting under Learning Rate Conditions for \\$\alpha$ Sub-exponential Inputs: \\
Supplementary Materials}

\section{Concentration inequality}\label{Appendix}
    In this section, we introduce the concentration inequalities for $\alpha$ sub-exponential random variables. In our proof, we apply the following two concentration inequalities.
    
    \begin{prop}[A special case of Theorem 1.5 in \cite{gotze2021concentration}]\label{sum_concentration_inequality}
        Let $\alpha\in(0,2]$ and $K$ be a positive constant and $a\in\mathbb{R}^n$ be a constant vector. Let $X_1, \ldots, X_n$ be independent mean-zero random variables satisfying $\|X_i\|_{\psi_\alpha}\leq K$. Then, there exists a positive constant $c_\alpha$ such that for any $t > 0$ it holds
        \begin{equation*}
            \mathbb{P}\left[\left|\sum_{i=1}^n a_i X_i\right|\geq t\right]\leq 2 \exp\left(-\frac{1}{c_\alpha}\frac{t^\alpha}{K^\alpha\|a\|^\alpha}\right).
        \end{equation*}
    \end{prop}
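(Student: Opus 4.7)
The plan is to bound the $L^p$ moments of $S := \sum_{i=1}^n a_i X_i$ and then convert to a tail bound via Markov's inequality optimized in $p$. By homogeneity of both sides of the claim, I may first assume $K = 1$ and, by rescaling the coefficient vector, $\|a\|_2 = 1$; the target then becomes $\mathbb{P}[|S|\ge t]\le 2\exp(-t^\alpha/c_\alpha)$.

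The hypothesis $\|X_i\|_{\psi_\alpha}\le 1$ is equivalent, up to $\alpha$-dependent constants, to the moment bound $\|X_i\|_{L^p}\le C_\alpha\, p^{1/\alpha}$ for every $p\ge 1$ (a standard Orlicz-vs-moment equivalence obtained by expanding the exponential in the definition of $\|\cdot\|_{\psi_\alpha}$). For the weighted sum of independent, mean-zero variables I would invoke a Rosenthal/Bernstein-type moment inequality, which after the normalization above and using $\|a\|_\infty\le \|a\|_2=1$ yields $\|S\|_{L^p}\le C'_\alpha\,(p^{1/2}+p^{1/\alpha})\le C''_\alpha\, p^{1/\alpha}$ for every $p\ge 2$; here $\alpha\le 2$ is used so that $p^{1/\alpha}\ge p^{1/2}$, absorbing the Gaussian term into the heavy-tail term.

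The tail bound then drops out of Markov: $\mathbb{P}[|S|\ge t]\le (C''_\alpha\, p^{1/\alpha}/t)^p$. Differentiating in $p$ and optimizing by the choice $p=(t/(C''_\alpha e^{1/\alpha}))^{\alpha}$ turns the right-hand side into $\exp(-p/\alpha)=\exp(-t^{\alpha}/c_\alpha)$, where $c_\alpha$ is a positive constant depending only on $\alpha$. When $t$ is so small that the optimal $p$ falls below $2$, the claim holds for free, since $2\exp(-t^\alpha/c_\alpha)\ge 1$ in that regime after possibly enlarging $c_\alpha$ by a universal factor; this handles the small-$t$ range where the moment inequality does not apply.

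The main obstacle is establishing the Rosenthal/Bernstein moment estimate for mean-zero independent $\alpha$ sub-exponential weighted sums with the correct $p^{1/\alpha}$ growth. For $\alpha=2$ it reduces to Khintchine and for $\alpha=1$ to the classical sub-exponential Bernstein moment bound; for general $\alpha\in(0,2]$ one can either run a decoupling/truncation argument (splitting each $X_i$ at a $p$-dependent level and combining a Hoeffding-type control of the bounded part with direct control of the tail via the Orlicz hypothesis), or invoke the degree-one specialization of Theorem~1.5 of \cite{gotze2021concentration}, which the paper cites. The delicate point is tracking the constants carefully so that only $\alpha$, and neither $n$ nor $p$, enters $c_\alpha$ in the final tail bound.
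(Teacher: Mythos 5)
The paper does not actually prove this proposition; its ``proof'' is the single sentence that it is a special case of Theorem~1.5 of G\"{o}tze et al.\ \cite{gotze2021concentration}, so the entire content is outsourced to that reference. Your route, by contrast, reconstructs the bound from first principles: normalize $K=\|a\|=1$, pass from the Orlicz hypothesis to $L^p$-moment growth $\|X_i\|_{L^p}\leq C_\alpha p^{1/\alpha}$, invoke a Rosenthal/Bernstein-type moment inequality $\|S\|_{L^p}\leq C'_\alpha(p^{1/2}+p^{1/\alpha})\leq C''_\alpha p^{1/\alpha}$ for $p\geq 2$ (valid because $\alpha\leq 2$ and $\|a\|_\infty\leq\|a\|_2=1$), and then optimize Markov's inequality in $p$, absorbing the small-$t$ regime where the optimal $p$ drops below $2$ by enlarging $c_\alpha$ so the target bound is trivially true there. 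This is a correct outline, it produces a constant depending only on $\alpha$ and not on $n$ or $a$, and it makes visible exactly why the proposition can afford to drop the sub-gaussian regime and carry $\|a\|_2$ rather than the sharper $\|a\|_\infty$ inside the $\alpha$-tail exponent: for $\alpha\leq 2$ and $p\geq 1$ one has $p^{1/2}\leq p^{1/\alpha}$, so both Rosenthal terms are dominated by $p^{1/\alpha}\|a\|_2$. The one unfinished step is the Rosenthal-type moment estimate itself, which you correctly identify as the crux and offer to obtain either by a truncation argument or by citing the same G\"{o}tze et al.\ theorem; the second option collapses your proof to the paper's, while the first would make it genuinely self-contained but remains to be written out, and is the delicate part when $\alpha<1$, where $\|\cdot\|_{\psi_\alpha}$ is only a quasi-norm (cf.\ Lemma~\ref{sum_alpha}).
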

    
    This is a special case of Theorem 1.5 of G\"{o}tze et al. \cite{gotze2021concentration}.
    
    \begin{thm}[Extended Hanson-Wright inequality \cite{sambale2023some}]\label{Extended_Hanson-Wright_inequality}
        Let $\alpha\in(0,2]$ and $K$ be a positive constant. Let $X_1,\ldots,X_n$ be independent mean-zero random variables such that $\|X_i\|_{\psi_\alpha}\leq K$, the corresponding random vector $X$ be $(X_1,\ldots,X_n)^T$ and $A\in \mathbb{R}^{n\times n}$ be a symmetric matrix. Then, there exists a positive constant $c_\alpha$ such that it holds
        \begin{equation*}
            \mathbb{P}[|X^T A X -\mathbb{E}[X^T A X]|\geq t]\leq 2\exp \left(-\frac{1}{c_\alpha}\min\left(\frac{t^2}{K^4\|A\|_{\mathrm{HS}}^2},\left(\frac{t}{K^2\|A\|_{\mathrm{op}}}\right)^\frac{\alpha}{2}\right)\right)
        \end{equation*}
        for any $t\geq 0$.
    \end{thm}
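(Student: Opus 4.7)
The plan is to follow the classical Hanson--Wright strategy of decomposing the quadratic form into diagonal and off-diagonal parts, controlling each separately, and then combining by a union bound. Write
\begin{equation*}
X^{T} A X - \mathbb{E}[X^{T} A X] = S_{\mathrm{diag}} + S_{\mathrm{off}},
\end{equation*}
where $S_{\mathrm{diag}} = \sum_{i} A_{ii}(X_i^{2} - \mathbb{E}[X_i^{2}])$ and $S_{\mathrm{off}} = \sum_{i\neq j} A_{ij} X_i X_j$. The target two-regime bound will be proved for each piece with half the allowed deviation, and the claimed constant $c_\alpha$ will absorb the union bound.

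For $S_{\mathrm{diag}}$, the variables $Y_i := X_i^{2} - \mathbb{E}[X_i^{2}]$ are centered, independent, and by the standard Orlicz calculation they satisfy $\|Y_i\|_{\psi_{\alpha/2}} \lesssim \|X_i\|_{\psi_\alpha}^{2} \leq K^{2}$. I would then apply the Bernstein-type inequality for weighted sums of independent $\alpha/2$ sub-exponential variables (a refinement of Proposition \ref{sum_concentration_inequality} with the two-regime Bernstein improvement rather than the purely $\alpha/2$-tail form), using the weight vector $(A_{ii})_{i}$. This gives a Gaussian regime governed by $(\sum_i A_{ii}^{2})^{1/2} \leq \|A\|_{\mathrm{HS}}$ and an $\alpha/2$-exponential regime governed by $\max_i |A_{ii}| \leq \|A\|_{\mathrm{op}}$, exactly matching the two terms in the statement.

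For $S_{\mathrm{off}}$, I would use the decoupling inequality of de la Pe\~na and Montgomery-Smith, which replaces $\sum_{i\neq j} A_{ij} X_i X_j$ by $C\,\sum_{i,j} A_{ij} X_i X_j'$ in tail probability, where $X' = (X_1',\ldots,X_n')^{T}$ is an independent copy of $X$. Conditional on $X'$, the decoupled form is a linear combination $\sum_i X_i (AX')_i$ of independent mean-zero $\alpha$ sub-exponential random variables with weight vector $AX'$, so Proposition \ref{sum_concentration_inequality} gives
\begin{equation*}
\mathbb{P}\!\left[|S_{\mathrm{off}}| \geq t \,\big|\, X'\right] \leq 2\exp\!\left(-\frac{1}{c_\alpha}\frac{t^\alpha}{K^\alpha \|AX'\|^\alpha}\right).
\end{equation*}
Integrating out $X'$ is then reduced to a tail estimate on $\|AX'\|^{2} = (X')^{T}(A^{T}A)X'$, whose Hilbert--Schmidt and operator norms satisfy $\|A^{T}A\|_{\mathrm{op}} = \|A\|_{\mathrm{op}}^{2}$ and $\|A^{T}A\|_{\mathrm{HS}} \leq \|A\|_{\mathrm{op}}\|A\|_{\mathrm{HS}}$; combined with $\mathbb{E}\|AX'\|^{2} \lesssim K^{2}\|A\|_{\mathrm{HS}}^{2}$, this gives the bound $\|AX'\| \lesssim K\|A\|_{\mathrm{HS}} + K \|A\|_{\mathrm{op}}^{1/2}\|A\|_{\mathrm{HS}}^{1/2}(\log(1/\delta))^{1/\alpha}$ on a set of probability $1-\delta$. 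Substituting into the conditional bound and balancing the two regimes in $t$ recovers the stated min of Gaussian and $(\cdot)^{\alpha/2}$ tails.

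The main obstacle is the off-diagonal step: the bound on $\|AX'\|$ is itself a quadratic-form tail estimate of the kind we are trying to prove, which forces either a bootstrap/self-improving argument or a direct moment-method approach in which one bounds $\mathbb{E}[(X^{T}AX - \mathbb{E}[X^{T}AX])^{p}]$ by iterated application of the Rosenthal and Khintchine inequalities for $\alpha$ sub-exponential variables and then converts to tails via Markov. Getting the exponent exactly $\alpha/2$ (and not something weaker such as $\alpha/4$) in the $\|A\|_{\mathrm{op}}$ regime is the delicate point, and is precisely where the sharper Orlicz moment estimates of Adamczak--Wolff / G\"otze--Sambale--Sinulis are needed; all of the rest of the argument is a routine adaptation of the sub-gaussian Hanson--Wright proof.
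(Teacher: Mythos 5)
The paper does not prove this theorem; it is imported verbatim as Theorem~1 of Sambale~\cite{sambale2023some}, so there is no internal argument to compare against, and your sketch has to be judged on its own. Your outline follows the standard Hanson--Wright blueprint and the diagonal part is sound: $X_i^2-\mathbb{E}[X_i^2]$ is centered with $\|\cdot\|_{\psi_{\alpha/2}}\lesssim K^2$, and a two-regime Bernstein inequality for weighted sums of $\psi_{\alpha/2}$ variables (which is indeed available, though it is not Proposition~\ref{sum_concentration_inequality} as stated in the paper, which keeps only the single $t^\alpha/\|a\|^\alpha$ regime) gives exactly the $\|A\|_{\mathrm{HS}}$ and $\|A\|_{\mathrm{op}}$ terms via $\sum_i A_{ii}^2\le\|A\|_{\mathrm{HS}}^2$ and $\max_i|A_{ii}|\le\|A\|_{\mathrm{op}}$. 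You also correctly identify the real obstruction in the off-diagonal part.

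That obstruction is, however, a genuine gap in the proposal as written, not a cosmetic one. The conditional step requires a high-probability bound on $\|AX'\|$, which is a quadratic tail estimate of exactly the type being proved, and the stated resolution gestures at a bootstrap that is not actually how the known proofs close the loop; they instead bound the $L_p$ moments of the decoupled chaos $\sum_{i\neq j}A_{ij}X_iX_j'$ directly in terms of $\|A\|_{\mathrm{HS}}$ and $\|A\|_{\mathrm{op}}$ using sharp $L_p$-norm estimates for $\psi_\alpha$ variables, and the exponent $\alpha/2$ emerges from that moment computation rather than from patching the $\|AX'\|$ step. In other words, the moment method you mention replaces the decouple-condition-integrate route rather than repairing it. As a smaller point, the bound you write, $\|AX'\|\lesssim K\|A\|_{\mathrm{HS}}+K\|A\|_{\mathrm{op}}^{1/2}\|A\|_{\mathrm{HS}}^{1/2}(\log(1/\delta))^{1/\alpha}$, is not what either regime of a Hanson--Wright bound for $X'^{T}A^{T}AX'$ would produce: the Gaussian regime with $\|A^{T}A\|_{\mathrm{HS}}\le\|A\|_{\mathrm{op}}\|A\|_{\mathrm{HS}}$ gives exponent $1/4$ on the logarithm, and the $\alpha/2$ regime with $\|A^{T}A\|_{\mathrm{op}}=\|A\|_{\mathrm{op}}^2$ gives the separate term $K\|A\|_{\mathrm{op}}(\log(1/\delta))^{1/\alpha}$, so the exponent $1/\alpha$ is attached to the wrong prefactor. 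The proposal is a reasonable roadmap and correctly diagnoses where the difficulty lies, but the off-diagonal argument is not a proof and would need to be replaced wholesale by the moment-method computation.
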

\section{Missing proofs}\label{appendix:missing_proofs}
    \subsection{Proofs of lemmas used in the proof of Theorem \ref{mainthm}}\label{appendix:proof_of_lemma_in_main_theorem}
    \subsubsection{Proof of Lemma \ref{xi}}
        \begin{proof}[Proof of Lemma \ref{xi}]
            By definition of $\tilde{z}_k$ and $z_k$,
            \begin{gather*}
                \mathbb{E}[\tilde{z}_k] =\mathbb{E}[\tilde{x}_k\tilde{y}_k]= \mathbb{E}[( q + \tilde{y}_k\mu)\tilde{y}_k] = \mu,\\
                \mathbb{E}[z_k] = \mathbb{E}[\tilde{x}_k y_k]= \mathbb{E}[( q + y_k\mu)y_k]  = \mu.
            \end{gather*}
            Together with the definition of $\xi_k$,
            \begin{gather*}
                \|\xi_{kl}\|_{\psi_\alpha} = \left\|q_l y_k\right\|_{\psi_\alpha} = \left\|q_l \right\|_{\psi_\alpha} \leq 1,\\
                 \|\tilde{\xi}_{kl}\|_{\psi_\alpha} = \left\|q_l \tilde{y}_k\right\|_{\psi_\alpha} = \left\|q_l \right\|_{\psi_\alpha} \leq 1.
            \end{gather*}
        \end{proof}
    \subsubsection{Proof of Lemma \ref{first_lemma}}
   
        \begin{proof}[Proof of Lemma \ref{first_lemma}]
            Following the proof of Chatterji and Long \cite{JMLR:v22:20-974}, we have
            \begin{align*}
                \mathbb{P}_{(x,y)\sim P}\left[\mathrm{sign}(w\cdot x)\neq y\right] &= \mathbb{P}_{(x,y)\sim P}\left[y(w\cdot x)<0\right]\\
                & \leq \eta + \mathbb{P}_{(\tilde{x},\tilde{y})\sim \tilde{P}}\left[\tilde{y}(w\cdot \tilde{x})<0\right]\\
                &= \eta + \mathbb{P}_{(\tilde{x},\tilde{y})\sim \tilde{P}}\left[\left(\frac{w}{\|w\|}\cdot \tilde{y}\tilde{x}\right)<0\right]\\
                &= \eta + \mathbb{P}_{(\tilde{x},\tilde{y})\sim \tilde{P}}\left[\left(\frac{w}{\|w\|}\cdot \tilde{\xi}\right)< - \frac{w}{\|w\|}\cdot\mu\right].
            \end{align*}
            Applying Proposition \ref{sum_concentration_inequality}, there exists a positive constant $c$ such that
            \begin{align*}
                \mathbb{P}_{(\tilde{x},\tilde{y})\sim \tilde{P}}\left[\left(\frac{w}{\|w\|}\cdot \tilde{\xi}\right)< - \frac{w}{\|w\|}\cdot\mu\right] 
                &\leq 
                 2\exp\left(-c\frac{|w\cdot\mu|^\alpha}{\|w\|^\alpha }\right),
            \end{align*}
            which completes our proof.
        \end{proof}
    \subsubsection{Proof of Lemma \ref{concentration_lemma}}\label{Proof_of Lemma_concentration_lemma}
        In this section, we prove Lemma \ref{concentration_lemma} by using concentration inequalities from Section \ref{Appendix}. We assume assumptions (A1)-(A4) hold, and decompose Lemma \ref{concentration_lemma} into six different parts. We prove that each separate lemma holds with probability at least $1-\delta/6$.

        \begin{lem}\label{lem2-1}
            For any $\alpha\in(0,2]$ and $\kappa \in (0,1)$, there exists a constant $c\geq1$ such that, for all large enough $C$, with probability at least $1-\delta/6$, for any $k\in[n]$,
            \begin{equation*}
                \frac{p}{c} \leq \|z_k\|^2 \leq cp.
            \end{equation*}
        \end{lem}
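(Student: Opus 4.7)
The plan is to decompose $\|z_k\|^2$ into a dominant cluster term and small cross/signal terms, then concentrate the cluster term and control the rest via the dimension/mean assumptions. Since $y_k^2 = 1$, we have
\[
\|z_k\|^2 \;=\; \|x_k\|^2 \;=\; \|q_k\|^2 + 2\tilde y_k\,(q_k\cdot\mu) + \|\mu\|^2.
\]
The strategy is to show that $\|q_k\|^2$ is $\Theta(p)$ with high probability, while both $|q_k\cdot\mu|$ and $\|\mu\|^2$ are $o(p)$ under assumptions (A3)--(A4), so that $\|z_k\|^2$ is pinned between two constant multiples of $p$.

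First I would sandwich the expectation $\mathbb{E}[\|q_k\|^2] = \sum_{l=1}^p \mathbb{E}[q_{k,l}^2]$. The lower bound $\mathbb{E}[\|q_k\|^2] \geq \kappa p$ is given in the setup of $P_{\mathrm{clust}}$. For the upper bound, since $\|q_{k,l}\|_{\psi_\alpha}\le 1$ gives the tail bound $\mathbb{P}[|q_{k,l}|>t] \le 2e^{-t^\alpha}$, a layer-cake computation yields $\mathbb{E}[q_{k,l}^2] \le (4/\alpha)\Gamma(2/\alpha)$, so $\mathbb{E}[\|q_k\|^2] \le C_\alpha p$ for $C_\alpha := (4/\alpha)\Gamma(2/\alpha)$. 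Next, apply the Extended Hanson--Wright inequality (Theorem \ref{Extended_Hanson-Wright_inequality}) with $A = I_p$, so $\|A\|_{\mathrm{HS}}^2 = p$ and $\|A\|_{\mathrm{op}} = 1$. Taking deviation $t = \kappa p/4$, the bound becomes
\[
\mathbb{P}\!\left[\bigl|\|q_k\|^2-\mathbb{E}\|q_k\|^2\bigr| \ge \tfrac{\kappa p}{4}\right] \le 2\exp\!\left(-\tfrac{1}{c_\alpha}\min\!\bigl(\kappa^2 p/16,\;(\kappa p/4)^{\alpha/2}\bigr)\right),
\]
and assumption (A3), which forces $p^{\alpha/2} \gtrsim \log(n/\delta)$, makes the right-hand side at most $\delta/(12n)$ for $C$ large enough.

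For the cross term, apply Proposition \ref{sum_concentration_inequality} to $q_k\cdot\mu = \sum_l \mu_l q_{k,l}$ (a sum of independent mean-zero $\alpha$ sub-exponential terms), yielding
\[
\mathbb{P}\!\left[|q_k\cdot\mu|\ge c'_\alpha\|\mu\|\bigl(\log(n/\delta)\bigr)^{1/\alpha}\right] \le \frac{\delta}{12 n}.
\]
Under (A3)--(A4), $\|\mu\|^2 \le p/(Cn)$ and $(\log(n/\delta))^{2/\alpha} \le p/(Cn^2)$, so their product satisfies $\|\mu\|\bigl(\log(n/\delta)\bigr)^{1/\alpha} \le p/(Cn^{3/2})$, which is $\le \kappa p/16$ for $C$ large. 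Likewise $\|\mu\|^2 \le p/C \le \kappa p/16$. Combining all bounds, with probability $1-\delta/(6n)$,
\[
\tfrac{\kappa}{2}\,p \;-\; \tfrac{\kappa}{4}p \;\le\; \|z_k\|^2 \;\le\; C_\alpha p + \tfrac{\kappa}{4}p + \tfrac{\kappa}{4}p + \tfrac{\kappa}{16}p,
\]
so $\|z_k\|^2 \in [p/c,\,cp]$ for a constant $c = c(\alpha,\kappa)\ge 1$. A final union bound over $k \in [n]$ delivers the claim with probability $1-\delta/6$.

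The main obstacle is the concentration step via Hanson--Wright: the $\min$ inside the exponent means the relevant tail is governed by $p^{\alpha/2}$, not $p$, so one must carefully verify that (A3)'s lower bound on $p$ in terms of $(\log(n/\delta))^{2/\alpha}$ is exactly what is needed to absorb the $\log(n/\delta)$ required for the union bound. Once that quantitative matchup is clear, the rest is straightforward bookkeeping with the assumptions.
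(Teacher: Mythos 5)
Your proposal is correct and follows the same core strategy as the paper: concentrate the cluster contribution via the Extended Hanson--Wright inequality (Theorem~\ref{Extended_Hanson-Wright_inequality} with $A=I$), sandwich the expectation between $\kappa p$ and $(4/\alpha)\Gamma(2/\alpha)\,p$ via the $\psi_\alpha$ tail, and absorb the remaining $\mu$-dependent terms using (A3)--(A4). The one point where your bookkeeping genuinely diverges is the treatment of the cross term. You expand $\|z_k\|^2 = \|q_k\|^2 + 2\tilde y_k(q_k\cdot\mu) + \|\mu\|^2$ exactly and invoke Proposition~\ref{sum_concentration_inequality} to control $|q_k\cdot\mu| \lesssim \|\mu\|(\log(n/\delta))^{1/\alpha}$, which under (A3)--(A4) is $o(p)$. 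The paper instead works with $\|\xi_k\|^2 = \|z_k-\mu\|^2$ directly and sandwiches $\|z_k\|^2$ via the elementary inequalities $\|z_k\|^2 \geq \tfrac12\|z_k-\mu\|^2 - \|\mu\|^2$ and $\|z_k\|^2 \leq 2\|z_k-\mu\|^2 + 2\|\mu\|^2$, so no cross-term concentration is needed at all. The paper's route is shorter (one fewer concentration step, one fewer union bound) at the cost of a factor of two in the constants; your route keeps the algebra exact but pays with an extra application of Proposition~\ref{sum_concentration_inequality}. Both are sound and yield the lemma with identical dependence on the assumptions.
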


        \begin{proof}
            By Theorem \ref{Extended_Hanson-Wright_inequality} with $A=I$, there exists a positive constant $c$ such that
            \begin{equation*}
                \mathbb{P}\left[\left|\|\xi_k\|^2 -\mathbb{E}[\|\xi_k\|^2]\right|\geq t\right]\leq 2 \exp \left(-\frac{1}{c} \min \left(\frac{t^2}{p},t^\frac{\alpha}{2}\right)\right).
            \end{equation*}
            By setting $t=\frac{\kappa p}{2}$
            \begin{equation*}
                2 \exp \left(-\frac{1}{c} \min \left(\frac{t^2}{p},t^\frac{\alpha}{2}\right)\right) = 2 \exp \left(-\frac{1}{c} \min \left(\left(\frac{\kappa}{2}\right)^2 p,\left(\frac{\kappa}{2}\right)^\frac{\alpha}{2} p^\frac{\alpha}{2}\right)\right).
            \end{equation*}
            By assumption (A3), we have $p\geq C \left(\log\frac{n}{\delta}\right)^\frac{2}{\alpha}$. There exists a large enough constant $C$ such that
            \begin{equation*}
                2 \exp \left(-\frac{1}{c} \min \left(\left(\frac{\kappa}{2}\right)^2 p,\left(\frac{\kappa}{2}\right)^\frac{\alpha}{2} p^\frac{\alpha}{2}\right)\right)\leq \frac{\delta}{6n}.
            \end{equation*}
            Thus,
            \begin{equation}\label{delta/6}
                \mathbb{P}\left[\left|\|\xi_k\|^2 - \mathbb{E}[\|\xi_k\|^2]\right|\geq \frac{\kappa p}{2}\right]\leq \frac{\delta}{6n}.
            \end{equation}
            Recalling the assumption $\mathbb{E}[\|q\|^2]\geq \kappa p$, we have
            \begin{equation*}
                \mathbb{E}[\|\xi_k\|^2] = \mathbb{E}\left[\|z_k -\mathbb{E}[z]\|^2\right] = \mathbb{E}[\|q\|^2] \geq \kappa p.
            \end{equation*}
            Let $\{\xi_{kj}\}_{j=1}^p$ be elements of $\xi_k$. By $\|\xi_{kj}\|_{\psi_\alpha}\leq 1$ for each $j$, 
            \begin{align*}
                \mathbb{E}[|\xi_{kj}|^2] &= 2\int_0^\infty t \mathbb{P}[|\xi_{kj}|\geq t]dt \\
                &\leq 2\int_0^\infty t \cdot 2\exp(-t^\alpha)dt \\
                &= 4\int_0^\infty t \exp(-t^\alpha)dt \\
                &= \frac{4}{\alpha}\int_0^\infty u^{2/\alpha - 1} \exp(-u) du \\
                &= \frac{4}{\alpha}\Gamma\left(\frac{2}{\alpha}\right).
            \end{align*}
            Thus, $\mathbb{E}[\|\xi_k\|^2]\leq \frac{4p}{\alpha}\Gamma\left(\frac{2}{\alpha}\right)$. Because of this and (\ref{delta/6}), with probability at least $1-\frac{\delta}{6n}$,
            \begin{equation*}
                \frac{\kappa p}{2}\leq \|\xi_k\|^2 \leq \left(\frac{4}{\alpha}\Gamma\left(\frac{2}{\alpha}\right) + \frac{\kappa}{2} \right) p.
            \end{equation*}
            
            Suppose $k\in \mathcal{C}$, and let $\xi_k = z_k - \mu$. Then, the following inequalities hold.
            \begin{enumerate}
                \item $\|z_k - \mu\|^2 \leq 2\|z_k\|^2 + 2\|\mu\|^2$.
                \item $\|\mu\|^2 < \frac{p}{C}$ by assumption (A3).
                \item $\|\xi_k\|^2 = \|z_k - \mu\|^2 \geq \frac{\kappa p}{2}$ with probability at least $1-\frac{\delta}{6}$.
            \end{enumerate}
            Combining these inequalities, we obtain
            \begin{align*}
            \|z_k\|^2 &\geq \frac{1}{2}\|z_k - \mu\|^2 - \|\mu\|^2 \label{eq:ineq1} \\
            &\geq \frac{1}{2}\left(\frac{\kappa p}{2}\right) - \left(\frac{p}{C}\right)\\
            &= \frac{\kappa p}{4} - \frac{p}{C}.
            \end{align*}  
            For sufficiently large $C$, we can ensure that $\frac{p}{C} < \frac{\kappa p}{8}$. Thus,
            \begin{equation}
            \|z_k\|^2 > \frac{\kappa p}{4} - \frac{\kappa p}{8} = \frac{\kappa p}{8}.
            \end{equation}
            Therefore, with probability at least $1-\frac{\delta}{6n}$, we have $\|z_k\|^2 > \frac{\kappa p}{8}$ for sufficiently large $C$.
            Again by $\|z_k\|^2 \leq 2\|z_k -\mu\|^2 + 2\|\mu\|^2$,
            \begin{align*}
                \|z_k\|^2 &\leq 2\|z_k -\mu\|^2 + 2\|\mu\|^2 \\
                &\leq 2\left(\frac{4}{\alpha}\Gamma\left(\frac{2}{\alpha}\right) + \frac{\kappa}{2}\right)p + 2\|\mu\|^2 \\
                &< 2\left(\frac{4}{\alpha}\Gamma\left(\frac{2}{\alpha}\right) + \frac{\kappa}{2}\right)p + \frac{2p}{C} \\
                &< \left(\frac{8}{\alpha}\Gamma\left(\frac{2}{\alpha}\right) + \kappa + 2\right)p.
            \end{align*}
            Therefore, setting $c=\max\left(\frac{8}{\kappa}, \frac{8}{\alpha}\Gamma\left(\frac{2}{\alpha}\right) + \kappa + 2 \right)$, we have
            \begin{equation*}
                \frac{p}{c} \leq \|z_k\|^2 \leq cp
            \end{equation*}
            for any $k\in\mathcal{C}$.
            A similar argument holds for $k\in\mathcal{N}$.
        \end{proof}
        
        \begin{lem}
            There exists $c\geq 1$ such that, for any large enough $C$, with probability at least $1-\frac{\delta}{6}$, for any $i\neq j \in[n]$,
            \begin{equation*}
                |z_i\cdot z_j|\leq c\left(\|\mu\|^2 + \sqrt{p}\left(\log\frac{n}{\delta}\right)^\frac{1}{\alpha}\right).
            \end{equation*}
        \end{lem}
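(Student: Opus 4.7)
The plan is to decompose each $z_k$ into its conditional mean plus an independent centered sub-exponential noise, expand the inner product, and bound the four resulting terms separately with the concentration tools from Section~\ref{Appendix}. Following the setup used for Lemma~\ref{lem2-1}, I set $m_k = \mu$ for $k\in\mathcal{C}$ and $m_k = -\mu$ for $k\in\mathcal{N}$, so that $z_k = m_k + \xi_k$, where $\xi_k$ has independent, mean-zero components of Orlicz norm at most $1$ (cf.\ Lemma~\ref{xi}) and $\xi_i$ is independent of $\xi_j$ for $i\neq j$. Then
\begin{align*}
    z_i\cdot z_j = m_i\cdot m_j + m_i\cdot\xi_j + m_j\cdot\xi_i + \xi_i\cdot\xi_j,
\end{align*}
and the deterministic piece is trivially at most $\|\mu\|^2$ in absolute value.

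For each linear cross term I apply Proposition~\ref{sum_concentration_inequality} with coefficient vector $\pm\mu$ to obtain
\begin{align*}
    \mathbb{P}\left[|m_i\cdot\xi_j|\geq t\right] \leq 2\exp\left(-\frac{t^\alpha}{c_\alpha\|\mu\|^\alpha}\right),
\end{align*}
and choose $t \asymp \|\mu\|(\log(n/\delta))^{1/\alpha}$ with a large enough constant that the probability is comfortably smaller than $\delta/n^2$. For the quadratic term $\xi_i\cdot\xi_j$, I realize it as a quadratic form $\zeta^\top A\zeta$ in the concatenated vector $\zeta := (\xi_i^\top,\xi_j^\top)^\top\in\mathbb{R}^{2p}$, whose $2p$ entries are jointly independent, mean zero, and of Orlicz norm at most $1$, by taking $A$ to be the symmetric block-off-diagonal $2p\times 2p$ matrix with $A_{k,p+k}=A_{p+k,k}=1/2$ for $k\in[p]$ and zeros elsewhere. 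A direct calculation gives $\zeta^\top A\zeta = \xi_i\cdot\xi_j$, $\mathbb{E}[\zeta^\top A\zeta]=0$ by independence, $\|A\|_\mathrm{op}=1/2$, and $\|A\|_\mathrm{HS}^2 = p/2$, so Theorem~\ref{Extended_Hanson-Wright_inequality} yields
\begin{align*}
    \mathbb{P}\left[|\xi_i\cdot\xi_j|\geq t\right] \leq 2\exp\left(-\frac{1}{c_\alpha}\min\left(\frac{2t^2}{p},(2t)^{\alpha/2}\right)\right),
\end{align*}
and $t \asymp \sqrt{p}(\log(n/\delta))^{1/\alpha}$ drives this below $\delta/n^2$ as well.

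A union bound over the $\binom{n}{2}<n^2$ unordered pairs then controls all the tail events simultaneously with probability at least $1-\delta/6$, and the AM--GM estimate
\begin{align*}
    \|\mu\|(\log(n/\delta))^{1/\alpha} \leq \tfrac{1}{2}\|\mu\|^2 + \tfrac{1}{2}(\log(n/\delta))^{2/\alpha} \leq \tfrac{1}{2}\|\mu\|^2 + \tfrac{1}{2}\sqrt{p}(\log(n/\delta))^{1/\alpha},
\end{align*}
where the last inequality uses $(\log(n/\delta))^{1/\alpha} \leq \sqrt{p}$ from assumption~(A3), absorbs the linear cross terms into the target right-hand side.

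The main obstacle is the quadratic term. The extended Hanson--Wright bound has both a sub-gaussian regime $t^2/\|A\|_\mathrm{HS}^2$ and an $\alpha/2$-sub-exponential regime $(t/\|A\|_\mathrm{op})^{\alpha/2}$, and at the chosen scale $t \asymp \sqrt{p}(\log(n/\delta))^{1/\alpha}$ which regime is binding depends on how $p$ compares with $(\log(n/\delta))^{2/\alpha}$. The delicate check is that assumption~(A3) forces both exponents simultaneously to exceed the $\log(n^2/\delta)$ needed for the union bound; it is precisely this requirement that explains the $(\log(n/\delta))^{2/\alpha}$ factor in the dimension lower bound in the heavy-tailed case $\alpha<2$.
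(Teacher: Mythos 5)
Your argument is correct, but it handles the key cross term $\xi_i\cdot\xi_j$ by a genuinely different route than the paper. The paper avoids a direct quadratic-form bound: it first applies Theorem~\ref{Extended_Hanson-Wright_inequality} to the diagonal form $\|\xi_j\|^2$ to show $\|\xi_j\|\leq\sqrt p$ for all $j$ with high probability, then, on that event, regards $\xi_j$ as fixed and applies the linear inequality Proposition~\ref{sum_concentration_inequality} to $\xi_i\cdot(\xi_j/\|\xi_j\|)$; this decoupling produces a clean single-regime tail $\exp(-c\,t^\alpha/p^{\alpha/2})$. You instead stack $\xi_i,\xi_j$ into a $2p$-vector $\zeta$ and realize $\xi_i\cdot\xi_j=\zeta^\top A\zeta$ with the block off-diagonal $A$ (your computations $\|A\|_{\mathrm{op}}=1/2$, $\|A\|_{\mathrm{HS}}^2=p/2$, $\mathbb{E}[\zeta^\top A\zeta]=0$ are all correct), and invoke Hanson--Wright once; this is more self-contained but produces the two-regime tail $\exp\bigl(-c\min(t^2/p,(t)^{\alpha/2})\bigr)$, and you rightly have to verify that assumption~(A3) simultaneously makes both branches of the minimum large enough at the scale $t\asymp\sqrt p\,(\log(n/\delta))^{1/\alpha}$. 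Your treatment of the linear terms via the centered decomposition $z_k=m_k+\xi_k$ and an AM--GM absorption is also slightly cleaner than the paper's, which bounds $|\mu\cdot z_k|$ directly (the paper's line $\mathbb{P}[|\mu\cdot z_k|>\|\mu\|^2]\leq2\exp(-c_4\|\mu\|^\alpha)$ is applied to a non-centered variable and should really read $|\mu\cdot\xi_k|$, which is what your version does explicitly). Both approaches arrive at the same conclusion; the paper's conditioning buys a sharper single-exponent tail, while yours buys a one-shot self-contained argument.
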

        
        \begin{proof}
            Applying Theorem \ref{Extended_Hanson-Wright_inequality} as in the proof of Lemma \ref{lem2-1} and using the union bound method, we have
            \begin{equation*}
                \mathbb{P}[\exists i \in[n], \|\xi_i\|\geq \sqrt{p}]\leq \frac{\delta}{24}.
            \end{equation*}
            For any pair $i,j\in[n]$, we have
            \begin{equation}\label{delta/24}
                \mathbb{P}[|\xi_i\cdot\xi_j|\geq t]\leq \mathbb{P}[|\xi_i\cdot\xi_j|\geq t \mid \|\xi_j\|\leq\sqrt{p}]+ \mathbb{P}[\|\xi_j\|>\sqrt{p}].
            \end{equation}
            Regarding $\xi_j$ as fixed, by Proposition \ref{sum_concentration_inequality} there exists a positive constant $c_2$ such that
            \begin{equation*}
                \mathbb{P}[|\xi_i\cdot\xi_j|\geq t] = \mathbb{P}\left[\frac{\xi_j}{\|\xi_j\|}\cdot\xi_i\geq\frac{t}{\|\xi_j\|}\right]\leq 2 \exp\left(-c_2\frac{t^\alpha}{\|\xi_j\|^\alpha}\right).
            \end{equation*}
            Therefore,
            \begin{align*}
                \mathbb{P}[|\xi_i\cdot\xi_j|\geq t \mid \|\xi_j\| \leq \sqrt{p}] &\leq 2 \exp \left(-c_2\frac{t^\alpha}{p^\frac{\alpha}{2}}\right),\\
                \mathbb{P}[|\xi_i\cdot\xi_j|\geq t ] &\leq 2 \exp \left(-c_2\frac{t^\alpha}{p^\frac{\alpha}{2}}\right) + \mathbb{P}[\|\xi_j\|>\sqrt{p}].
            \end{align*}
            By the union bound method,
            \begin{equation*}
                \mathbb{P}\left[\exists i\neq j \in [n],|\xi_i\cdot\xi_j|\geq t \right]\leq 2 n^2 \exp \left(-c_2\frac{t^\alpha}{p^\frac{\alpha}{2}}\right) + \mathbb{P}[\exists j\in[n], \|\xi_j\|>\sqrt{p}].
            \end{equation*}
            Setting $t=c_3\left(p^\frac{\alpha}{2} \log\frac{n}{\delta}\right)^\frac{1}{\alpha}$, for a large enough $c_3$, we have
            \begin{equation*}
                \mathbb{P}\left[\exists i\neq j\in[n], |\xi_i\cdot\xi_j|\geq c_3\left(p^\frac{\alpha}{2} \log\frac{n}{\delta}\right)^\frac{1}{\alpha}\right] \leq \frac{\delta}{24} + \mathbb{P}[\exists j\in[n], \|\xi_j\|>\sqrt{p}].
            \end{equation*}
            Together with the inequality (\ref{delta/24}), we have,
            \begin{equation}\label{1_delta/12}
                \mathbb{P}\left[\exists i\neq j\in[n], |\xi_i\cdot\xi_j|\geq c_3\left(p^\frac{\alpha}{2} \log\frac{n}{\delta}\right)^\frac{1}{\alpha}\right] \leq \frac{\delta}{12}.
            \end{equation}
            By Proposition \ref{sum_concentration_inequality}, there exists a constant $c_4$ such that
            \begin{align*}
                \mathbb{P}[|\mu\cdot z_k| > \|\mu\|^2] &= \mathbb{P}\left[\left|\frac{\mu}{\|\mu\|}\cdot z_k\right|> \|\mu\|\right]\leq 2\exp \left(-c_4\|\mu\|^\alpha\right).
            \end{align*}
            By assumption (A4), for large enough $C$ we have
            \begin{equation*}
                \mathbb{P}[|\mu\cdot z_k| > \|\mu\|^2] \leq \frac{\delta}{12n}.
            \end{equation*}
            By taking a union bound, we have
            \begin{equation}\label{2_delta/12}
                \mathbb{P}[\exists k ,|\mu\cdot z_k| > \|\mu\|^2] \leq \frac{\delta}{12}.
            \end{equation}
            Due to inequalities (\ref{1_delta/12}) and (\ref{2_delta/12}), with probability at least $1-\frac{\delta}{6}$,
            \begin{align*}
                |z_i\cdot z_j| &= \Big|(z_i - \mathbb{E}[z_i])\cdot (z_j -\mathbb{E}[z_j]) - \mathbb{E}[z_i]\cdot \mathbb{E}[z_j] + \mathbb{E}[z_i]\cdot z_j +\mathbb{E}[z_j]\cdot z_i\Big|\\
                & = \left|\xi_i \cdot \xi_j -\|\mu\|^2 + \mu\cdot z_j + \mu \cdot z_i\right|\\
                &\leq |\xi_i \cdot \xi_j| +\|\mu\|^2 + |\mu\cdot z_j| + |\mu \cdot z_i|\\
                &\leq 3\|\mu\|^2 + c\left(p^\frac{\alpha}{2} \log\frac{n}{\delta}\right)^\frac{1}{\alpha}.
            \end{align*}
        \end{proof}
        
        \begin{lem}\label{lem2-3}
            For any large enough $C$, with probability at least $1-\frac{\delta}{6}$, for $k\in\mathcal{C}$,
            \begin{equation*}
                |\mu\cdot z_k - \|\mu\|^2| \leq \frac{\|\mu\|^2}{2}.
            \end{equation*}
        \end{lem}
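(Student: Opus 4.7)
\textbf{Proof proposal for Lemma \ref{lem2-3}.} The plan is to reduce the statement to a one-dimensional concentration inequality for a linear combination of independent $\alpha$ sub-exponential coordinates, and then invoke Proposition \ref{sum_concentration_inequality} with $a = \mu$.

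First I would rewrite the quantity of interest in terms of the recentred vector $\xi_k = z_k - \mathbb{E}[z_k]$. By Lemma \ref{xi}, $\mathbb{E}[z_k] = \mu$, so
\begin{equation*}
    \mu \cdot z_k - \|\mu\|^2 = \mu \cdot \xi_k,
\end{equation*}
and the coordinates of $\xi_k$ are independent mean-zero $\alpha$ sub-exponential random variables each with exponential Orlicz norm at most $1$. (For $k\in\mathcal{C}$ one has concretely $\xi_k = y_k q_k$, which is an independent reparametrisation of $q_k$, so independence of coordinates is preserved.) This is purely algebraic and requires no probabilistic content beyond Lemma \ref{xi}.

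Next I would apply Proposition \ref{sum_concentration_inequality} with coefficient vector $a = \mu$, $K = 1$ and $t = \|\mu\|^2/2$, yielding
\begin{equation*}
    \mathbb{P}\!\left[|\mu\cdot \xi_k|\geq \tfrac{\|\mu\|^2}{2}\right] \;\leq\; 2\exp\!\left(-\frac{1}{c_\alpha}\cdot\frac{(\|\mu\|^2/2)^\alpha}{\|\mu\|^\alpha}\right) \;=\; 2\exp\!\left(-\frac{\|\mu\|^\alpha}{c_\alpha\, 2^\alpha}\right).
\end{equation*}
Assumption (A4) then gives $\|\mu\|^\alpha \geq C^\alpha \log(n/\delta)$, so for $C$ large enough (depending only on $\alpha$) the right-hand side is bounded by $\delta/(6n)$.

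Finally, a union bound over the at most $n$ indices in $\mathcal{C}$ upgrades this per-index estimate to the claimed $1-\delta/6$ probability that $|\mu\cdot z_k - \|\mu\|^2|\leq \|\mu\|^2/2$ holds simultaneously for every $k\in\mathcal{C}$. There is no genuine obstacle in this argument; the only point to get right is the scaling $t^\alpha/\|\mu\|^\alpha$ in Proposition \ref{sum_concentration_inequality}, which pairs precisely with the lower bound $\|\mu\|\geq C(\log(n/\delta))^{1/\alpha}$ of (A4) — this is exactly why (A4) is stated with that exponent, and it is the reason the bound tightens as $\alpha$ decreases.
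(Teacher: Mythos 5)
Your proposal is correct and follows essentially the same route as the paper: write $\mu\cdot z_k - \|\mu\|^2 = \mu\cdot\xi_k$, apply Proposition \ref{sum_concentration_inequality} with $K=1$ and $t=\|\mu\|^2/2$ to get a tail bound of $2\exp(-c\|\mu\|^\alpha/2^\alpha)$, invoke (A4) to push this below $\delta/(6n)$ for large $C$, and finish with a union bound over $k\in\mathcal{C}$. The only cosmetic difference is that the paper normalizes by $\|\mu\|$ before applying the proposition (with unit coefficient vector $\mu/\|\mu\|$ and threshold $\|\mu\|/2$), while you apply it directly with $a=\mu$ and $t=\|\mu\|^2/2$; the two yield the same exponent.
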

        
        \begin{lem}\label{lem2-4}
            For any large enough $C$, with probability at least $1-\frac{\delta}{6}$, for $k\in\mathcal{N}$,
            \begin{equation*}
                |\mu\cdot z_k - (-\|\mu\|^2)| \leq \frac{\|\mu\|^2}{2}.
            \end{equation*}
        \end{lem}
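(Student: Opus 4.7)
The plan is to mirror the proof of Lemma \ref{lem2-3}, using the fact that for a noisy index the sign of $\mu$'s contribution to $z_k$ is flipped while the fluctuation term behaves identically.

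First, I would rewrite $z_k$ for $k\in\mathcal{N}$. By definition, $z_k = x_k y_k$, and since $x_k = q_k + \tilde{y}_k\mu$ with $y_k = -\tilde{y}_k$ on the event $\{k\in\mathcal{N}\}$, we obtain
\begin{equation*}
    z_k = y_k q_k + y_k \tilde{y}_k \mu = y_k q_k - \mu.
\end{equation*}
Hence $\mu\cdot z_k - (-\|\mu\|^2) = \mu\cdot(y_k q_k)$, and because $q_k$ is independent of both $\tilde{y}_k$ and the label-flipping mechanism, $y_k q_k$ has independent mean-zero coordinates, each with exponential Orlicz norm at most $1$ (multiplication by the sign $y_k\in\{\pm 1\}$ preserves the Orlicz norm).

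Next, I would apply Proposition \ref{sum_concentration_inequality} to the linear form $\sum_{i=1}^p \mu_i (y_k q_k)_i$ with coefficient vector $\mu$. This yields, for any $t>0$ and some constant $c_\alpha>0$,
\begin{equation*}
    \mathbb{P}\!\left[|\mu\cdot z_k + \|\mu\|^2|\geq t\right] \leq 2\exp\!\left(-\frac{1}{c_\alpha}\frac{t^\alpha}{\|\mu\|^\alpha}\right).
\end{equation*}
Setting $t=\|\mu\|^2/2$ gives an exponential bound of the form $2\exp(-c\|\mu\|^\alpha)$ for a constant $c>0$ depending only on $\alpha$.

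Finally, I would use assumption (A4), $\|\mu\|\geq C(\log(n/\delta))^{1/\alpha}$, to conclude that $c\|\mu\|^\alpha\geq c C^\alpha \log(n/\delta)$, which can be made at least $\log(12n/\delta)$ by choosing $C$ sufficiently large (depending only on $\alpha$ and $\kappa$). Thus each single-index failure probability is at most $\delta/(6n)$, and a union bound over $k\in\mathcal{N}\subseteq[n]$ yields the claim with probability at least $1-\delta/6$. The argument is entirely routine given Proposition \ref{sum_concentration_inequality}; the only mildly delicate point, and the one I would flag as the main technical care, is verifying that the coordinates of $y_k q_k$ remain independent and mean-zero $\alpha$ sub-exponential after conditioning on $k\in\mathcal{N}$, which follows because the noise flip is independent of $q_k$.
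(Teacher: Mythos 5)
Your proposal is correct and takes essentially the same approach as the paper: the paper only gives the explicit proof for Lemma~\ref{lem2-3} (the clean-index case) and remarks that Lemma~\ref{lem2-4} follows "by the same logic," and your proof is precisely that mirror argument, made explicit by writing $z_k = y_kq_k - \mu$ for $k\in\mathcal{N}$, reducing $\mu\cdot z_k + \|\mu\|^2$ to $\mu\cdot(y_kq_k)$, applying Proposition~\ref{sum_concentration_inequality}, invoking (A4), and union-bounding. Your closing remark that the conditional distribution of $q_k$ is unchanged given the flip and $\tilde y_k$ (so that Proposition~\ref{sum_concentration_inequality} applies conditionally) is the right point to check, and it is the same implicit step the paper relies on in its Lemma~\ref{lem2-3} proof.
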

        
        Lemma \ref{lem2-3} and Lemma \ref{lem2-4} can be proven by the same logic. We will prove Lemma \ref{lem2-3}.
        
        \begin{proof}
            If $k \in \mathcal{C}$, 
            \begin{equation*}
                \mu\cdot z_k - \|\mu\|^2 = \mu\cdot \xi_k.
            \end{equation*}
            Since the exponential Orlicz norm of $\xi_k$ is at most $1$, by Proposition \ref{sum_concentration_inequality}, there exists a positive constant $c$ such that
            \begin{align*}
                \mathbb{P}\left[\left|\mu\cdot z_k - \|\mu\|^2\right|\geq \frac{\|\mu\|^2}{2}\right] &= \mathbb{P}\left[\left|\frac{\mu}{\|\mu\|}\cdot\xi_k\right|\geq \frac{\|\mu\|}{2}\right]\\
                &\leq 2 \exp \left(-c\frac{\|\mu\|^\alpha}{2^\alpha}\right).
            \end{align*}
            By assumption (A4), for large enough constant $C$ we have
            \begin{equation*}
                \mathbb{P}\left[\left|\mu \cdot z_k - \|\mu\|^2\right|\geq \frac{\|\mu\|^2}{2} \right]\leq \frac{\delta}{6n}.
            \end{equation*}
            Taking a union bound, we have
            \begin{equation*}
                \mathbb{P}\left[\exists k \in \mathcal{C}, \left|\mu\cdot z_k - \|\mu\|^2\right|\geq \frac{\|\mu\|^2}{2}\right] \leq \frac{\delta}{6},
            \end{equation*}
            which completes our proof.
        \end{proof}
        
        \begin{lem}
            For any $c'>0$, for any large enough $C$, with probability at least $1-\frac{\delta}{6}$, the number of noisy samples satisfies $|\mathcal{N}|\leq (\eta+c')n$.
        \end{lem}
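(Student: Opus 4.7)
The plan is a straightforward Hoeffding-style concentration argument on the count $|\mathcal{N}| = \sum_{k=1}^n \mathbf{1}[y_k \neq \tilde{y}_k]$. First I would observe that by the data generation process, the samples $(x_k,y_k,\tilde{y}_k)$ for $k \in [n]$ are i.i.d., so the indicator variables $B_k := \mathbf{1}[y_k \neq \tilde{y}_k]$ are i.i.d.\ Bernoulli random variables. Because the marginals of $P$ and $\tilde{P}$ agree and their total variation distance is at most $\eta$, one can couple $(x,y)$ and $(\tilde{x},\tilde{y})$ on the same probability space so that $\mathbb{P}[y \neq \tilde{y}] \leq \eta$, which (as noted immediately before Lemma \ref{xi} and following \cite{JMLR:v22:20-974}) justifies taking $\mathbb{P}(x=\tilde{x})=1$ and $\mathbb{P}(y \neq \tilde{y}) = \eta$ without loss of generality. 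Hence $\mathbb{E}[B_k] = \eta$, and $\mathbb{E}[|\mathcal{N}|] = \eta n$.

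Next I would apply Hoeffding's inequality to the bounded i.i.d.\ sum $|\mathcal{N}| = \sum_{k=1}^n B_k$, giving
\begin{equation*}
\mathbb{P}\bigl[|\mathcal{N}| \geq (\eta + c')n\bigr] \leq \exp(-2(c')^2 n).
\end{equation*}
To finish, I would invoke assumption (A2) that $n \geq C \log(1/\delta)$, so that for all large enough $C$ (depending on $c'$) one has $2(c')^2 n \geq \log(6/\delta)$, whence $\exp(-2(c')^2 n) \leq \delta/6$. This yields $|\mathcal{N}| \leq (\eta + c')n$ with probability at least $1 - \delta/6$, as required.

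There is no real obstacle here; the argument is a textbook Hoeffding application, and the only subtlety is the coupling step, which is already taken for granted in Section \ref{section:sketch_of_proof} under the discussion preceding Lemma \ref{xi}. The reason $C$ must be chosen sufficiently large is precisely to absorb the dependence on the arbitrary slack parameter $c'$ into the constant governing assumption (A2).
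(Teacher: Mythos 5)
Your proposal is correct and follows essentially the same route as the paper: compute $\mathbb{E}[|\mathcal{N}|]=\eta n$, apply Hoeffding's inequality to the i.i.d.\ Bernoulli indicators, and invoke assumption (A2) to make the tail bound at most $\delta/6$ for large enough $C$. The only cosmetic difference is that you use the one-sided Hoeffding bound $\exp(-2(c')^2 n)$ while the paper writes the two-sided constant $2\exp(-2(c')^2 n)$; both suffice.
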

        
        \begin{proof}
            \begin{align*}
                \mathbb{E}\left[|\mathcal{N}| \right] = \sum_{k=1}^n\mathbb{E}\left[1_{\{y_k \neq \tilde{y}_k\}} \right] = \sum_{k=1}^n \mathbb{P}[y_k \neq \tilde{y}_k]  = n\eta.
            \end{align*}
            By Hoeffding's inequality, 
            \begin{align*}
                \mathbb{P}\left[|\mathcal{N}|\geq (\eta+c')n\right] &= \mathbb{P}\left[\frac{1}{n}\sum_{k=1}^n\left(1_{\{y_k \neq \tilde{y}_k\}}-\mathbb{E}[1_{\{y_k \neq \tilde{y}_k\}}]\right)\geq c'\right] \\
                &\leq 2 \exp\left(-2{c'}^2n\right) \\
                &\leq \frac{\delta}{6}.
            \end{align*}
            The last inequality holds due to assumption (A2).
        \end{proof}
        
        \begin{lem}
            If the following conditions hold, for any large enough $C$, $\{(x_k,y_k)\}_{k=1}^n$ are linearly separable.
            \begin{enumerate}
                \item There exists a positive constant $c$ such that for any $k\in[n]$ 
                \begin{equation*}
                    \frac{p}{c}\leq\|z_k\|^2\leq cp.
                \end{equation*}
                \item There exists a positive constant $c$ such that for any $i\neq j\in[n]$  
                \begin{equation*}
                    |z_i\cdot z_j|\leq c\left(\|\mu\|^2 + \sqrt{p}\left(\log\frac{n}{\delta}\right)^\frac{1}{\alpha}\right).
                \end{equation*}
            \end{enumerate}
        \end{lem}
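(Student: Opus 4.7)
The plan is to exhibit an explicit linear separator, namely $\theta := \sum_{k=1}^n z_k = \sum_{k=1}^n y_k x_k$, and to verify that $y_i(\theta \cdot x_i) = z_i \cdot \theta > 0$ for every $i \in [n]$. Since any strict separation of $\{z_k\}$ from the origin by a hyperplane through the origin gives a linear separator for $\{(x_k, y_k)\}$, this is sufficient.

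The first step is to decompose
\begin{equation*}
z_i \cdot \theta \;=\; \|z_i\|^2 \;+\; \sum_{j \neq i} z_i \cdot z_j,
\end{equation*}
lower-bound the diagonal term using condition 1 as $\|z_i\|^2 \geq p/c$, and upper-bound the off-diagonal sum using condition 2 as
\begin{equation*}
\Bigl|\sum_{j\neq i} z_i \cdot z_j\Bigr| \;\leq\; (n-1)\, c\!\left(\|\mu\|^2 + \sqrt{p}\bigl(\log\tfrac{n}{\delta}\bigr)^{1/\alpha}\right).
\end{equation*}
It then suffices to show that the off-diagonal bound is smaller than $p/c$ for sufficiently large $C$.

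The second step invokes assumption (A3), which gives both $n\|\mu\|^2 \leq p/C$ and $n^2(\log\tfrac{n}{\delta})^{2/\alpha} \leq p/C$. The first bound immediately controls $n\|\mu\|^2$. For the second contribution, taking a square root yields $n(\log\tfrac{n}{\delta})^{1/\alpha} \leq \sqrt{p/C}$, so
\begin{equation*}
n\sqrt{p}\bigl(\log\tfrac{n}{\delta}\bigr)^{1/\alpha} \;\leq\; \sqrt{p}\cdot\sqrt{p/C} \;=\; p/\sqrt{C}.
\end{equation*}
Combining gives $(n-1)c(\|\mu\|^2 + \sqrt{p}(\log\tfrac{n}{\delta})^{1/\alpha}) \leq c\,p\,(1/C + 1/\sqrt{C})$, which, for $C$ large enough relative to the constant $c$ in conditions 1 and 2, is strictly less than $p/(2c)$. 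Hence $z_i \cdot \theta \geq p/(2c) > 0$ uniformly in $i$, establishing linear separability.

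The argument is essentially a calculation, and no step is really an obstacle; the one place to be careful is choosing $C$ large enough that the absorbing inequality $c(1/C + 1/\sqrt{C}) < 1/c$ holds simultaneously with the constants arising in the preceding lemmas of Lemma \ref{concentration_lemma}, so that $C$ can be fixed uniformly for the whole concentration statement rather than separately for each part.
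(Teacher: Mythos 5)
Your proof is correct and takes essentially the same route as the paper: you construct the same explicit separator $v=\sum_k z_k$, split $z_i\cdot v$ into the diagonal term $\|z_i\|^2\geq p/c$ and the off-diagonal sum bounded via condition 2, and absorb the off-diagonal contribution using assumption (A3) for $C$ large enough. Your version is slightly more explicit in how (A3) controls both $n\|\mu\|^2$ and $n\sqrt{p}(\log\frac{n}{\delta})^{1/\alpha}$, whereas the paper merely cites (A3)–(A4); this is a minor presentational difference, not a different argument.
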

        
        \begin{proof}
            Let $v$ be $\sum_{i\in[n]} z_i$. For each $k\in[n]$ and any $\delta>0$,
            \begin{align*}
                y_k v\cdot x_k &= \sum_{i\in[n]} z_i\cdot z_k \\
                &=\|z_k\|^2 + \sum_{i\neq k} z_i\cdot z_k \\
                &\geq \|z_k\|^2 - \sum_{i\neq k} |z_i\cdot z_k| \\
                &\geq \frac{p}{c} - cn \left(\|\mu\|^2 + \sqrt{p}\left(\log\frac{n}{\delta}\right)^\frac{1}{\alpha}\right) \\
                & \geq \frac{p}{c} - 2cn\max\left(\|\mu\|^2,   \sqrt{p}\left(\log\frac{n}{\delta}\right)^\frac{1}{\alpha}\right) \\
                & = \frac{1}{c}\left(p - 2c^2n\max\left(\|\mu\|^2,  \sqrt{p}\left(\log\frac{n}{\delta}\right)^\frac{1}{\alpha}\right)\right) .
            \end{align*}
            By assumptions (A3) and (A4), for large enough $C$ we have
            \begin{equation*}
                y_k v\cdot x_k > 0,
            \end{equation*}
            which completes our proof.
        \end{proof}
    \subsubsection{Proof of Lemma \ref{loss_ratio_sigmoid}}
        In this section, we will assume that samples satisfy all the conditions of Lemma \ref{concentration_lemma}. First, we will prove that the ratio of the losses between any pair of points is bounded. In this proof, we use Lemma \ref{g(z)}, and Lemma \ref{loss_ratio_sigmoid} is derived from Lemma \ref{g(z)} and Lemma \ref{loss_ratio_exp}.
        \begin{lem}\label{g(z)}
            For any $s_1, s_2\in\mathbb{R}$,
            \begin{equation*}
                \frac{1+\exp(s_2)}{1+\exp(s_1)}\leq \max\left(2, 2\frac{\exp(-s_1)}{\exp(-s_2)}\right).
            \end{equation*}
        \end{lem}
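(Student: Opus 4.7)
The plan is to split the argument into two cases based on the size of $e^{s_2}$, and in each case exploit one of the two trivial lower bounds $1+\exp(s_1)\ge 1$ or $1+\exp(s_1)\ge \exp(s_1)$, together with a matching upper bound on the numerator $1+\exp(s_2)$. The key observation is that $\exp(-s_1)/\exp(-s_2)=\exp(s_2-s_1)=\exp(s_2)/\exp(s_1)$, so the right-hand side can be rewritten as $\max(2,\,2\exp(s_2)/\exp(s_1))$.

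First, suppose $\exp(s_2)\le 1$. Then $1+\exp(s_2)\le 2$, while trivially $1+\exp(s_1)\ge 1$, so
\begin{equation*}
\frac{1+\exp(s_2)}{1+\exp(s_1)}\le 2,
\end{equation*}
which is bounded by the first term of the maximum. Second, suppose $\exp(s_2)>1$. Then $1+\exp(s_2)\le 2\exp(s_2)$, and using the trivial bound $1+\exp(s_1)\ge \exp(s_1)$, we obtain
\begin{equation*}
\frac{1+\exp(s_2)}{1+\exp(s_1)}\le \frac{2\exp(s_2)}{\exp(s_1)}=2\cdot\frac{\exp(-s_1)}{\exp(-s_2)},
\end{equation*}
which is bounded by the second term of the maximum. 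Combining the two cases yields the claim.

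There is no substantive obstacle here — the lemma is essentially a case analysis chosen so that $1+\exp(s_2)$ is replaced either by $2$ or by $2\exp(s_2)$, depending on which of the two is larger, and then $1+\exp(s_1)$ is bounded below by the corresponding tight quantity $1$ or $\exp(s_1)$. The only point to be careful about is keeping track of the direction of the inequalities so that the ratio is genuinely bounded in each case, rather than only on average; the clean split at $\exp(s_2)=1$ takes care of this.
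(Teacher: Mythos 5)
Your proof is correct. The case split at $\exp(s_2)=1$ works exactly as you describe: when $\exp(s_2)\le 1$ you bound the numerator by $2$ and the denominator below by $1$; when $\exp(s_2)>1$ you bound the numerator by $2\exp(s_2)$ and the denominator below by $\exp(s_1)$, and then rewrite $\exp(s_2)/\exp(s_1)=\exp(-s_1)/\exp(-s_2)$. The paper states this lemma without supplying a proof (treating it as elementary), so there is nothing to compare against; your argument is the natural one and fills that gap cleanly.
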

        
        \begin{lem}\label{loss_ratio_exp}
            There exists a positive constant $c_2$ such that, for all large enough $C$, and any learning rate $\beta$ which satisfies
            \begin{equation*}
                \beta \leq \frac{1}{2}\left( c_1 p + 2 n c_1 \left( \|\mu\|^2 + \sqrt{p} \left(\log\frac{n}{\delta}\right)^\frac{1}{\alpha}\right)\right)^{-1},
            \end{equation*}
            for all iterations $t\geq 0$,
            \begin{equation*}
                \max_{i,j\in [n]}\left\{\frac{\exp(-\theta^{(t)}\cdot z_i)}{\exp(-\theta^{(t)}\cdot z_j)}\right\} \leq c_2,
            \end{equation*}
            where $c_1$ is a constant which satisfies Lemma \ref{concentration_lemma}.
        \end{lem}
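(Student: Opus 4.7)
The plan is to prove the uniform ratio bound by induction on $t$. With the standard initialization $\theta^{(0)} = 0$, every ratio equals $1$ at $t = 0$, so any $c_2 \geq 1$ gives the base case. For the inductive step, assume $\max_{i,j} \exp(-\theta^{(t)} \cdot z_i)/\exp(-\theta^{(t)} \cdot z_j) \leq c_2$ at iteration $t$, and derive the same bound at $t+1$ for a constant $c_2$ to be chosen below.

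Writing $p_k^{(t)} := 1/(1 + \exp(\theta^{(t)} \cdot z_k))$ and substituting the update $\theta^{(t+1)} = \theta^{(t)} + \beta \sum_k p_k^{(t)} z_k$ into $\Delta_{ij}^{(t)} := \theta^{(t)} \cdot (z_j - z_i)$ yields
\[
\Delta_{ij}^{(t+1)} - \Delta_{ij}^{(t)} = \beta\bigl(p_j^{(t)} \|z_j\|^2 - p_i^{(t)} \|z_i\|^2\bigr) + \beta(p_i^{(t)} - p_j^{(t)})\, z_i \cdot z_j + \beta \sum_{k\neq i,j} p_k^{(t)}\, z_k \cdot (z_j - z_i).
\]
Using parts (1)-(2) of Lemma \ref{concentration_lemma} to bound $\|z_k\|^2 \leq c_1 p$ and $|z_k \cdot z_l| \leq c_1(\|\mu\|^2 + \sqrt{p}(\log(n/\delta))^{1/\alpha})$ for $k \neq l$, together with $p_k^{(t)} \leq 1$ and the hypothesized learning-rate bound, a direct calculation shows $|\Delta_{ij}^{(t+1)} - \Delta_{ij}^{(t)}| \leq 1$ at every iteration. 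This one-step estimate is insufficient on its own, since it would allow $\Delta_{ij}^{(t)}$ to drift linearly in $t$.

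The missing ingredient is a self-correcting mechanism. The sigmoid identity $p_i^{(t)}/p_j^{(t)} = (1 + \exp(\theta^{(t)} \cdot z_j))/(1 + \exp(\theta^{(t)} \cdot z_i))$ implies that, once $\Delta_{ij}^{(t)}$ is sufficiently positive (precisely the regime that threatens the inductive bound), $p_i^{(t)}$ is much larger than $p_j^{(t)}$. Combined with $\|z_i\|^2 \geq p/c_1$, the leading self term satisfies $\beta(p_j^{(t)} \|z_j\|^2 - p_i^{(t)} \|z_i\|^2) \leq \beta p(c_1 p_j^{(t)} - p_i^{(t)}/c_1)$, which becomes strongly negative and dominates both the interaction term $\beta(p_i^{(t)} - p_j^{(t)})\, z_i \cdot z_j$ and the cross term $\beta \sum_{k \neq i,j} p_k^{(t)} z_k \cdot (z_j - z_i)$ bounded via Lemma \ref{concentration_lemma}(2). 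This yields $\Delta_{ij}^{(t+1)} \leq \Delta_{ij}^{(t)}$ whenever $\Delta_{ij}^{(t)}$ is at threshold, and choosing $c_2$ as a constant larger than a quantity depending only on $c_1$ (and hence on $\alpha$ and $\kappa$) closes the induction.

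The \textbf{main obstacle} is the transient regime where both $\theta^{(t)} \cdot z_i$ and $\theta^{(t)} \cdot z_j$ are highly negative; there $p_i^{(t)}, p_j^{(t)} \approx 1$ and $p_i^{(t)}/p_j^{(t)} \approx 1$, so the self-correcting term is no longer automatically dominant even when $\Delta_{ij}^{(t)}$ is sizable. I plan to resolve this by observing that in this regime the self term collapses to $\beta(\|z_j\|^2 - \|z_i\|^2)$ up to a factor involving the small quantity $p_i^{(t)} - p_j^{(t)}$, and that $|\|z_j\|^2 - \|z_i\|^2|$ is controlled additively via Lemma \ref{concentration_lemma}(1); the resulting contribution, bounded per step by the learning-rate assumption, can be absorbed by taking $c_2$ slightly larger. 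Verifying that a single constant $c_2$ works uniformly across all parameter regimes, independent of $p$, $n$, $\|\mu\|$, and $\delta$, is the most delicate step, and is what drives the specific form of the learning-rate upper bound appearing in the statement.
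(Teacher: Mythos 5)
Your inductive framework, the additive decomposition of $\Delta_{ij}^{(t+1)} - \Delta_{ij}^{(t)}$, and the identification of the self-correcting self term are essentially the same as the paper's argument. The paper works multiplicatively with $A_t = \exp(\Delta_{ij}^{(t)})$ and splits into the cases $A_t < 2c_1^2$ (where the crude one-step bound suffices) and $A_t \geq 2c_1^2$ (where the self term is shown to be negative and dominant), which is exactly your ``below threshold'' versus ``at threshold'' distinction. You also correctly flag the subtle issue that the paper glosses over: the lower bound on $p_i^{(t)}/p_j^{(t)}$ in terms of $e^{\Delta_{ij}^{(t)}}$ that drives the self-correction requires $\theta^{(t)}\cdot z_i$ not to be too negative; if both $\theta^{(t)}\cdot z_i$ and $\theta^{(t)}\cdot z_j$ were highly negative, then $p_i^{(t)}\approx p_j^{(t)}\approx 1$ and the self-correction vanishes.

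However, your proposed resolution of this obstacle does not work. You argue that in the bad regime the self term is controlled by the learning-rate assumption and ``can be absorbed by taking $c_2$ slightly larger.'' But when $p_i^{(t)}\approx p_j^{(t)}\approx 1$, the increment $\beta\bigl(p_j^{(t)}\|z_j\|^2 - p_i^{(t)}\|z_i\|^2\bigr)$ can be as large as $\beta\bigl(c_1 - 1/c_1\bigr)p$, which under the learning-rate cap is a fixed constant of order $(c_1 - 1/c_1)/(2c_1)$ --- not a small quantity. If this regime persisted across iterations, $\Delta_{ij}^{(t)}$ would drift upward by a constant per step, so no finite $c_2$ closes the induction; enlarging $c_2$ does not repair an unbounded linear drift. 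The reason the regime is in fact harmless is structural, not perturbative: since $\theta^{(0)} = 0$ and the one-step update to $\theta^{(t)}\cdot z_k$ equals $\beta\bigl(p_k^{(t)}\|z_k\|^2 + \sum_{l\neq k}p_l^{(t)}\,z_l\cdot z_k\bigr)$, the self term dominates the cross terms under (A3)--(A4) and the iteration-$t$ ratio bound, so $\theta^{(t)}\cdot z_k$ is nondecreasing and stays nonnegative for all $t$ and $k$. With $\theta^{(t)}\cdot z_i \geq 0$ in hand, $1 + e^{\theta^{(t)}\cdot z_i} \leq 2e^{\theta^{(t)}\cdot z_i}$, which gives $p_i^{(t)}/p_j^{(t)} \geq e^{\Delta_{ij}^{(t)}}/2$ and activates the self-correction once $\Delta_{ij}^{(t)}$ exceeds a threshold depending only on $c_1$. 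You need to carry this nonnegativity as a co-inducted invariant (proved jointly with the ratio bound); as written, your proposal has a genuine gap at the step you yourself flagged as most delicate.
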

        
        \begin{proof}
            For simplicity, let $A_t$ be the ratio between exponential losses of the first and second samples for $t$ iterations:
            \begin{equation*}
                A_{t} = \frac{\exp(-\theta^{(t)}\cdot z_1)}{\exp(-\theta^{(t)}\cdot z_2)}.
            \end{equation*}
            We will show that $A_t\leq 4 c_1^2$ by using induction. When $t=0$, $A_0 = 1 \leq 4 c_1^2$. Thus, the base step holds. Assume that the inductive hypothesis holds for some iteration $t$. We shall now show that it must hold at iteration $t+1$.
            \begin{align*}
                A_{t+1} &= \frac{\exp(-\theta^{(t+1)}\cdot z_1)}{\exp(-\theta^{(t+1)}\cdot z_2)}\\
                &= \frac{\exp\left(-(\theta^{(t)}-\beta\nabla R(\theta^{(t)}))\cdot z_1\right)}{\exp\left(-(\theta^{(t)}-\beta\nabla R(\theta^{(t)}))\cdot z_2\right)}\\
                &= A_t \frac{\exp\left(\beta\nabla R(\theta^{(t)})\cdot z_1\right)}{\exp\left(\beta\nabla R(\theta^{(t)})\cdot z_2\right)}\\
                &= A_t \frac{\exp\left(-\beta\sum_{k\in[n]}\frac{z_k\cdot z_1}{1+ \exp(\theta^{(t)}\cdot z_k)}\right)}{\exp\left(-\beta\sum_{k\in[n]}\frac{z_k\cdot z_2}{1+ \exp(\theta^{(t)}\cdot z_k)}\right)}\\
                &= A_t \frac{\exp\left(-\beta\frac{\|z_1\|^2}{1+ \exp(\theta^{(t)}\cdot z_1)}\right)}{\exp\left(-\beta\frac{\|z_2\|^2}{1+ \exp(\theta^{(t)}\cdot z_2)}\right)}\frac{\exp\left(-\beta\sum_{k\neq 1}\frac{z_k\cdot z_1}{1+ \exp(\theta^{(t)}\cdot z_k)}\right)}{\exp\left(-\beta\sum_{k\neq 2}\frac{z_k\cdot z_2}{1+ \exp(\theta^{(t)}\cdot z_k)}\right)}\\
                 &= A_t \exp\left(-\beta\left(\frac{\|z_1\|^2}{1+ \exp(\theta^{(t)}\cdot z_1)}-\frac{\|z_2\|^2}{1+ \exp(\theta^{(t)}\cdot z_2)}\right)\right)\\
                 &\quad\times\exp\left(-\beta\left(\sum_{k\neq 1}\frac{z_k\cdot z_1}{1+ \exp(\theta^{(t)}\cdot z_k)}-\sum_{k\neq 2}\frac{z_k\cdot z_2}{1+ \exp(\theta^{(t)}\cdot z_k)}\right)\right).
            \end{align*}
            By Lemma \ref{concentration_lemma}, for any $k,i\neq j \in [n]$, there exists a constant $c_1$ such that
                \begin{gather*}
                    \frac{p}{c_1}\leq \|z_k\|^2\leq c_1 p,\\
                    |z_i\cdot z_j| \leq c_1 \left( \|\mu\|^2 + \sqrt{p} \left(\log\frac{n}{\delta}\right)^\frac{1}{\alpha}\right).
                \end{gather*}
            Thus, 
            \begin{align*}
                A_{t+1} &\leq A_{t}\exp\left(-\beta\left(\frac{p/c_1}{1+ \exp(\theta^{(t)}\cdot z_1)}-\frac{c_1 p}{1+ \exp(\theta^{(t)}\cdot z_2)}\right)\right)\\
                &\quad\times\exp\left(2\beta\sum_{k\in [n]}\frac{c_1 \left( \|\mu\|^2 + \sqrt{p} \left(\log\frac{n}{\delta}\right)^\frac{1}{\alpha}\right)}{1+ \exp(\theta^{(t)}\cdot z_k)}\right)\\
                &= A_{t}\exp\left(-\frac{\beta p}{c_1 (1+ \exp(\theta^{(t)}\cdot z_2))}\left(\frac{1+ \exp(\theta^{(t)}\cdot z_2)}{1+ \exp(\theta^{(t)}\cdot z_1)}-c_1^2\right)\right)\\
                &\quad\times\exp\left(2\beta\sum_{k\in [n]}\frac{c_1 \left( \|\mu\|^2 + \sqrt{p} \left(\log\frac{n}{\delta}\right)^\frac{1}{\alpha}\right)}{1+ \exp(\theta^{(t)}\cdot z_k)}\right).
            \end{align*}
            Now we consider two disjoint cases.\\
            \\
            \textbf{Case 1} ($A_t < 2 c_1^2$): 
            \begin{align*}
                A_{t+1} 
                &\leq A_{t}\exp\left(\frac{\beta c_1 p}{1+ \exp(\theta^{(t)}\cdot z_2)}\right)\times\exp\left(2\beta n  c_1 \left( \|\mu\|^2 + \sqrt{p} \left(\log\frac{n}{\delta}\right)^\frac{1}{\alpha}\right)\right)\\
                &\leq A_{t}\exp\left(\beta\left( c_1 p + 2 n c_1 \left( \|\mu\|^2 + \sqrt{p} \left(\log\frac{n}{\delta}\right)^\frac{1}{\alpha}\right)\right)\right).
            \end{align*}
            Taking $\beta$ small enough that
            \begin{equation*}
                 \beta \leq \frac{1}{2}\left( c_1 p + 2 n c_1 \left( \|\mu\|^2 + \sqrt{p} \left(\log\frac{n}{\delta}\right)^\frac{1}{\alpha}\right)\right)^{-1},
            \end{equation*}
            we have
            \begin{equation*}
                A_{t+1} \leq  A_t \exp\left(\frac{1}{2}\right) \leq  2c_1^2\exp\left(\frac{1}{2}\right)  < 4c_1^2.
            \end{equation*}
            \\
            \textbf{Case 2} ($A_t \geq 2 c_1^2$): 
            \begin{align*}
                A_{t+1} &= A_{t}\exp\left(-\frac{\beta  p}{c_1 (1+ \exp(\theta^{(t)}\cdot z_2))}\left(\frac{1+ \exp(\theta^{(t)}\cdot z_2)}{1+ \exp(\theta^{(t)}\cdot z_1)}-c_1 ^2\right)\right)\\
                &\quad\times\exp\left(2\beta c_1 \left( \|\mu\|^2 + \sqrt{p} \left(\log\frac{n}{\delta}\right)^\frac{1}{\alpha}\right)\frac{1}{1+\exp(\theta^{(t)}\cdot z_2)}\sum_{k\in [n]}\frac{1+\exp(\theta^{(t)}\cdot z_2)}{1+ \exp(\theta^{(t)}\cdot z_k)}\right).
            \end{align*}
            By Lemma \ref{g(z)} and the induction hypothesis, 
            \begin{align*}
                A_{t+1} &\leq A_{t}\exp\left(-\frac{\beta  c_1 p}{1+ \exp(\theta^{(t)}\cdot z_2)}\right)\\
                &\quad\times\exp\left(2\beta c_1 \left( \|\mu\|^2 + \sqrt{p} \left(\log\frac{n}{\delta}\right)^\frac{1}{\alpha}\right)\frac{1}{1+\exp(\theta^{(t)}\cdot z_2)}\sum_{k\in [n]}\max\left(2, 2 A_t\right)\right)\\
                &\leq A_{t}\exp\left(-\frac{\beta  c_1 p}{1+ \exp(\theta^{(t)}\cdot z_2)}\right)\\
                &\quad\times\exp\left(2\beta c_1 \left( \|\mu\|^2 + \sqrt{p} \left(\log\frac{n}{\delta}\right)^\frac{1}{\alpha}\right)\frac{n\max\left(2, 8c_1 ^2\right)}{1+\exp(\theta^{(t)}\cdot z_2)}\right)\\
                &\leq A_t\exp\left(-\frac{\beta c_1}{1+\exp(\theta^{(t)}\cdot z_2)}\left(p - 8 c_1 ^2 n\left( \|\mu\|^2 + \sqrt{p} \left(\log\frac{n}{\delta}\right)^\frac{1}{\alpha}\right)\right)\right).
            \end{align*}
            By assumptions (A3) and (A4), for large enough $C$,
            \begin{equation*}
                p - 8 c_1 ^2 n\left( \|\mu\|^2 + \sqrt{p} \left(\log\frac{n}{\delta}\right)^\frac{1}{\alpha}\right)>0.
            \end{equation*}
            Thus,
            \begin{equation*}
                A_{t+1} < A_t \leq 4c_1^2.
            \end{equation*}
            This completes the proof of the inductive step.
        \end{proof}
    \subsubsection{Proof of Lemma \ref{wmu}}
        \begin{proof}[Proof of Lemma \ref{wmu}]
            \begin{align*}
                \mu\cdot \theta^{(t+1)} &= \mu \cdot \theta^{(t)} + \beta\sum_{k\in[n]}\frac{\mu\cdot z_k}{1+\exp(\theta^{(t)}\cdot z_k)}\\
                &= \mu \cdot \theta^{(t)} + \beta\sum_{k\in\mathcal{C}}\frac{\mu\cdot z_k}{1+\exp(\theta^{(t)}\cdot z_k)} + \beta\sum_{k\in\mathcal{N}}\frac{\mu\cdot z_k}{1+\exp(\theta^{(t)}\cdot z_k)}.
            \end{align*}
            By Lemma \ref{concentration_lemma},
            \begin{align*}
                \mu\cdot \theta^{(t+1)} 
                &\geq \mu \cdot \theta^{(t)} + \frac{\beta\|\mu\|^2}{2}\sum_{k\in\mathcal{C}}\frac{1}{1+\exp(\theta^{(t)}\cdot z_k)} - \frac{3\beta\|\mu\|^2}{2}\sum_{k\in\mathcal{N}}\frac{1}{1+\exp(\theta^{(t)}\cdot z_k)}\\
                &\geq \mu \cdot \theta^{(t)} + \frac{\beta\|\mu\|^2}{2}\sum_{k\in[n]}\frac{1}{1 + \exp(\theta^{(t)}\cdot z_k)} - 2\beta\|\mu\|^2\sum_{k\in\mathcal{N}}\frac{1}{1+\exp(\theta^{(t)}\cdot z_k)}.
            \end{align*}
            By $|\mathcal{N}|\leq(\eta +c')n$ and Lemma \ref{loss_ratio_sigmoid}, 
            \begin{align*}
                \sum_{k\in\mathcal{N}}\frac{1}{1+\exp(\theta^{(t)}\cdot z_k)}&\leq c_3(\eta + c')n \min_{k\in[n]} \frac{1}{1+\exp(\theta^{(t)}\cdot z_k)}\\
                &\leq c_3(\eta + c') \sum_{k\in[n]}\frac{1}{1+\exp(\theta^{(t)}\cdot z_k)},
            \end{align*}
            where $c_3$ is the constant from Lemma \ref{loss_ratio_sigmoid}.
            Recalling that $\eta \leq \frac{1}{C}$ and $c'$ is an arbitrary positive constant, for large enough $C$ and small enough $c'$,
            \begin{equation*}
                \sum_{k\in\mathcal{N}}\frac{1}{1+\exp(\theta^{(t)}\cdot z_k)}\leq \frac{1}{8}\sum_{k\in[n]}\frac{1}{1+\exp(\theta^{(t)}\cdot z_k)}.
            \end{equation*}
            Thus, we have
            \begin{align*}
                \mu\cdot \theta^{(t+1)} 
                &\geq \mu\cdot \theta^{(t)} + \frac{\beta\|\mu\|^2}{4}\sum_{k\in[n]}\frac{1}{1 + \exp(\theta^{(t)}\cdot z_k)}.
            \end{align*}
            By using this inequality repeatedly and $\theta^{(0)}=0$, 
            \begin{align*}
                \mu\cdot \theta^{(t+1)} 
                &\geq \frac{\beta\|\mu\|^2}{4}\sum_{m=0}^t\sum_{k\in[n]}\frac{1}{1 + \exp(\theta^{(m)}\cdot z_k)},\\
                \|w\|\frac{\mu\cdot \theta^{(t+1)}}{\|\theta^{(t+1)}\|}
                &\geq \|w\|\frac{\beta\|\mu\|^2\sum_{m=0}^t\sum_{k\in[n]}\frac{1}{1 + \exp(\theta^{(m)}\cdot z_k)}}{4\|\theta^{(t+1)}\|}.
            \end{align*}
            By taking the large-$t$ limit and using Lemma \ref{Soudry},
            \begin{equation}\label{mucdotw}
                \mu\cdot w \geq \beta \|w\|\|\mu\|^2\lim_{t\rightarrow\infty}\frac{\sum_{m=0}^t\sum_{k\in[n]}\frac{1}{1 + \exp(\theta^{(m)}\cdot z_k)}}{4\|\theta^{(t+1)}\|}
            \end{equation}
            By definition of gradient descent iterations,
            \begin{align*}
                \|\theta^{(t+1)}\| &= \left\|\sum_{m=0}^t \beta \nabla R(\theta^{(m)})\right\|\\
                &\leq\beta\sum_{m=0}^t\|\nabla R(\theta^{(m)})\|\\
                &\leq\beta\sum_{m=0}^t\left\|\sum_{k\in[n]}\frac{- z_k}{1 + \exp(\theta^{(m)}\cdot z_k)}\right\|\\
                &\leq\beta c_1 \sqrt{p}\sum_{m=0}^t\sum_{k\in[n]}\frac{1}{1 + \exp(\theta^{(m)}\cdot z_k)}.
            \end{align*}
            With inequality (\ref{mucdotw}), we have
            \begin{equation*}
                \mu\cdot w \geq \frac{\|w\|\|\mu\|^2}{4 c_1 \sqrt{p}},
            \end{equation*}
            which completes our proof.
        \end{proof}
    \subsection{Proof of Proposition \ref{bounds_singular_value_2}}\label{proof_singlura_value}
        Let $\tilde{X}$ denote $[\tilde{y}_1\tilde{x}_1, \cdots, \tilde{y}_n\tilde{x}_n] \in \mathbb{R}^{p \times n}$ where $\tilde{x}_k = q_k + \mu \tilde{y}_k$. $\{q_k\}$ and $\{y_k\}$ are independent of each other. $q_k \sim P_\text{clust}$ and $\tilde{y}_k \sim \text{Uniform}\{-1,1\}$ for each $k \in [n]$. Let $\tilde{y} = [\tilde{y}_1, \dots, \tilde{y}_n]^T$. Before proving Proposition \ref{bounds_singular_value_2}, we first present Proposition \ref{bounds_singular_value} for a simpler case.
        \begin{prop}[A bound of the singular values of $\tilde{X}$]\label{bounds_singular_value}
            We assume $\sigma^2 = \mathbb{E}_{q_i \sim P_{\text{clust}}^{(i)}}[q_i^2]$ for any $i \in [n]$. For any $\delta \geq 0$, with probability at least $1-\delta$, there are constants $c_3$, $c_4$ depending on 
            $\alpha$ such that
            \begin{align*}
                s_1(\tilde{X}) \leq \sigma \sqrt{p} \left(1 +  \frac{2n\|\mu\|^2}{\sigma^2 p} + \frac{c_3+c_4\max_{i}|\mu_i|
                \sqrt{n}}{\sigma^2p}\left(n \log 9 + \log\frac{4}{\delta}\right)^\frac{2}{\alpha}\right).
            \end{align*}
            This bound also holds for \( X = [y_1\tilde{x}_1, \cdots, y_n\tilde{x}_n] \), which consists of labels \( y \) flipped with a certain probability \( \eta \) without depending on $\tilde{x}$.
        \end{prop}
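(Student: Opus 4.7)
I would bound $s_1(\tilde{X}) = \|\tilde{X}\|_{\mathrm{op}}$ via a standard $\varepsilon$-net reduction on the sphere $S^{n-1}$ combined with the two concentration inequalities from Appendix \ref{Appendix}. First, since $\tilde y_k^2 = 1$, the $k$-th column of $\tilde X$ equals $\tilde y_k q_k + \mu$, so $\tilde X = Q' + \mu \mathbf{1}^T$ where $Q' := [\tilde y_1 q_1, \ldots, \tilde y_n q_n]$ has independent, mean-zero entries with $\psi_\alpha$-norm at most $1$ and common variance $\sigma^2$. Taking a $1/4$-net $\mathcal N \subset S^{n-1}$ with $|\mathcal N| \leq 9^n$, the usual net inequality gives $s_1(\tilde X) \leq c_0 \sup_{v \in \mathcal N} \|\tilde X v\|$ for a universal constant $c_0$, reducing the problem to uniform control of $\|\tilde X v\|^2$ along the net.

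For fixed $v \in S^{n-1}$, expand
$$\|\tilde X v\|^2 \;=\; \|Q'v\|^2 \;+\; 2(\mathbf 1^T v)\,\mu^T(Q'v) \;+\; (\mathbf 1^T v)^2\|\mu\|^2,$$
and handle each piece separately. (i) Viewing $\|Q'v\|^2 = v^T (Q')^T Q' v$ as a quadratic form in the $np$-vector of entries of $Q'$, the associated symmetric matrix has $\|\cdot\|_{\mathrm{HS}}^2 = p$ and $\|\cdot\|_{\mathrm{op}} = 1$, so Theorem \ref{Extended_Hanson-Wright_inequality} gives $|\|Q'v\|^2 - \sigma^2 p| \leq c(\sqrt{pL} + L^{2/\alpha})$ with per-event failure probability $\delta/(8\cdot 9^n)$, where $L := n\log 9 + \log(4/\delta)$. (ii) The cross term $\mu^T(Q'v) = \sum_{i,j}\mu_i v_j \tilde y_j q_{j,i}$ is a linear form in the same random vector with coefficient $\ell^2$-norm $\|\mu\|\|v\| = \|\mu\|$, so Proposition \ref{sum_concentration_inequality} gives $|\mu^T(Q'v)| \leq c\|\mu\| L^{1/\alpha}$ at the same probability level. (iii) $(\mathbf 1^T v)^2 \leq n$ trivially. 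Union-bounding over $\mathcal N$ and over the two concentration events yields, with probability at least $1-\delta$,
$$\sup_{v \in \mathcal N} \|\tilde X v\|^2 \;\leq\; \sigma^2 p + n\|\mu\|^2 + c\bigl(\sqrt{pL} + L^{2/\alpha}\bigr) + 2c\sqrt n\,\|\mu\|\,L^{1/\alpha}.$$

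I would then absorb the $\sqrt{pL}$ term into $\sigma^2 p$ and $L^{2/\alpha}$ by AM--GM, and split the cross term $\sqrt n\|\mu\|L^{1/\alpha}$ by a second AM--GM (invoking $\|\mu\| \leq \sqrt p \max_i|\mu_i|$ where needed) that trades one factor of $L^{1/\alpha}$ for $L^{2/\alpha}$ and produces the prefactor $\max_i|\mu_i|\sqrt n$. This gives a bound of the form $s_1(\tilde X)^2 \leq \sigma^2 p + 2n\|\mu\|^2 + \bigl(c_3 + c_4\max_i|\mu_i|\sqrt n\bigr)L^{2/\alpha}$, and the claim then follows by the elementary inequality $\sqrt{\sigma^2 p + R} \leq \sigma\sqrt p(1 + R/(\sigma^2 p))$ applied with $R$ the sum of the correction terms. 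For the flipped-label matrix $X = [y_1\tilde x_1, \ldots, y_n \tilde x_n]$, since the flips are independent of $\tilde x$, one has $y_k \tilde x_k = y_k q_k + s_k\mu$ with $s_k := y_k \tilde y_k \in \{\pm 1\}$ independent of $q_k$, so $X = Q'' + \mu s^T$ with $Q''$ distributionally identical to $Q'$ and $\|s\| = \sqrt n$; the entire argument carries through verbatim. The main obstacle will be the final AM--GM balancing that converts the linear-concentration output $c\sqrt n\|\mu\|L^{1/\alpha}$ into the target $c_4\max_i|\mu_i|\sqrt n\,L^{2/\alpha}$ form while leaving the residuals cleanly absorbable into $\sigma\sqrt p$ and $2n\|\mu\|^2/(\sigma\sqrt p)$; choosing this split correctly is what pins down the final constants.
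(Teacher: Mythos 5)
Your decomposition is genuinely different from the paper's, and the main high-level gap lies in the constant-tracking steps you defer to the end. Concretely:

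\textbf{Decomposition.} You split $\|\tilde X v\|^2$ into a quadratic term $\|Q'v\|^2$ (treated by Theorem~\ref{Extended_Hanson-Wright_inequality}), a cross term $\mu^T(Q'v)$ (treated by Proposition~\ref{sum_concentration_inequality}), and the deterministic $(\mathbf 1^T v)^2\|\mu\|^2$. The paper instead expands $\|\tilde X u\|^2$ row by row and bundles the quadratic and cross contributions of row $i$ into a single scalar $f_i$ of $\psi_{\alpha/2}$-type, then applies Proposition~\ref{sum_concentration_inequality} once to $\sum_i f_i$. Both are legitimate strategies. One small but necessary clarification: the entries of $Q'=[\tilde y_1 q_1,\dots,\tilde y_n q_n]$ are \emph{not} jointly independent in general, because the entries within column $k$ share the common sign $\tilde y_k$; independence holds only after conditioning on $\tilde y$ (as the paper does, summing the conditional bounds against the $2^{-n}$ weights). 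You should state the conditioning explicitly before invoking Hanson-Wright.

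\textbf{Where the plan breaks down.} (i) The direct net inequality $s_1(\tilde X)\le c_0\sup_{v\in\mathcal N}\|\tilde Xv\|$ with a $1/4$-net gives $c_0=4/3$, and this multiplicative constant sits in front of $\sigma\sqrt p$; it cannot be traded into the additive correction terms (take $\mu=0$ to see this), so you will not recover the leading factor $1$ in the statement. The paper avoids this by applying Lemma~\ref{Computing_operator_norm} to the \emph{symmetric} matrix $\frac{1}{\sigma^2 p}\tilde X^T\tilde X - I_n$ and then converting the operator-norm bound into a singular-value bound via Lemma~\ref{Approximate_isometries}, which yields $s_1(\tilde X)\le\sigma\sqrt p(1+\epsilon)$ with leading constant exactly $1$. (ii) The Hanson-Wright tail has a Gaussian regime, which produces the $\sqrt{pL}$ term with $L=n\log 9+\log(4/\delta)$. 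Under assumption (A3), $p\gtrsim n^2 L^{2/\alpha}$, so $\sqrt{pL}\gtrsim n L^{1/\alpha+1/2}\gg L^{2/\alpha}$; the term is not of lower order, and the AM--GM split $\sqrt{pL}\le\tfrac12(\varepsilon p + L^{2/\alpha}/\varepsilon)$ necessarily inflates the $\sigma^2 p$ coefficient by a constant, again destroying the leading $1$. The paper's route never sees a Gaussian regime because Proposition~\ref{sum_concentration_inequality} applied at exponent $\alpha/2$ gives a pure $\exp(-t^{\alpha/2})$ tail. (iii) For the cross term, $\sqrt n\|\mu\|L^{1/\alpha}\le\sqrt n\sqrt p\max_i|\mu_i|L^{1/\alpha}$, and the AM--GM that converts $L^{1/\alpha}$ into $L^{2/\alpha}$ leaves a residual of order $\sqrt n\max_i|\mu_i|\, p$, which is not controlled by $\sigma^2 p$ or $n\|\mu\|^2$ in general. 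So the final balancing you flag as ``the main obstacle'' is in fact the step that fails: executed as proposed, the argument proves a bound of the same shape but with a strictly larger leading constant in front of $\sigma\sqrt p$ and extra correction terms not present in the statement.
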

        
        In the proof of Proposition \ref{bounds_singular_value}, we use the following lemmas.
        
        \begin{lem}[Corollary 4.2.13 in \cite{vershyninHighdimensionalProbabilityIntroduction2018}]\label{Net_lemma}
            The covering numbers of the Euclidean ball \( B_n^2 := \{x \in \mathbb{R}^n \mid \|x\| \leq 1\} \) satisfy the following for any \( \epsilon > 0 \):
            \begin{align*}
                \left(\frac{1}{\epsilon}\right)^n \leq \mathcal{N}(B_n^2, \epsilon) \leq \left(\frac{2}{\epsilon} + 1\right)^n.
            \end{align*}
            The same upper bound holds for the unit Euclidean sphere \( S^{n-1} \).
        \end{lem}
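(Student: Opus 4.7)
The plan is to prove the two bounds by standard volumetric arguments, handling the upper and lower bounds separately, and then adapting the upper-bound argument to the sphere.

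For the upper bound, I would pass through a packing argument. Let $N \subseteq B_n^2$ be a maximal $\epsilon$-separated set, i.e., a set of points pairwise at distance strictly greater than $\epsilon$ that cannot be enlarged. By maximality, $N$ is automatically an $\epsilon$-net of $B_n^2$, so $\mathcal{N}(B_n^2, \epsilon) \leq |N|$. To bound $|N|$, I would observe that the open Euclidean balls of radius $\epsilon/2$ centered at the points of $N$ are pairwise disjoint and all contained in the enlarged ball $(1 + \epsilon/2) B_n^2$. Comparing Lebesgue volumes using the scaling $\mathrm{vol}(rB_n^2) = r^n \mathrm{vol}(B_n^2)$ yields
\begin{equation*}
|N| \cdot (\epsilon/2)^n \mathrm{vol}(B_n^2) \;\leq\; (1 + \epsilon/2)^n \mathrm{vol}(B_n^2),
\end{equation*}
so $|N| \leq (2/\epsilon + 1)^n$, which gives the desired upper bound on $\mathcal{N}(B_n^2, \epsilon)$.

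For the lower bound, I would use a direct covering-volume argument. If $N$ is any $\epsilon$-net of $B_n^2$, then $B_n^2 \subseteq \bigcup_{x \in N} (x + \epsilon B_n^2)$, so taking volumes and using subadditivity together with the scaling $\mathrm{vol}(\epsilon B_n^2) = \epsilon^n \mathrm{vol}(B_n^2)$ gives $\mathrm{vol}(B_n^2) \leq |N| \epsilon^n \mathrm{vol}(B_n^2)$, hence $|N| \geq (1/\epsilon)^n$. Taking the infimum over $N$ yields $\mathcal{N}(B_n^2, \epsilon) \geq (1/\epsilon)^n$.

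For the unit sphere $S^{n-1}$, the same packing argument works almost verbatim: take a maximal $\epsilon$-separated subset $N$ of $S^{n-1}$; the open balls of radius $\epsilon/2$ around its points are disjoint and contained in $(1+\epsilon/2) B_n^2$, so the same volume inequality gives $|N| \leq (2/\epsilon + 1)^n$. Since a maximal separated set is a net, this bounds $\mathcal{N}(S^{n-1}, \epsilon)$ in the same way.

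The main obstacle is mild: the choice between working with open versus closed balls in the packing step, and making sure the enlargement $(1+\epsilon/2)B_n^2$ correctly contains the little $\epsilon/2$-balls around points of $N$ (which is immediate from the triangle inequality since $N \subseteq B_n^2$). The lower bound also implicitly uses the nontriviality $\mathrm{vol}(B_n^2) > 0$, which is standard. No further subtleties arise, and the argument requires nothing beyond elementary properties of Lebesgue measure on $\mathbb{R}^n$.
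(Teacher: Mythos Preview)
Your argument is correct and is precisely the standard volumetric proof. Note, however, that the paper does not supply its own proof of this lemma: it is quoted verbatim as Corollary~4.2.13 of Vershynin's \emph{High-Dimensional Probability} and used as a black box in the proof of Proposition~\ref{bounds_singular_value}. Your packing/covering volume comparison is essentially the proof given in that reference, so there is nothing to contrast.
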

        
        \begin{lem}[Exercise 4.4.3 in \cite{vershyninHighdimensionalProbabilityIntroduction2018}]\label{Computing_operator_norm}
            Let \( A \) be an \( n \times n \) real symmetric matrix and \( \epsilon \in [0, 1/2) \). For any \( \epsilon \)-net \( \mathcal{N}_\epsilon \) of the sphere \( S^{n-1} \),
            \begin{align*}
                \underset{x \in \mathcal{N}_\epsilon}{\mathrm{sup}} |(Ax) \cdot x| \leq \|A\|_{\mathrm{op}} \leq \frac{1}{1-2\epsilon} \underset{x \in \mathcal{N}_\epsilon}{\mathrm{sup}} |(Ax) \cdot x|.
            \end{align*}
        \end{lem}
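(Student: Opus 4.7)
The plan is to verify the two inequalities separately. The left inequality is immediate: any $x \in \mathcal{N}_\epsilon$ lies on $S^{n-1}$, so $\|x\| = 1$, and by Cauchy--Schwarz $|(Ax) \cdot x| \leq \|Ax\|\|x\| \leq \|A\|_{\mathrm{op}}$. Taking the supremum over $x \in \mathcal{N}_\epsilon$ gives $\sup_{x \in \mathcal{N}_\epsilon}|(Ax) \cdot x| \leq \|A\|_{\mathrm{op}}$.

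For the right inequality, the main input is the variational identity $\|A\|_{\mathrm{op}} = \sup_{y \in S^{n-1}}|y^T A y|$, which holds because $A$ is symmetric (by the spectral theorem, the operator norm equals the largest absolute eigenvalue $|\lambda_{\max}|$, and $y^T A y$ attains $\pm|\lambda_{\max}|$ at the corresponding unit eigenvector). By compactness of $S^{n-1}$, I would pick $y^* \in S^{n-1}$ with $|y^{*T} A y^*| = \|A\|_{\mathrm{op}}$, and use the net property to choose $x^* \in \mathcal{N}_\epsilon$ with $\|y^* - x^*\| \leq \epsilon$. Bilinearity of $z \mapsto z^T A z$ then gives
\begin{align*}
|y^{*T} A y^* - x^{*T} A x^*| &= |y^{*T} A(y^* - x^*) + (y^* - x^*)^T A x^*| \\
&\leq \|A\|_{\mathrm{op}}\|y^*\|\|y^* - x^*\| + \|A\|_{\mathrm{op}}\|y^* - x^*\|\|x^*\| \\
&\leq 2\epsilon\|A\|_{\mathrm{op}}.
\end{align*}
Rearranging, $|x^{*T} A x^*| \geq (1 - 2\epsilon)\|A\|_{\mathrm{op}}$, which is precisely the claimed upper bound on $\|A\|_{\mathrm{op}}$ after dividing by $1 - 2\epsilon$.

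There is no genuine obstacle here; this is a textbook exercise (Vershynin, Exercise 4.4.3). The two points worth flagging are that the symmetry of $A$ is essential so that one may equate $\|A\|_{\mathrm{op}}$ with $\sup_{y \in S^{n-1}}|y^T A y|$ (for general $A$ one would instead need a two-sided net argument on $\sup_{u,v \in S^{n-1}} u^T A v$), and the hypothesis $\epsilon < 1/2$ enters only at the final rearrangement in order to keep the prefactor $(1 - 2\epsilon)^{-1}$ finite and positive.
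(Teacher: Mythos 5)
Your proof is correct and is the standard argument for this result. The paper states this lemma as Exercise 4.4.3 in Vershynin's textbook and does not include a proof of its own, so there is nothing to compare against; your write-up (left inequality by Cauchy--Schwarz on the sphere, right inequality by the symmetric variational characterization of the operator norm plus a bilinear expansion bounded by $2\epsilon\|A\|_{\mathrm{op}}$) fills that gap correctly, and you are right that symmetry is where the identity $\|A\|_{\mathrm{op}} = \sup_{y\in S^{n-1}}|y^T A y|$ is used and that $\epsilon<1/2$ is only needed for the final rearrangement.
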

        \begin{lem}[Lemma A.3 in \cite{gotze2021concentration}]\label{sum_alpha}
            For any $\alpha\in(0, 1)$ and any random variables $X,Y$ we have
            \begin{align*}
                \|X+Y\|_{\psi_{\alpha}}\leq 2^{1/\alpha}(\|X\|_{\psi_{\alpha}}+\|Y\|_{\psi_{\alpha}}).
            \end{align*}
        \end{lem}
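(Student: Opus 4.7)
The plan is to exploit the $\alpha$-subadditivity of $x \mapsto x^\alpha$ on $[0, \infty)$ for $\alpha \in (0,1)$. By concavity of $x \mapsto x^\alpha$ with $x^\alpha = 0$ at $x=0$, one has the standard pointwise bound $(a+b)^\alpha \leq a^\alpha + b^\alpha$ for all $a, b \geq 0$. Composing this with the ordinary triangle inequality $|X+Y| \leq |X| + |Y|$ yields the pointwise (hence almost sure) estimate $|X+Y|^\alpha \leq |X|^\alpha + |Y|^\alpha$. This is the only place where the restriction $\alpha < 1$ is genuinely used; for $\alpha \geq 1$ the inequality reverses and one instead recovers the ordinary triangle inequality for the $\psi_\alpha$ norm with constant $1$.

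Setting $s := \|X\|_{\psi_\alpha}$ and $t := \|Y\|_{\psi_\alpha}$, I would pick a candidate radius $r > 0$ and try to verify $\mathbb{E}[\exp(|X+Y|^\alpha / r^\alpha)] \leq 2$, which by the defining infimum gives $\|X+Y\|_{\psi_\alpha} \leq r$. The pointwise bound above factorizes the exponential:
\[
\exp\!\left(\frac{|X+Y|^\alpha}{r^\alpha}\right) \leq \exp\!\left(\frac{|X|^\alpha}{r^\alpha}\right) \cdot \exp\!\left(\frac{|Y|^\alpha}{r^\alpha}\right).
\]
Applying Cauchy--Schwarz to this product of nonnegative random variables gives
\[
\mathbb{E}\!\left[\exp\!\left(\frac{|X+Y|^\alpha}{r^\alpha}\right)\right] \leq \sqrt{\mathbb{E}\!\left[\exp\!\left(\frac{2|X|^\alpha}{r^\alpha}\right)\right] \cdot \mathbb{E}\!\left[\exp\!\left(\frac{2|Y|^\alpha}{r^\alpha}\right)\right]}.
\]

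I would then choose $r$ so that both factors on the right are immediately bounded by $2$ via the hypothesis. Taking $r := 2^{1/\alpha} \max(s,t)$ ensures $2/r^\alpha \leq 1/s^\alpha$ and $2/r^\alpha \leq 1/t^\alpha$, so each factor inside the square root is at most $\mathbb{E}[\exp(|X|^\alpha / s^\alpha)] \leq 2$ and $\mathbb{E}[\exp(|Y|^\alpha / t^\alpha)] \leq 2$ respectively. The product is thus at most $4$, and its square root is at most $2$. Hence $\|X+Y\|_{\psi_\alpha} \leq 2^{1/\alpha} \max(s,t) \leq 2^{1/\alpha}(s+t)$, which even slightly sharpens the claimed inequality.

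No real obstacle arises: the only conceptual move is the $\alpha$-subadditivity of $t \mapsto t^\alpha$, after which the argument is routine manipulation of the definition of $\|\cdot\|_{\psi_\alpha}$ together with a single Cauchy--Schwarz. The slight subtlety is the correct choice of $r$; picking $r = s + t$ directly would not make the factors under the square root controllable by the definition, whereas scaling by $2^{1/\alpha}$ exactly compensates for the factor $2$ introduced by Cauchy--Schwarz.
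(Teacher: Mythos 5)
Your proof is correct. The paper itself does not prove this lemma---it is imported as Lemma A.3 from G\"otze, Sambale and Sinulis and used as a black box---so there is no in-paper argument to compare against. Your argument is the natural one: $\alpha$-subadditivity of $x\mapsto x^\alpha$ for $\alpha\in(0,1)$ gives the pointwise bound $|X+Y|^\alpha\le|X|^\alpha+|Y|^\alpha$, the exponential then factorizes, and a single application of Cauchy--Schwarz controls the product; choosing the trial radius $r=2^{1/\alpha}\max(s,t)$ exactly absorbs the extra factor $2$ that Cauchy--Schwarz introduces inside the exponent, yielding $\mathbb{E}[\exp(|X+Y|^\alpha/r^\alpha)]\le\sqrt{2\cdot 2}=2$. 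The only minor point worth making explicit is that you use $\mathbb{E}[\exp(|X|^\alpha/s^\alpha)]\le 2$ at $s=\|X\|_{\psi_\alpha}$ exactly, which does hold because $t\mapsto\mathbb{E}[\exp(|X|^\alpha/t^\alpha)]$ is decreasing and, by monotone convergence along $t_n\downarrow s$, the infimum in the definition of the Orlicz norm is attained; the degenerate cases $s=0$, $t=0$, or $s=\infty$ are all trivial. Your version in fact gives the slightly sharper conclusion $\|X+Y\|_{\psi_\alpha}\le 2^{1/\alpha}\max(\|X\|_{\psi_\alpha},\|Y\|_{\psi_\alpha})$, from which the stated inequality follows since $\max(s,t)\le s+t$.
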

        \begin{lem}[Lemma 4.1.5 in \cite{vershyninHighdimensionalProbabilityIntroduction2018}]\label{Approximate_isometries}
            Let \( A \) be an \( m \times n \) real matrix and \( \delta > 0 \). Suppose that
            \begin{align*}
                \|A^{T}A - I_n\|_{\mathrm{op}} \leq \max(\epsilon, \epsilon^2).
            \end{align*}
            Then 
            \begin{align*}
                (1 - \epsilon) \|x\| \leq \|Ax\| \leq (1 + \epsilon) \|x\| \quad \text{for all } x \in \mathbb{R}^n.
            \end{align*}
            Consequently, 
            \begin{align*}
                1 - \epsilon \leq s_k(A) \leq 1 + \epsilon \quad \text{for all } k \in [n].
            \end{align*}
        \end{lem}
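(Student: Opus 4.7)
The plan is to unfold the operator-norm hypothesis into a quadratic-form estimate on $A^T A - I_n$, then convert the resulting squared-norm sandwich into a linear one by taking square roots with a case analysis on whether $\epsilon \le 1$ or $\epsilon > 1$, and finally read off the singular-value bounds from the norm bound via the Courant--Fischer characterization.

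First, I would use that for any symmetric matrix $B$ and any nonzero $x$, $|x^T B x| \le \|B\|_{\mathrm{op}} \|x\|^2$. Applying this to $B = A^T A - I_n$ and expanding $x^T A^T A x = \|Ax\|^2$, the hypothesis gives
\begin{equation*}
    \bigl|\|Ax\|^2 - \|x\|^2\bigr| \le \max(\epsilon,\epsilon^2)\,\|x\|^2 \quad\text{for all } x \in \mathbb{R}^n,
\end{equation*}
so $\|Ax\|^2$ lies in the interval $[(1-\max(\epsilon,\epsilon^2))\|x\|^2,\,(1+\max(\epsilon,\epsilon^2))\|x\|^2]$.

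Next I would take square roots. The clean step here is the elementary inequalities $\sqrt{1+t} \le 1+\epsilon$ and $\sqrt{1-t} \ge 1-\epsilon$ (when the left-hand side is real) for $t = \max(\epsilon,\epsilon^2)$. Both split by cases: if $\epsilon \le 1$, then $t=\epsilon$, and $(1+\epsilon)^2 = 1+2\epsilon+\epsilon^2 \ge 1+\epsilon$ while $(1-\epsilon)^2 = 1-2\epsilon+\epsilon^2 \le 1-\epsilon$ because $\epsilon^2 \le \epsilon$; if $\epsilon \ge 1$, then $t=\epsilon^2$, and $(1+\epsilon)^2 = 1+2\epsilon+\epsilon^2 \ge 1+\epsilon^2$, while the lower bound $(1-\epsilon)\|x\| \le \|Ax\|$ is automatic since the left side is nonpositive and $\|Ax\| \ge 0$. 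Combining these, $(1-\epsilon)\|x\| \le \|Ax\| \le (1+\epsilon)\|x\|$ for every $x\in\mathbb{R}^n$, which is the first conclusion.

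Finally, the singular-value statement follows from the Courant--Fischer / min-max characterization $s_k(A) = \max_{\dim V = k} \min_{x\in V,\, \|x\|=1} \|Ax\|$, because the derived norm sandwich gives $\min_{\|x\|=1}\|Ax\| \ge 1-\epsilon$ and $\max_{\|x\|=1}\|Ax\| \le 1+\epsilon$, hence every $s_k(A) \in [1-\epsilon, 1+\epsilon]$ for $k\in[n]$. There is no genuine obstacle in this argument; the only place that requires any care is the case split when taking square roots, since naively one would want $\sqrt{1-\max(\epsilon,\epsilon^2)}$ to make sense, and the $\epsilon \ge 1$ case must be handled by the trivial observation that the lower bound is then vacuous.
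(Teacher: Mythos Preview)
Your proof is correct and is essentially the standard argument that appears in Vershynin's textbook; note, however, that the paper itself does not prove this lemma but merely cites it as Lemma~4.1.5 of \cite{vershyninHighdimensionalProbabilityIntroduction2018}. There is nothing to compare against in the paper, and your write-up matches the textbook proof.
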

    
        \begin{proof}[Proof of Proposition \ref{bounds_singular_value}]
            By Lemma \ref{Net_lemma}, there exists a \( \frac{1}{4} \)-net \( \mathcal{N}_{1/4} \) of the unit sphere \( S^{n-1} \) with cardinality \( |\mathcal{N}_{1/4}| \leq 9^n \). By Lemma \ref{Computing_operator_norm}, we have
            \begin{align}
                \left\|\frac{1}{\sigma^2 p}\tilde{X}^{T}\tilde{X} - I_n\right\|_{\mathrm{op}} &\leq 2 \underset{u \in \mathcal{N}_{1/4}}{\max}\left|\left(\left(\frac{1}{\sigma^2 p}\tilde{X}^{T}\tilde{X} - I_n\right)u\right) \cdot u\right| \nonumber\\
                &= 2 \underset{u \in \mathcal{N}_{1/4}}{\max}\left|\frac{1}{\sigma^2 p}\|\tilde{X}u\|^2 - 1\right|.\label{leqmax}
            \end{align}
            Fix \( u \in S^{n-1} \). Let \( r_i \in \mathbb{R}^n \) denote the \( i \)-th row of \( Q = [q_1, \cdots, q_n] \in \mathbb{R}^{p \times n} \). We have
            \begin{align*}
                &\frac{1}{\sigma^2p}\|\tilde{X}u\|^2  \\
                &= \frac{1}{p}\sum_{i=1}^p \frac{1}{\sigma^2}((r_i \odot \tilde{y} + \mu_i \mathbf{1})\cdot u)^2\\
                &=\frac{1}{p}\sum_{i=1}^p \frac{1}{\sigma^2}\left(((r_i\odot\tilde{y}) \cdot u)^2 + 2 ((r_i\odot\tilde{y}) \cdot u)\sum_{j=1}^n \mu_i u_j + \mu_i^2\left(\sum_{j=1}^n  u_j\right)^2\right)\\
                &= \frac{1}{p}\sum_{i=1}^p \left(f_i + \frac{\mu_i^2}{\sigma^2}\left(\sum_{j=1}^n  u_j\right)^2\right),
            \end{align*}
            where $f_i =\frac{1}{\sigma^2}\left((r_i\cdot(\tilde{y}\odot u))^2 + 2 (r_i\cdot(\tilde{y}\odot u)) \mu_i\sum_{j=1}^n u_j\right)$.
            Thus,
            \begin{align*}
                &\mathbb{P}\left[\left|\frac{1}{\sigma^2 p}\|Xu\|^2 - 1\right| > \frac{\epsilon}{2}\right] \\
                &= \mathbb{P}\left[\left|\frac{1}{p}\sum_{i=1}^p \left(f_i + \frac{\mu_i^2}{\sigma^2}\left(\sum_{j=1}^n  u_j\right)^2\right) - 1\right| > \frac{\epsilon}{2}\right] \\
                &= \sum_{\tilde{y} \in \{-1, 1\}^n} \underset{Q \sim P_{\mathrm{clust}}^{n}}{\mathbb{P}}\left[\left|\frac{1}{p}\sum_{i=1}^p \left(f_i + \frac{\mu_i^2}{\sigma^2}\left(\sum_{j=1}^n  u_j\right)^2\right) - 1\right| > \frac{\epsilon}{2}\right] 2^{-n} \\
                &= \sum_{\tilde{y} \in \{-1, 1\}^n} \underset{Q \sim P_{\mathrm{clust}}^{n}}{\mathbb{P}}\left[\frac{1}{p}\sum_{i=1}^p f_i  - 1 > \frac{\epsilon}{2} - \frac{\|\mu\|^2}{\sigma^2 p}\left(\sum_{j=1}^n  u_j\right)^2\right] 2^{-n} \\
                &\quad+ \sum_{\tilde{y} \in \{-1, 1\}^n} \underset{Q \sim P_{\mathrm{clust}}^{n}}{\mathbb{P}}\left[\frac{1}{p}\sum_{i=1}^p f_i  - 1  < -\frac{\epsilon}{2} - \frac{\|\mu\|^2}{\sigma^2 p}\left(\sum_{j=1}^n  u_j\right)^2\right] 2^{-n}.\\
                &\leq \sum_{\tilde{y} \in \{-1, 1\}^n} \underset{Q \sim P_{\mathrm{clust}}^{n}}{\mathbb{P}}\left[\frac{1}{p}\sum_{i=1}^p f_i  - 1 > \frac{\epsilon}{2} - \frac{n\|\mu\|^2}{\sigma^2 p}\right] 2^{-n} \\
                &\quad+ \sum_{\tilde{y} \in \{-1, 1\}^n} \underset{Q \sim P_{\mathrm{clust}}^{n}}{\mathbb{P}}\left[\frac{1}{p}\sum_{i=1}^p f_i  - 1  < -\frac{\epsilon}{2}\right] 2^{-n}.
            \end{align*}
            The final inequality was derived using $\sum_{j=1}^n u_j \leq \sqrt{n}$.
            \( \underset{Q \sim P_{\mathrm{clust}}^{n}}{\mathbb{E}}[f_i] = 1 \) and \( \{f_i\}_{i=1}^p \) are independent random variables 
            when conditioned on $\tilde{y}$. All elements of $Q$ are $\alpha$ sub-exponential with their exponential Orlicz norm at most $1$. By Proposition \ref{sum_concentration_inequality}, there is a constant $c$ depending on $\alpha$ such that for any $t\geq 0$,
            \begin{align*}
                \mathbb{P}[|r_i\cdot(\tilde{y}\odot u) |\geq t]&\leq 2\exp\left(-c\frac{t^\alpha}{\|\tilde{y}\odot u\|^\alpha}\right) =2\exp\left(-ct^\alpha\right).
            \end{align*} 
            Thus, there is a constant $K_1$ such that
            \begin{align*}
                \|r_i\cdot(\tilde{y}\odot u)\|_{\psi_\alpha}\leq K_1.
            \end{align*} 
            Then, we have
            \begin{align*}
                \|(r_i\cdot(\tilde{y}\odot u))^2\|_{\psi_{\alpha/2}} & = \inf\left\{t>0 : \mathbb{E}\left[\exp\left(\frac{|r_i\cdot(\tilde{y}\odot u)|^{\alpha}}{\sqrt{t}^{\alpha}}\right)\right]\leq 2\right\}
                \leq \sqrt{K_1},
            \end{align*} 
            and there is a constant $K_2$ such that
            \begin{align*}
                \|r_i\cdot(\tilde{y}\odot u)\|_{\psi_{\alpha/2}} \leq K_2.
            \end{align*} 
            By Lemma \ref{sum_alpha} and the fact that $\|\cdot\|_{\psi_1}$ preserves the triangle inequality, we obtain 
            \begin{align*}
                \|f_i\|_{\psi_{\alpha/2}} &= \frac{1}{\sigma^2}\left\|(r_i\cdot(\tilde{y}\odot u))^2 + 2 (r_i\cdot(\tilde{y}\odot u)) \mu_i\sum_{j=1}^n u_j\right\|_{\psi_{\alpha/2}}\\
                &\leq \frac{2^{2/\alpha}}{\sigma^2}\left(\left\|(r_i\cdot(\tilde{y}\odot u))^2 \right\|_{\psi_{\alpha/2}}+\left\|2 (r_i\cdot(\tilde{y}\odot u))\mu_i\sum_{j=1}^n  u_j\right\|_{\psi_{\alpha/2}}\right)\\
                &=\frac{2^{2/\alpha}}{\sigma^2}\left(\sqrt{K_1}+\left|2\mu_i\
                \sum_{j=1}^nu_j\right|K_2\right)\\
                &\leq \frac{2^{2/\alpha}}{\sigma^2}\left(\sqrt{K_1}+2\max_{i}|\mu_i|
                \sqrt{n}K_2\right).
            \end{align*}
            Setting \( \frac{\epsilon}{2} \geq \frac{n\|\mu\|^2}{\sigma^2 p} \), by Proposition \ref{sum_concentration_inequality}, there exists a constant \( c \) such that
            \begin{align*}
                \mathbb{P}\left[\left|\frac{1}{\sigma^2 p}\|\tilde{X}u\|^2 - 1\right| > \frac{\epsilon}{2}\right] 
                &\leq \sum_{\tilde{y} \in \{-1, 1\}^n} 2 \exp\left(-\frac{\sigma^\alpha}{2c}\frac{\left(\frac{\epsilon}{2} - \frac{n\|\mu\|^2}{\sigma^2 p}\right)^{\alpha/2} p^{\alpha/2}}{\left(\sqrt{K_1}+2\max_{i}|\mu_i|
                \sqrt{n}K_2\right)^{\alpha/2}}\right) 2^{-n} \\
                &\quad+ \sum_{\tilde{y} \in \{-1, 1\}^n} 2 \exp\left(-\frac{\sigma^\alpha}{2c}\frac{\left(\frac{\epsilon}{2}\right)^{\alpha/2} p^{\alpha/2}}{\left(\sqrt{K_1}+2\max_{i}|\mu_i|
                \sqrt{n}K_2\right)^{\alpha/2}}\right) 2^{-n}.
            \end{align*}
        
            By setting 
            \[
            \frac{\epsilon}{2} = \frac{n\|\mu\|^2}{\sigma^2 p} + \frac{(2c)^{2/\alpha}\left(\sqrt{K_1}+2\max_{i}|\mu_i|
                \sqrt{n}K_2\right)}{\sigma^2p}\left(n \log 9 + \log\frac{4}{\delta}\right)^\frac{2}{\alpha},
            \]
            we have
            \begin{align*}
                \mathbb{P}\left[\left|\frac{1}{\sigma^2 p}\|\tilde{X}u\|^2 - 1\right| > \frac{\epsilon}{2}\right] &\leq \sum_{\tilde{y} \in \{-1, 1\}^n} 4 \exp\left(-n \log 9 - \log \frac{4}{\delta}\right) 2^{-n} \\
                &= 4 \exp\left(-n \log 9 - \log \frac{4}{\delta}\right).
            \end{align*}
        
            By inequality (\ref{leqmax}) and the union bound method, we have
            \begin{align*}
                \left\|\frac{1}{\sigma^2 p} \tilde{X}^{T}\tilde{X} - I_n\right\|_{\mathrm{op}} &\leq 2 \underset{u \in \mathcal{N}_{1/4}}{\max}\left|\left(\left(\frac{1}{\sigma^2 p}\tilde{X}^{T}\tilde{X} - I_n\right)u\right) \cdot u\right| \\
                & \leq\mathbb{P}\left[\underset{u \in \mathcal{N}_{1/4}}{\max}\left|\frac{1}{\sigma^2 p}\|\tilde{X}u\|^2 - 1\right| > \frac{\epsilon}{2}\right] \\
                &\leq 9^n \cdot 4 \exp\left(-n \log 9 - \log \frac{4}{\delta}\right) \\
                &= \delta.
            \end{align*}
        
            By Lemma \ref{Approximate_isometries}, we conclude that
            \begin{align*}
                &s_1(\tilde{X}) \\
                &\leq \sigma \sqrt{p} \left(1 +  \frac{2n\|\mu\|^2}{\sigma^2 p} + \frac{2(2c)^{2/\alpha}\left(\sqrt{K_1}+2\max_{i}|\mu_i|
                \sqrt{n}K_2\right)}{\sigma^2p}\left(n \log 9 + \log\frac{4}{\delta}\right)^{2/\alpha}\right),
            \end{align*}
            which completes our proof.
            A similar argument holds for \( X = [y_1\tilde{x_1}, \cdots, y_n\tilde{x_n}] \), which consists of labels \( y \) flipped with a certain probability \( \eta \) without depending on $\tilde{x}$.
        \end{proof}
        We will prove Proposition \ref{bounds_singular_value_2}, which provides an upper bound for the singular values of $X$, by making a slight modification to the proof of Proposition \ref{bounds_singular_value}. In the proof of Proposition \ref{bounds_singular_value_2} , we use the following lemma.
        \begin{lem}[Lemma A.2 in \cite{gotze2021concentration} and Proposition 8.1 in \cite{2010growth}]\label{LPnorm}
            Let $d_\alpha := \frac{(\alpha e)^{1/\alpha}}{2}$ and $D_\alpha := (2e)^{1/\alpha}$ for $\alpha \in (0,1)$, and let $d_\alpha := \frac{1}{2}$ and $D_\alpha := 2e$ for $\alpha \geq 1$. For any $\alpha > 0$ and any random variable $X$, we have
            \begin{align*}
                d_\alpha \sup_{p \geq 1} \|X\|_{L_p}    \leq \|X\|_{\psi_\alpha} \leq D_\alpha \sup_{p \geq 1} \|X\|_{L_p}.
            \end{align*}
        \end{lem}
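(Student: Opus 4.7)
The proof splits into two independent inequalities, each handled by a standard technique. For the lower bound $d_\alpha \sup_{p \geq 1}\|X\|_{L_p} \leq \|X\|_{\psi_\alpha}$, I would write $K := \|X\|_{\psi_\alpha}$. By definition $\mathbb{E}[\exp(|X|^\alpha/K^\alpha)] \leq 2$, and Markov's inequality applied to the nonnegative random variable $\exp(|X|^\alpha/K^\alpha)$ yields the tail bound $\mathbb{P}[|X|\geq t] \leq 2\exp(-t^\alpha/K^\alpha)$. Using the layer-cake formula $\mathbb{E}[|X|^p] = p\int_0^\infty t^{p-1}\mathbb{P}[|X|\geq t]\,dt$ and the substitution $u = t^\alpha/K^\alpha$, one obtains $\mathbb{E}[|X|^p] \leq (2p/\alpha)\,K^p\,\Gamma(p/\alpha)$. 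Taking the $p$-th root and using Stirling's estimate $\Gamma(s)^{1/s} \approx s/e$ then supplies the constant $d_\alpha$, with the piecewise split at $\alpha = 1$ arising from whether the $p^{1/p}$ factor or the $\Gamma$-asymptotic dominates in the optimisation over $p$.

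For the upper bound $\|X\|_{\psi_\alpha} \leq D_\alpha \sup_{p\geq 1}\|X\|_{L_p}$, I would set $M := \sup_{p\geq 1}\|X\|_{L_p}$ so that $\mathbb{E}[|X|^p] \leq M^p$ for every $p\geq 1$. Expanding the exponential as a power series gives
\begin{align*}
    \mathbb{E}\left[\exp\left(\frac{|X|^\alpha}{t^\alpha}\right)\right] = 1 + \sum_{k\geq 1}\frac{\mathbb{E}[|X|^{\alpha k}]}{k!\, t^{\alpha k}}.
\end{align*}
For $\alpha k \geq 1$ the moment hypothesis directly bounds $\mathbb{E}[|X|^{\alpha k}]\leq M^{\alpha k}$; for $\alpha k < 1$ (relevant only when $\alpha < 1$) Jensen's inequality gives $\mathbb{E}[|X|^{\alpha k}] \leq (\mathbb{E}|X|)^{\alpha k} \leq M^{\alpha k}$. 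Summing the dominating series and requiring the total to be at most $2$ then forces $t \geq D_\alpha M$ for the claimed constant.

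The main obstacle is pinning down the exact form of $d_\alpha$ and $D_\alpha$ and justifying the case split at $\alpha = 1$. For $\alpha < 1$ the concavity of $x \mapsto x^\alpha$ forces the Jensen detour to handle the low-order Taylor terms, and the Stirling expansion then contributes the $(\alpha e)^{1/\alpha}$ and $(2e)^{1/\alpha}$ factors visible in the stated constants; for $\alpha \geq 1$ the Taylor series is cleaner, but the Stirling optimisation instead yields the constants $1/2$ and $2e$. Since the lemma is attributed to Lemma A.2 of G\"otze et al.\ and to Proposition 8.1 of the cited growth-of-$L^p$-norms reference, I would follow their careful bookkeeping for the sharp constants rather than re-derive them from scratch, treating the overall result as the standard equivalence between exponential Orlicz norms and suitably rescaled suprema of $L^p$ norms.
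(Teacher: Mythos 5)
The paper does not prove this lemma; it is cited directly from G\"otze et al.\ and the growth-of-$L^p$-norms reference, so there is no internal proof to compare against. Your two-sided strategy (tail bound plus layer-cake plus Stirling for the lower direction, a power-series expansion of the moment generating function for the upper direction) is indeed the standard route to this kind of Orlicz--$L^p$ equivalence, and with correct bookkeeping it yields the cited result.

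There is, however, a genuine gap, and your own calculation points at it. The statement as transcribed is missing a normalization factor of $p^{-1/\alpha}$ inside the supremum: the cited sources assert $d_\alpha \sup_{p\geq 1} p^{-1/\alpha}\|X\|_{L_p} \leq \|X\|_{\psi_\alpha} \leq D_\alpha \sup_{p\geq 1} p^{-1/\alpha}\|X\|_{L_p}$. Your layer-cake step produces $\mathbb{E}[|X|^p] \leq (2p/\alpha)K^p\Gamma(p/\alpha)$ and hence, after Stirling, $\|X\|_{L_p} \lesssim K\,(p/(\alpha e))^{1/\alpha}$, which \emph{grows} with $p$; the quantity $\sup_{p\geq 1}\|X\|_{L_p}$ equals $\|X\|_{L_\infty}$ and is infinite for any unbounded $X$, so the lower bound as literally stated would read ``$\infty \leq$ finite.'' The $p^{1/\alpha}$ growth your Stirling factor exhibits is exactly what the missing $p^{-1/\alpha}$ is designed to cancel. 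Symmetrically, defining $M := \sup_{p\geq 1}\|X\|_{L_p}$ in your upper bound step implicitly restricts to bounded $X$, at which point the power-series argument is vacuous; with the normalized $M := \sup_{p\geq 1} p^{-1/\alpha}\|X\|_{L_p}$ you instead have $\mathbb{E}[|X|^{\alpha k}] \leq M^{\alpha k}(\alpha k)^{k}$, and comparing $(\alpha k)^k / k!$ to $(\alpha e)^k$ via $k! \geq (k/e)^k$ closes the series and produces the $D_\alpha$ constant. Note that the paper itself uses the normalized version downstream: in the proof of Proposition 9 it invokes this lemma to conclude $\|(r_i\odot y)\cdot u\|_{L_p}\leq K_3\,p^{1/\alpha}$, which only follows with $p^{-1/\alpha}$ inside the supremum. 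So your sketch does not actually establish the statement as written (and cannot, because that statement is false); once the normalization is restored, your approach is sound.
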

        \begin{proof}[Proof of Proposition \ref{bounds_singular_value_2}]
             Fix \( u \in S^{n-1} \). Let \( r_i \in \mathbb{R}^n \) denote the \( i \)-th row of \( Q = [q_1, \cdots, q_n] \in \mathbb{R}^{p \times n} \). We have
            \begin{align*}
                &\frac{1}{p}\|Xu\|^2  \\
                &= \frac{1}{p}\sum_{i=1}^p ((r_i \odot y + \mu_i y\odot\tilde{y})\cdot u)^2\\
                &=\frac{1}{p}\sum_{i=1}^p \left(((r_i\odot y) \cdot u)^2 + 2 \mu_i ((r_i\odot y) \cdot u)  ((y\odot\tilde{y})\cdot u) + \mu_i^2\left(\sum_{j=1}^n  y_j \tilde{y}_j   u_j\right)^2\right)\\
                &= \frac{1}{p}\sum_{i=1}^p \left(f_i + \mu_i^2\left(\sum_{j=1}^n  y_j \tilde{y}_j u_j\right)^2\right),
            \end{align*}
             where $f_i =((r_i\odot y)\cdot u)^2 + 2\mu_i ((r_i\odot y)\cdot u) ((y\odot\tilde{y})\cdot u)$. Since $y\in\{-1,1\}$, each component of $r_i\odot y$ is $\alpha$ sub-exponential with their exponential Orlicz norm at most $1$. By Proposition \ref{sum_concentration_inequality}, there is a constant $c$ depending on $\alpha$ such that for any $t\geq 0$,
            \begin{align*}
                \mathbb{P}[|(r_i\odot y)\cdot u |\geq t]&\leq 2\exp\left(-c\frac{t^\alpha}{\| u\|^\alpha}\right) =2\exp\left(-ct^\alpha\right).
            \end{align*} 
            Thus, there is a constant $K_1$ such that
            \begin{align*}
                \|(r_i\odot y)\cdot u\|_{\psi_\alpha}\leq K_1,
            \end{align*} 
            and 
            \begin{align*}
                \|2\mu_i ((r_i\odot y)\cdot u)((y\odot\tilde{y})\cdot u)\|_{\psi_\alpha} &= 2|\mu_i|\left\|((r_i\odot y)\cdot u)((y\odot\tilde{y})\cdot u)\right\|_{\psi_\alpha}\\
                &\leq 2|\mu_i|\left\||(r_i\odot y)\cdot u||(y\odot\tilde{y})\cdot u|\right\|_{\psi_\alpha}\\\
                &\leq 2|\mu_i|\sum_{j=1}^n|u_j|\left\|(r_i\odot y)\cdot u\right\|_{\psi_\alpha}\\
                & \leq 2 K_1 |\mu_i|\sum_{j=1}^n|u_j|.
            \end{align*}
            By Lemma \ref{LPnorm}, there is a constant $K_2$ and $K_3$ such that for any $p\geq 1$,
            \begin{gather*}
                \|(r_i\odot y)\cdot u\|_{L_p} \leq K_3 p^{1/\alpha},\\
                \|2\mu_i ((r_i\odot y)\cdot u)((y\odot\tilde{y})\cdot u)\|_{L^p}\leq K_2 |\mu_i|\sum_{j=1}^n|u_j| p^{1/\alpha},
            \end{gather*}
            Therefore,
            \begin{gather*}
                \mathbb{E}[((r_i\odot y)\cdot u)^2] \in[0, 2^{1/\alpha}K_3],\\
                \mathbb{E}[2\mu_i ((r_i\odot y)\cdot u)((y\odot\tilde{y})\cdot u)] \in \left[-K_2 |\mu_i|\sum_{j=1}^n|u_j| , K_2 |\mu_i|\sum_{j=1}^n|u_j| \right].
            \end{gather*}
            By combining these two, we have
            \begin{align*}
                \mathbb{E}[f_i] \in \left[-K_2|\mu_i|\sum_{j=1}^n|u_j| , 2^{1/\alpha}K_3 +K_2 |\mu_i|\sum_{j=1}^n|u_j| \right].
            \end{align*}
            Thus,
            \begin{align*}
                &\mathbb{P}\left[\left|\frac{1}{p}\|Xu\|^2 - 1\right| > \frac{\epsilon}{2}\right] \\
                &= \mathbb{P}\left[\left|\frac{1}{p}\sum_{i=1}^p \left(f_i + \mu_i^2\left(\sum_{j=1}^n  y_j \tilde{y}_j u_j\right)^2\right) - 1\right| > \frac{\epsilon}{2}\right] \\
                &=\mathbb{P}\left[\frac{1}{p}\sum_{i=1}^p (f_i  - \mathbb{E}[f_i])  > \frac{\epsilon}{2} + 1 -\frac{1}{p}\sum_{i=1}^p\mathbb{E}[f_i] - \frac{\|\mu\|^2}{p}\left(\sum_{j=1}^n  y_j \tilde{y}_j u_j\right)^2\right] \\
                &\quad+\mathbb{P}\left[\frac{1}{p}\sum_{i=1}^p( f_i  -\mathbb{E}[f_i])  < -\frac{\epsilon}{2}+1 -\frac{1}{p}\sum_{i=1}^p\mathbb{E}[f_i]  - \frac{\|\mu\|^2}{p}\left(\sum_{j=1}^n  y_j \tilde{y}_j u_j\right)^2\right]\\
                &\leq\mathbb{P}\left[\frac{1}{p}\sum_{i=1}^p (f_i  - \mathbb{E}[f_i])  > \frac{\epsilon}{2} -\left(2^{1/\alpha}K_3+\frac{K_2}{p} \sum_{i=1}^p|\mu_i|\sum_{j=1}^n|u_j|\right)  - \frac{n\|\mu\|^2}{p}\right] \\
                &\quad+\mathbb{P}\left[\frac{1}{p}\sum_{i=1}^p( f_i  -\mathbb{E}[f_i])  < -\frac{\epsilon}{2}+1 -\left(-\frac{K_2}{p} \sum_{i=1}^p|\mu_i|\sum_{j=1}^n|u_j| \right) \right]\\
                &\leq\mathbb{P}\left[\frac{1}{p}\sum_{i=1}^p (f_i  - \mathbb{E}[f_i])  > \frac{\epsilon}{2} -\left(2^{1/\alpha}K_3+\frac{K_2}{p} \sqrt{n}\sum_{i=1}^p|\mu_i|\right)  - \frac{n\|\mu\|^2}{p}\right] \\
                &\quad+\mathbb{P}\left[\frac{1}{p}\sum_{i=1}^p( f_i  -\mathbb{E}[f_i])  < -\frac{\epsilon}{2}+1 + \frac{K_2}{p} \sqrt{n}\sum_{i=1}^p|\mu_i| \right].
            \end{align*}
            By the same method as the proof of Proposition \ref{bounds_singular_value}, we have
            \begin{align*}
                \|f_i\|_{\psi_{\alpha/2}}\leq 2^{2/\alpha}\left(\sqrt{K_1}+2\max_{i}|\mu_i|
                \sqrt{n}K_2\right).
            \end{align*}
            Setting \(\frac{\epsilon}{2} \geq 2^{1/\alpha}K_3+\frac{K_2}{p} \sqrt{n}\sum_{i=1}^p|\mu_i| + \frac{n\|\mu\|^2}{p}+ 1 \), by Proposition \ref{sum_concentration_inequality}, there exists a constant \( c \) such that
            \begin{align*}
                &\mathbb{P}\left[\left|\frac{1}{p}\|\tilde{X}u\|^2 - 1\right| > \frac{\epsilon}{2}\right] \\
                &\leq \sum_{\tilde{y} \in \{-1, 1\}^n} 2 \exp\left(-\frac{1}{2c}\frac{\left(\frac{\epsilon}{2}  -\left(2^{1/\alpha}K_3+\frac{K_2}{p} \sqrt{n}\sum_{i=1}^p|\mu_i|\right)  - \frac{n\|\mu\|^2}{p}\right)^{\alpha/2} p^{\alpha/2}}{\left(\sqrt{K_1}+2\max_{i}|\mu_i|
                \sqrt{n}K_2\right)^{\alpha/2}}\right) 2^{-n} \\
                &\quad+ \sum_{\tilde{y} \in \{-1, 1\}^n} 2 \exp\left(-\frac{1}{2c}\frac{\left(\frac{\epsilon}{2} - 1 -\frac{K_2}{p} \sqrt{n}\sum_{i=1}^p|\mu_i|\right)^{\alpha/2} p^{\alpha/2}}{\left(\sqrt{K_1}+2\max_{i}|\mu_i|
                \sqrt{n}K_2\right)^{\alpha/2}}\right) 2^{-n}.
            \end{align*}

            By setting 
            \begin{align*}
            \frac{\epsilon}{2} &= 2^{1/\alpha}K_3+\frac{K_2}{p} \sqrt{n}\sum_{i=1}^p|\mu_i| + \frac{n\|\mu\|^2}{p}+ 1  \\
            &\quad+\frac{(2c)^{2/\alpha}\left(\sqrt{K_1}+2\max_{i}|\mu_i|
                \sqrt{n}K_2\right)}{p}\left(n \log 9 + \log\frac{4}{\delta}\right)^\frac{2}{\alpha},
            \end{align*}
            we have
            \begin{align*}
                \mathbb{P}\left[\left|\frac{1}{ p}\|Xu\|^2 - 1\right| > \frac{\epsilon}{2}\right] 
                &\leq 4 \exp\left(-n \log 9 - \log \frac{4}{\delta}\right).
            \end{align*}
            The remainder of the proof is the same as the proof of Proposition \ref{bounds_singular_value}.
        \end{proof}
    \subsection{Proofs of Corollary \ref{order_of_beta_by_p} and \ref{order_of_beta_by_n}}\label{Proof_of_cor}
        \begin{proof}[Proof of Corollary \ref{order_of_beta_by_p} and \ref{order_of_beta_by_n}]
            We have
            \begin{align*}
                \sum_{i=1}^p|\mu_i| &\leq \sqrt{p}\|\mu\| ,
            \end{align*}
            and
            \begin{align*}
                \max_{i}|\mu_i| \leq \|\mu\|.
            \end{align*}
            Under assumptions (A3) and (A4), we have the following inequalities:
            \begin{itemize}
                \item $p \geq C \|\mu\|^2 n$.
                \item $p \geq C \|\mu\| n^{3/2}\left(\log\frac{n}{\delta}\right)^{\frac{1}{\alpha}}$.
                \item $p \geq C n^2\left(\log\frac{n}{\delta}\right)^{\frac{2}{\alpha}}$.
            \end{itemize}
            Therefore, we have
            \begin{align*}
                &c_5 + \frac{c_6\sqrt{n}}{p} \sum_{i=1}^p|\mu_i| + \frac{2n\|\mu\|^2}{p} \\
                &\quad +\frac{c_7 + c_8 \max_{i}|\mu_i|\sqrt{n}}{p}\left(n \log 9 + \log\frac{4}{\delta}\right)^{\frac{2}{\alpha}} \\
                &\leq c_5 + \frac{c_6\sqrt{n}\|\mu\|}{\sqrt{p}} + \frac{2n\|\mu\|^2}{p} \\
                &\quad +\frac{c_7 + c_8 \|\mu\|\sqrt{n}}{p}\left(n \log 9 + \log\frac{4}{\delta}\right)^{\frac{2}{\alpha}} \\
                &\leq c_5 + \frac{c_6}{C} n^{-1}\left(\log\frac{n}{\delta}\right)^{-\frac{1}{\alpha}} + \frac{2}{C} \\
                &\quad +\frac{c_7}{C} n^{-2}\left(\log\frac{n}{\delta}\right)^{-\frac{2}{\alpha}} + \frac{c_8}{C}n^{-1}\left(\log\frac{n}{\delta}\right)^{-\frac{1}{\alpha}}\left(n \log 9 + \log\frac{4}{\delta}\right)^{\frac{2}{\alpha}} \\
                &= O\left(1 + n^{\frac{2}{\alpha}-1}(\log n)^{-\frac{1}{\alpha}}\right).
            \end{align*}
            By performing a similar calculation, we obtain
            \begin{align*}
                1 + \frac{2n}{p} \left( \|\mu\|^2 + \sqrt{p} \left(\log\frac{n}{\delta}\right)^{\frac{1}{\alpha}}\right) = O(1). 
            \end{align*}
            Therefore, if we regard $n$ as fixed, we have
            \begin{align*}
                \min \Bigg(\frac{8}{p} &\Bigg(c_5 + \frac{c_6\sqrt{n}}{p} \sum_{i=1}^p|\mu_i| + \frac{2n\|\mu\|^2}{p} \\
                    & \quad + \frac{c_7 + c_8 \max_{i}|\mu_i|\sqrt{n}}{p}\left(n \log 9 + \log\frac{4}{\delta}\right)^{\frac{2}{\alpha}}\Bigg)^{-2}, \\
                    &\frac{1}{c_2 p} \left(1 + \frac{2n}{p} \left( \|\mu\|^2 + \sqrt{p} \left(\log\frac{n}{\delta}\right)^{\frac{1}{\alpha}}\right)\right)^{-1} \Bigg) \\
                    &= O(p).
            \end{align*}
            On the other hand, if we regard $n$ as not fixed, we have
            \begin{align*}
                \min \Bigg(\frac{8}{p} &\Bigg(c_5 + \frac{c_6\sqrt{n}}{p} \sum_{i=1}^p|\mu_i| + \frac{2n\|\mu\|^2}{p} \\
                    & \quad + \frac{c_7 + c_8 \max_{i}|\mu_i|\sqrt{n}}{p}\left(n \log 9 + \log\frac{4}{\delta}\right)^{\frac{2}{\alpha}}\Bigg)^{-2}, \\
                    &\frac{1}{c_2 p} \left(1 + \frac{2n}{p} \left( \|\mu\|^2 + \sqrt{p} \left(\log\frac{n}{\delta}\right)^{\frac{1}{\alpha}}\right)\right)^{-1} \Bigg) \\
                    &= O\left(p^{-1}\left(1 + n^{\frac{2}{\alpha}-1}(\log n)^{-\frac{1}{\alpha}}\right)^{-2}\right).
            \end{align*}
        \end{proof}
    \section{Details of the Figures in the introduction}\label{appendix:intro}
    \subsection{Figure \ref{fig:tail_index} : An example of input in image analysis exhibiting heavier tails than sub-gaussian}\label{appendix:intro_xi}
        In this section, we demonstrate that some of the feature representations derived from real-world image datasets exhibit distributions heavier than sub-gaussian distributions.
        \subsubsection{Methodology}
            To estimate the tail-heaviness of feature distributions, we followed the steps below:
            \begin{enumerate}
                \item A total of $n$ samples were collected from intermediate layers of CNN models trained on several image datasets.
                \item Each sample of feature value was centered by subtracting its mean, and the absolute value of the result was taken.
                \item The upper 5\% of the sorted absolute values was selected, yielding order statistics \( (x^{(1)}, x^{(2)}, \dots, x^{(\lfloor0.05n\rfloor)}) \).
                \item For each \( x^{(i)} \), we computed the corresponding \( z_i = -\log(i/n) \), which approximates \( -\log \mathbb{P}[|X| \geq x^{(i)} ]\).
                \item We then performed a regression of \( (x^{(i)}, z_i) \) against the form \( f(t) = at^\xi + b \), enabling us to estimate the tail parameter \( \xi \). The regression was performed using the non-linear least squares method implemented via the \texttt{scipy.optimize.curve\_fit} function.
            \end{enumerate}
            The tail parameter \( \xi \) is crucial as it characterizes the heaviness of the distribution's tail, where  \( P(|X| \geq t) =\exp(-(at^\xi + b)) \).
            \begin{table}[t]
                \caption{Summary of datasets used in the simulation.}
                \label{table:datasets}
                \centering
                \begin{tabular}{|c|c|c|c|c|}
                \hline
                \textbf{Dataset}      & \textbf{Split}    & \textbf{Number of Images} & \textbf{Image Size} & \textbf{Number of Labels} \\ \hline
                \textbf{CIFAR-10 \cite{krizhevsky2009learning}}     & Training          & 50,000                    & 32x32               & 10                        \\ \cline{2-5} 
                                      & Test              & 10,000                    & 32x32               & 10                        \\ \hline
                \textbf{CIFAR-100 \cite{krizhevsky2009learning}}    & Training          & 50,000                    & 32x32               & 100                       \\ \cline{2-5} 
                                      & Test              & 10,000                    & 32x32               & 100                       \\ \hline
                \textbf{Fashion-MNIST \cite{xiao2017fashion}} & Training         & 60,000                    & 28x28               & 10                        \\ \cline{2-5} 
                                      & Test              & 10,000                    & 28x28               & 10                        \\ \hline
                \textbf{SVHN \cite{netzer2011reading}}         & Training          & 73,257                    & 32x32               & 10                        \\ \cline{2-5} 
                                      & Test              & 26,032                    & 32x32               & 10                        \\ \hline
                \end{tabular}
            \end{table}
            
                \begin{table}[h]
                    \centering
                    \caption{Mean and Variance of Estimated Tail Index ($\xi$)}\label{table:xi_estimation}
                    \scriptsize
                    \begin{tabular}{|l|c|c|}
                    \hline
                    Dataset & Mean ($\hat{\xi}$) & Variance ($\hat{\xi}$) \\
                    \hline
                    CIFAR-10 & 0.9771 & 0.1111 \\
                    CIFAR-100 & 1.0423 & 0.1281 \\
                    Fashion-MNIST & 1.4748 & 0.7295 \\
                    SVHN & 0.9272 & 0.0514 \\
                    Gaussian & 1.6172 & 0.2388 \\
                    Exponential & 0.8996 & 0.0535 \\
                    \hline
                    \end{tabular}
                    \end{table}
                    \begin{figure}[h]
                    \centering
                    \includegraphics[width=0.5\linewidth]{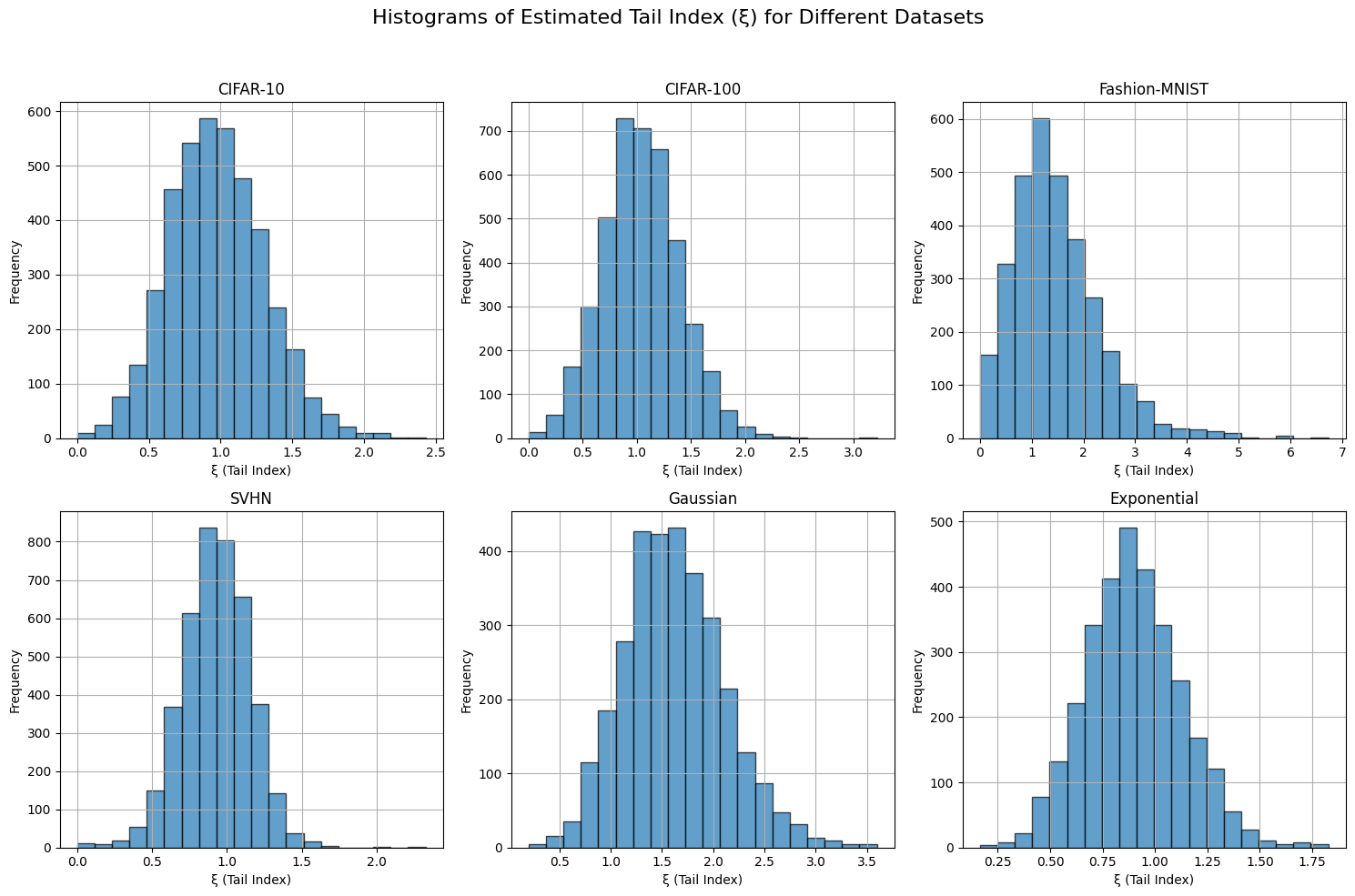}
                    \caption{Histograms of estimated tail index ($\xi$)}
                    \label{hist_xi}
                  \end{figure}
            \subsubsection{Datasets and Models}
            For the simulations, we used intermediate layer outputs from CNN models trained on the datasets listed in Table \ref{table:datasets}.
            
            The CNN architecture used for these datasets consisted of three convolutional layers (with 32, 64, and 64 filters, respectively) followed by max-pooling layers. The final fully connected layers had 64 neurons, with ReLU activations throughout the network. The total number of parameters in the network is typical for small-scale models. The output layer size was adjusted according to the number of classes in each dataset (e.g., 100 for CIFAR-100). This CNN model was trained using the Adam optimizer with a learning rate of 0.001 and cross-entropy loss as the loss function. Training was performed over 100 epochs, with a batch size of 100. All images were normalized and converted to PyTorch tensors prior to training.
            
            The CNN models were trained using the training split of the datasets listed in Table \ref{table:datasets}. After training, the test data from each dataset was used to generate intermediate layer outputs, which were then used in our analysis to evaluate the feature distributions.
            
            To compare these real-world results, we also generated samples from Gaussian and exponential distributions as baseline comparisons, aligning the future vector size with the smallest intermediate output size in this simulation, which is $3136$, and matching the sample size to the smaller value of $10000$.
            
        \subsubsection{Results}
            
            Table \ref{table:xi_estimation} and Figures \ref{fig:tail_index} and \ref{hist_xi} revealed that some intermediate layer outputs from CNN models exhibit distributions with heavier tails than Gaussian distributions. This finding reveals the necessity of developing a theory that addresses heavy-tailed distributions.
            
            It should be noted that, in the case of the Gaussian distribution, the reason why the value of $\xi$ is distributed below $2$ is that, in the samples used for the calculation, the approximation $P(|X| > t) \simeq 2\exp(-t^2/2)$ is not sufficiently accurate. When limited to the samples further in the tail of the distribution, the values of $\xi$ approach $2$.

        \subsection{Figure \ref{fig:gammma205025} : Benign overfitting can occur even for heavy inputs}\label{appendix:intro_benign}

            In this section, we conduct simulations to demonstrate that benign overfitting can occur even in settings with input distributions heavier-tailed than sub-gaussian. Specifically, we analyze the performance of a linear classifier trained using gradient descent on data drawn from generalized normal distributions, investigating the relationship between dimensionality \( p \), tail heaviness (controlled by the shape parameter \( \gamma \)), and the classifier's error rates.
            \subsubsection{Data Generation}
            \begin{table}[p]
\centering
\caption{Test Error Data with Mean and Standard Error of the Mean for Different $\gamma$ Values}\label{table:gamma_bo}
\scriptsize
\begin{tabular}{|c|c|c|c|}
\hline
Dimension (p) & $\gamma$ & Mean Test Error & Test Error SEM \\ \hline
100  & 0.25  & 0.1288 & 0.0047 \\ 
     & 0.5   & 0.1047 & 0.0049 \\ 
     & 2     & 0.0823 & 0.0040 \\ \hline
200  & 0.25  & 0.0808 & 0.0035 \\ 
     & 0.5   & 0.0660 & 0.0020 \\ 
     & 2     & 0.0627 & 0.0021 \\ \hline
300  & 0.25  & 0.0652 & 0.0018 \\ 
     & 0.5   & 0.0562 & 0.0014 \\ 
     & 2     & 0.0531 & 0.0012 \\ \hline
400  & 0.25  & 0.0578 & 0.0015 \\ 
     & 0.5   & 0.0519 & 0.0012 \\ 
     & 2     & 0.0531 & 0.0009 \\ \hline
500  & 0.25  & 0.0556 & 0.0013 \\ 
     & 0.5   & 0.0510 & 0.0009 \\ 
     & 2     & 0.0514 & 0.0008 \\ \hline
600  & 0.25  & 0.0525 & 0.0010 \\ 
     & 0.5   & 0.0493 & 0.0009 \\ 
     & 2     & 0.0504 & 0.0011 \\ \hline
700  & 0.25  & 0.0513 & 0.0007 \\ 
     & 0.5   & 0.0483 & 0.0009 \\ 
     & 2     & 0.0515 & 0.0012 \\ \hline
800  & 0.25  & 0.0518 & 0.0011 \\ 
     & 0.5   & 0.0495 & 0.0011 \\ 
     & 2     & 0.0512 & 0.0012 \\ \hline
900  & 0.25  & 0.0502 & 0.0009 \\ 
     & 0.5   & 0.0503 & 0.0010 \\ 
     & 2     & 0.0500 & 0.0009 \\ \hline
1000 & 0.25  & 0.0501 & 0.0010 \\ 
     & 0.5   & 0.0508 & 0.0009 \\ 
     & 2     & 0.0508 & 0.0009 \\ \hline
1100 & 0.25  & 0.0526 & 0.0011 \\ 
     & 0.5   & 0.0511 & 0.0009 \\ 
     & 2     & 0.0512 & 0.0011 \\ \hline
1200 & 0.25  & 0.0518 & 0.0009 \\ 
     & 0.5   & 0.0499 & 0.0007 \\ 
     & 2     & 0.0507 & 0.0010 \\ \hline
1300 & 0.25  & 0.0499 & 0.0010 \\ 
     & 0.5   & 0.0498 & 0.0010 \\ 
     & 2     & 0.0498 & 0.0008 \\ \hline
1400 & 0.25  & 0.0492 & 0.0011 \\ 
     & 0.5   & 0.0499 & 0.0008 \\ 
     & 2     & 0.0492 & 0.0009 \\ \hline
1500 & 0.25  & 0.0502 & 0.0010 \\ 
     & 0.5   & 0.0500 & 0.0010 \\ 
     & 2     & 0.0497 & 0.0010 \\ \hline
\end{tabular}
\end{table}
                The data was generated under the heavy-tailed setting, as described in Section \ref{section:heavy_tailed_setting}. The specific configuration is as follows:
                \begin{itemize}
                    \item \( p \) : We varied the number of features \( p \) from 100 to 1500 in increments of 100, to study the effect of increasing dimensionality on the model's performance.
                    \item \( n_{\text{train}} \) and \( n_{\text{test}} \) : For the training data, we used \( n = 200 \) samples, while the test data consisted of \( n_{\text{test}} = 1000 \) samples.
                    \item \( P_\mathrm{clust} \): Each component of $P_\mathrm{clust}$ is independently and identically distributed according to a generalized normal distribution, with specified location, scale, and shape parameters.
                    \begin{itemize}
                        \item The location parameter is $\mathbf{0}$.
                        \item The shape parameters \( \gamma \) are \( 0.25 \), \( 0.5 \), and \( 2 \)
                        \item The scale parameter \( \sigma \) is adjusted for each \( \gamma \) such that the variance is fixed at $1$.
                    \end{itemize} 
                    \item \( \mu \) : The mean vector \( \mu \) was set as \( \mu = \mathbf{1} \), meaning that all features had a common shift.
                    \item \( U \) : We applied an orthogonal transformation to the samples using a matrix \( U \), which was obtained from the QR decomposition of a randomly generated matrix \( A \). Each element of \( A \) was drawn from a standard normal distribution. The orthogonal matrix \( U \) is the result of the decomposition:
                    \[
                    A = UR
                    \]
                    where \( R \) is an upper triangular matrix.
                    \item \( \eta  \) : For each sample, we generated a label \( y \in \{-1, 1\} \) by multiplying a random scalar by a noise factor \( \eta \), where \( \eta = 0.05 \) in all experiments. 
                \end{itemize}
                
        \subsubsection{Model training}
            We used the maximum margin classifier, as described in Section \ref{section:maxmum_margin_algorithm}. The model was trained for $100000$ epochs to ensure convergence. The learning rate was set to $\beta=0.001$. Each experiment was repeated $50$ times, and the results were averaged. To ensure robustness, $95$\% confidence intervals were calculated based on the standard error of the mean.
        \subsection{Results}
            From Figure \ref{fig:gammma205025} and \ref{table:gamma_bo}, we can observe that the training error remains near zero across all dimensions, while the test error initially decreases and then stabilizes around the noise level as the dimension increases. This indicates that benign overfitting can occur even with distributions that have heavier tails than sub-gaussian distributions.
    \section{Experimental code and computing infrastructure}\label{appendix:infrastructure}
    
    The experimental code can be obtained from the anonymous URL on OSF:\url{https://osf.io/g6n9u/?view_only=f37a41efbee4421e8aa877f48c5879b4}
    
    The experiments were conducted in the following infrastructure:
    \begin{itemize}
        \item \textbf{GPU Type}: NVIDIA GeForce RTX 4090
        \item \textbf{Number of GPUs}: Single GPU
        \item \textbf{CPU Specifications}: 13th Gen Intel(R) Core(TM) i9-13900KF   3.00 GHz
        \item \textbf{Memory}: 32.0 GB 
        \item \textbf{Operating System}:  Windows 10 Home 23H2
        \item \textbf{Frameworks and Libraries}: The experiments for other figures were run using PyTorch, NumPy, SciPy, Matplotlib, Seaborn and Pandas on this infrastructure.
    \end{itemize}
\end{document}